\documentclass{article}

\usepackage[numbers]{natbib}
\usepackage[final]{configuration/neurips_2024}

\usepackage[utf8]{inputenc} %
\usepackage[T1]{fontenc}    %
\usepackage{hyperref}       %
\usepackage{url}            %
\usepackage{booktabs}       %
\usepackage{amsfonts}       %
\usepackage{nicefrac}       %
\usepackage{microtype}      %
\usepackage{xcolor}         %

\usepackage{amsmath,amssymb,amsthm}

\providecommand{\abs}[1]{\left\lvert#1\right\rvert}

\providecommand{\R}{\mathbb{R}} %

\providecommand{\E}{{\mathbb E}}
\providecommand{\E}[1]{{\mathbb E}\left.#1\right. }        %

\DeclareMathOperator*{\argmin}{arg\,min\,}

\providecommand{\1}{\mathbf{1}}

\providecommand{\bb}{\mathbf{b}}

\providecommand{\tt}{\mathbf{t}}

\providecommand{\xx}{\mathbf{x}}
\providecommand{\yy}{\mathbf{y}}
\providecommand{\zz}{\mathbf{z}}

\providecommand{\mI}{\mathbf{I}}

\providecommand{\mW}{\mathbf{W}}

\providecommand{\mY}{\mathbf{Y}}

\providecommand{\mTheta}{\boldsymbol{\Theta}}

\providecommand{\mphi}{\boldsymbol{\phi}}

\providecommand{\mtheta}{\boldsymbol{\theta}}

\providecommand{\mpsi}{\boldsymbol{\psi}}

\providecommand{\cL}{\mathcal{L}}

\providecommand{\cN}{\mathcal{N}}
\providecommand{\cO}{\mathcal{O}}

\providecommand{\cX}{\mathcal{X}}
\providecommand{\cY}{\mathcal{Y}}

\newenvironment{talign*}
{\csname align*\endcsname}
{\endalign}

\usepackage[utf8]{inputenc}         %
\usepackage[T1]{fontenc}            %
\usepackage{url}                    %
\usepackage{booktabs}               %
\usepackage{amsfonts}               %
\usepackage{nicefrac}               %
\usepackage{microtype}              %
\usepackage{xcolor}                 %
\usepackage{algorithm}
\usepackage{algorithmic}
\usepackage{graphicx}
\usepackage{subcaption}
\usepackage[flushleft]{threeparttable}
\usepackage{float}
\usepackage{multirow}
\usepackage{xspace}
\usepackage{natbib}
\usepackage{enumitem}
\usepackage[font=small]{caption}
\usepackage{autobreak}
\usepackage{sidecap}
\usepackage{wrapfig}
\usepackage{bbding}
\usepackage[toc, page, header]{appendix}
\usepackage{tikz}
\usepackage{xcolor}
\usepackage{pifont}
\usepackage{mdframed}

\hypersetup{
  colorlinks=true,
  linkcolor=blue,
  citecolor=blue,
  urlcolor=blue
}

\definecolor{coral}{RGB}{255,127,80}
\definecolor{darkgreen}{RGB}{0,100,0}
\definecolor{darkyellow}{RGB}{204,153,0}
\definecolor{salmon}{RGB}{250,128,114}

\newcommand{\thmref}[1]{\hyperref[#1]{\transparentred{Theorem~\ref*{#1}}}}

\newcommand{\defref}[1]{\hyperref[#1]{\transparentgray{Definition~\ref*{#1}}}}

\newcommand{\propref}[1]{\hyperref[#1]{\transparentblue{Proposition~\ref*{#1}}}}

\newcommand{\assumpref}[1]{\hyperref[#1]{\transparentgreen{Assumption~\ref*{#1}}}}

\newcommand{\remarkref}[1]{\hyperref[#1]{\transparentyellow{Remark~\ref*{#1}}}}

\newcommand{\hypref}[1]{\hyperref[#1]{\transparentpurple{Hypothesis~\ref*{#1}}}}

\newcommand{\conjref}[1]{\hyperref[#1]{\transparentorange{Conjecture~\ref*{#1}}}}

\newcommand{\lemref}[1]{\hyperref[#1]{\transparentcyan{Lemma~\ref*{#1}}}}

\newcommand{\corref}[1]{\hyperref[#1]{\transparentmagenta{Corollary~\ref*{#1}}}}

\newcommand{\exampleref}[1]{\hyperref[#1]{\transparentlime{Example~\ref*{#1}}}}

\newcommand{\noteref}[1]{\hyperref[#1]{\transparentpink{Notation~\ref*{#1}}}}

\newcommand{\claimref}[1]{\hyperref[#1]{\transparentviolet{Claim~\ref*{#1}}}}

\newcommand{\probref}[1]{\hyperref[#1]{\transparentsalmon{Problem~\ref*{#1}}}}

\newcommand{\obsref}[1]{\hyperref[#1]{\transparentlavender{Observation~\ref*{#1}}}}

\newcommand{\figref}[1]{\hyperref[#1]{\transparentteal{Figure~\ref*{#1}}}}

\newcommand{\tabref}[1]{\hyperref[#1]{\transparentdarkgreen{Table~\ref*{#1}}}}

\newcommand{\secref}[1]{\hyperref[#1]{\transparentdarkyellow{Section~\ref*{#1}}}}

\newcommand{\appref}[1]{\hyperref[#1]{\transparentcoral{Appendix~\ref*{#1}}}}

\newtheoremstyle{custom}
{1pt} %
{1pt} %
{\itshape} %
{} %
{\bfseries} %
{} %
{ } %
{\thmname{#1} \thmnumber{#2} \thmnote{(#3)} . } %

\theoremstyle{custom}

\newtheorem{innerdefinition}{Definition}
\newtheorem{innerproposition}{Proposition}
\newtheorem{innerassumption}{Assumption}
\newtheorem{innerremark}{Remark}
\newtheorem{innertheorem}{Theorem}
\newtheorem{innerhypothesis}{Hypothesis}
\newtheorem{innerconjecture}{Conjecture}
\newtheorem{innerlemma}{Lemma}
\newtheorem{innercorollary}{Corollary}
\newtheorem{innerexample}{Example}
\newtheorem{innernotation}{Notation}
\newtheorem{innerclaim}{Claim}
\newtheorem{innerproblem}{Problem}

\newtheorem{innerobservation}{Observation}

\newmdenv[
  backgroundcolor=gray!10,
  linecolor=gray!100,
  linewidth=0.8pt,
  skipabove=2pt,
  skipbelow=2pt,
  innertopmargin=10pt,
  innerbottommargin=5pt,
  innerleftmargin=5pt,
  innerrightmargin=5pt,
]{definitionframe}

\newmdenv[
  backgroundcolor=blue!10,
  linecolor=blue!100,
  linewidth=0.8pt,
  skipabove=2pt,
  skipbelow=2pt,
  innertopmargin=10pt,
  innerbottommargin=5pt,
  innerleftmargin=5pt,
  innerrightmargin=5pt,
]{propositionframe}

\newmdenv[
  backgroundcolor=green!10,
  linecolor=green!100,
  linewidth=0.8pt,
  skipabove=2pt,
  skipbelow=2pt,
  innertopmargin=10pt,
  innerbottommargin=5pt,
  innerleftmargin=5pt,
  innerrightmargin=5pt,
]{assumptionframe}

\newmdenv[
  backgroundcolor=yellow!10,
  linecolor=yellow!100,
  linewidth=0.8pt,
  skipabove=2pt,
  skipbelow=2pt,
  innertopmargin=10pt,
  innerbottommargin=5pt,
  innerleftmargin=5pt,
  innerrightmargin=5pt,
]{remarkframe}

\newmdenv[
  backgroundcolor=red!10,
  linecolor=red!100,
  linewidth=0.8pt,
  skipabove=2pt,
  skipbelow=2pt,
  innertopmargin=10pt,
  innerbottommargin=5pt,
  innerleftmargin=5pt,
  innerrightmargin=5pt,
]{theoremframe}

\newmdenv[
  backgroundcolor=purple!10,
  linecolor=purple!100,
  linewidth=0.8pt,
  skipabove=2pt,
  skipbelow=2pt,
  innertopmargin=10pt,
  innerbottommargin=5pt,
  innerleftmargin=5pt,
  innerrightmargin=5pt,
]{hypothesisframe}

\newmdenv[
  backgroundcolor=orange!10,
  linecolor=orange!100,
  linewidth=0.8pt,
  skipabove=2pt,
  skipbelow=2pt,
  innertopmargin=10pt,
  innerbottommargin=5pt,
  innerleftmargin=5pt,
  innerrightmargin=5pt,
]{conjectureframe}

\newmdenv[
  backgroundcolor=cyan!10,
  linecolor=cyan!100,
  linewidth=0.8pt,
  skipabove=2pt,
  skipbelow=2pt,
  innertopmargin=10pt,
  innerbottommargin=5pt,
  innerleftmargin=5pt,
  innerrightmargin=5pt,
]{lemmaframe}

\newmdenv[
  backgroundcolor=magenta!10,
  linecolor=magenta!100,
  linewidth=0.8pt,
  skipabove=2pt,
  skipbelow=2pt,
  innertopmargin=10pt,
  innerbottommargin=5pt,
  innerleftmargin=5pt,
  innerrightmargin=5pt,
]{corollaryframe}

\newmdenv[
  backgroundcolor=lime!10,
  linecolor=lime!100,
  linewidth=0.8pt,
  skipabove=2pt,
  skipbelow=2pt,
  innertopmargin=10pt,
  innerbottommargin=5pt,
  innerleftmargin=5pt,
  innerrightmargin=5pt,
]{exampleframe}

\newmdenv[
  backgroundcolor=pink!10,
  linecolor=pink!100,
  linewidth=0.8pt,
  skipabove=2pt,
  skipbelow=2pt,
  innertopmargin=10pt,
  innerbottommargin=5pt,
  innerleftmargin=5pt,
  innerrightmargin=5pt,
]{notationframe}

\newmdenv[
  backgroundcolor=violet!10,
  linecolor=violet!100,
  linewidth=0.8pt,
  skipabove=2pt,
  skipbelow=2pt,
  innertopmargin=10pt,
  innerbottommargin=5pt,
  innerleftmargin=5pt,
  innerrightmargin=5pt,
]{claimframe}

\newmdenv[
  backgroundcolor=salmon!10,
  linecolor=salmon!100,
  linewidth=0.8pt,
  skipabove=2pt,
  skipbelow=2pt,
  innertopmargin=10pt,
  innerbottommargin=5pt,
  innerleftmargin=5pt,
  innerrightmargin=5pt,
]{problemframe}

\newmdenv[
  backgroundcolor=lavender!10,
  linecolor=lavender!100,
  linewidth=0.8pt,
  skipabove=2pt,
  skipbelow=2pt,
  innertopmargin=10pt,
  innerbottommargin=5pt,
  innerleftmargin=5pt,
  innerrightmargin=5pt,
]{observationframe}

\newenvironment{definition}
{\begin{definitionframe}\begin{innerdefinition}}
      {\end{innerdefinition}\end{definitionframe}}

\newenvironment{proposition}
{\begin{propositionframe}\begin{innerproposition}}
      {\end{innerproposition}\end{propositionframe}}

\newenvironment{assumption}
{\begin{assumptionframe}\begin{innerassumption}}
      {\end{innerassumption}\end{assumptionframe}}

\newenvironment{remark}
{\begin{remarkframe}\begin{innerremark}}
      {\end{innerremark}\end{remarkframe}}

\newenvironment{theorem}
{\begin{theoremframe}\begin{innertheorem}}
      {\end{innertheorem}\end{theoremframe}}

\newenvironment{lemma}
{\begin{lemmaframe}\begin{innerlemma}}
      {\end{innerlemma}\end{lemmaframe}}

\newenvironment{problem}
{\begin{problemframe}\begin{innerproblem}}
      {\end{innerproblem}\end{problemframe}}

\newcommand{\bluehighlight}[1]{
  \begin{tikzpicture}[baseline=-0.5ex]
    \node[fill=blue!50, fill opacity=0.5, text opacity=1, rounded corners, inner sep=0mm, minimum height=1.em] (X)
    {#1};
  \end{tikzpicture}
}

\newcommand{\redhighlight}[1]{
  \begin{tikzpicture}[baseline=-0.5ex]
    \node[fill=red!50, fill opacity=0.5, text opacity=1, rounded corners, inner sep=0mm, minimum height=1.em] (X)
    {#1};
  \end{tikzpicture}
}

\newcommand{\greenhighlight}[1]{
  \begin{tikzpicture}[baseline=-0.5ex]
    \node[fill=green!50, fill opacity=0.5, text opacity=1, rounded corners, inner sep=0mm, minimum height=1.em] (X)
    {#1};
  \end{tikzpicture}
}

\newcommand{\algopt}{\textsc{ReLA}\xspace}

\newcommand{\IPC}{\texttt{IPC}\xspace}
\newcommand{\SSL}{\texttt{SSL}\xspace}

\newcommand{\Tool}{(\raisebox{-0.5ex}{\includegraphics[height=1em]{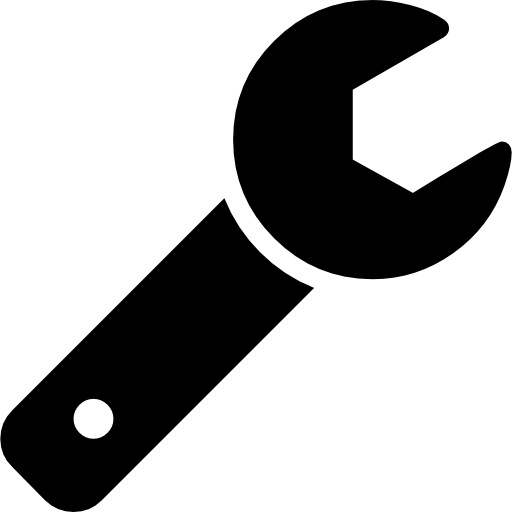}})\xspace}
\newcommand{\Runer}{(\raisebox{-0.5ex}{\includegraphics[height=1em]{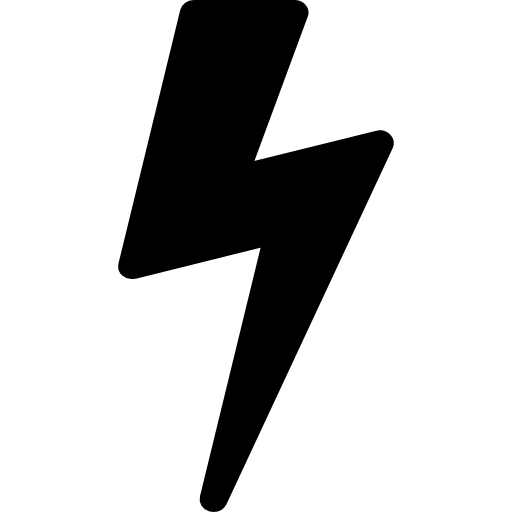}})\xspace}

\title{Efficiency for Free: Ideal Data Are \\ Transportable Representations}

\author{Peng Sun$^{1,2}$ \quad Yi Jiang$^{1}$ \quad Tao Lin$^{2,}\thanks{Corresponding author.} $ \\
$^{1}$Zhejiang University \quad $^{2}$Westlake University \\
{\tt\small sunpeng@westlake.edu.cn, yi\_jiang@zju.edu.cn, lintao@westlake.edu.cn} \\
}

\begin{document}
\maketitle

\begingroup

\begin{abstract}
    Data, the seminal opportunity and challenge in modern machine learning, currently constrains the scalability of representation learning and impedes the pace of model evolution.
    In this work, we investigate the efficiency properties of data from both optimization and generalization perspectives.
    Our theoretical and empirical analysis reveals an unexpected finding: for a given task, utilizing a publicly available, task- and architecture-agnostic model (referred to as the `prior model' in this paper) can effectively produce efficient data.
    Building on this insight, we propose the Representation Learning Accelerator (\algopt), which promotes the formation and utilization of efficient data, thereby accelerating representation learning.
    Utilizing a ResNet-18 pre-trained on CIFAR-10 as a prior model to inform ResNet-50 training on ImageNet-1K reduces computational costs by $50\%$ while maintaining the same accuracy as the model trained with the original BYOL, which requires $100\%$ cost.
    Our code is available at: \url{https://github.com/LINs-lab/ReLA}.
\end{abstract}

\section{Introduction}
\label{sec:intro}

\begin{figure}[h]
    \centering
    \includegraphics[width=\linewidth]{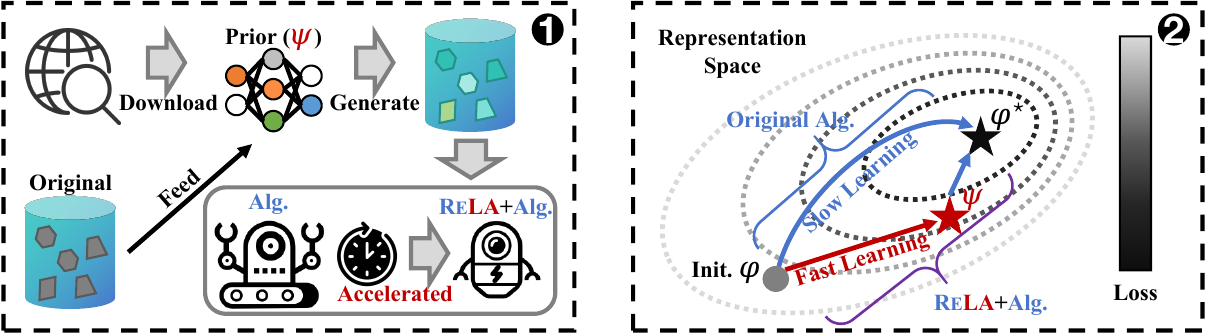}
    \caption{\small
        \textbf{Framework and Intuition of \algopt}:
        (1) \textit{Framework}: \algopt serves as both a data optimizer and an auxiliary accelerator. Initially, it operates as a data optimizer by leveraging an dataset and a pre-trained model (e.g., one sourced from online repositories) to generate an efficient dataset. Subsequently, \algopt functions as an auxiliary accelerator, enhancing existing (self-)supervised learning algorithms through the effective utilization of the efficient dataset, thereby promoting efficient representation learning.
        (2) \textit{Intuition}: The central concept of \algopt is to create an efficient-data-driven shortcut pathway within the learning process, enabling the initial model $\mphi$ to rapidly converge towards a `proximal representation $\mpsi$' of the target model $\mphi^\star$ during the early stages of training. This approach significantly accelerates the overall learning process.
        \label{fig:framework}
    }
\end{figure}

The available of massive datasets~\citep{deng2009imagenet,radford2021learning} and recent advances in parallel data processing~\citep{huang2019gpipe,lin2018don} have facilitated the rapid evolution of large deep models, such as GPT-4~\citep{achiam2023gpt} and LVM~\citep{bai2023sequential}.
These models excel in numerous learning tasks, attributable to their impressive representation capabilities.
However, the emergence of vast amounts of data within the modern deep learning paradigm raises two fundamental challenges:
\textit{\textup{(i)} the demand for human annotations of huge datasets consumes significant social resources~\citep{mehta2024catlip,deng2009imagenet,liu2023gpt}; \textup{(ii)} training large models with increasing data and model capacity suffers from intensive computational burden~\citep{brown2020language,cheng2017survey,strubell2019energy}.}

The community has made considerable efforts to enhance learning efficiency.
Self-supervised learning methods~\citep{chen2020simple,zbontar2021barlow,jean-bastien2020bootstrap,caron2021emerging,he2020momentum,chen2021exploring,caron2020unsupervised,bardes2021vicreg}, with their superior representation learning devoid of human annotations via the self-learning paradigm, attempt to tackle the challenge (i).
Concurrently, research has been conducted to mitigate data efficiency issues in challenge (ii): dataset distillation approaches~\citep{wang2018dataset,sun2024diversity,cazenavette2022dataset,zhao2020dataset,shao2023generalized,shao2024elucidating} have successfully synthesized a small distilled dataset, on which models trained on this compact dataset can akin to one trained on the full dataset.

However, challenges (i) and (ii) persist and yet are far from being solved~\citep{liu2023gpt,mehta2024catlip,brown2020language,cheng2017survey,strubell2019energy}, particularly the intervention of these two learning paradigms.
In this paper, we identify two issues: (a) inefficiency in the self-supervised learning procedure compared to conventional supervised learning arises due to sub-optimal self-generating targets~\citep{jean-bastien2020bootstrap, wang2021solving}; (b) although training on the distilled dataset is efficient and effective, the distillation process of optimization-based approaches~\citep{cazenavette2022dataset, zhao2020dataset, lee2024self} is computationally demanding~\citep{cui2023scaling, sun2024diversity}, often surpassing the computational load of training on the full dataset. This limitation restricts its potential to accelerate representation learning.
To tackle these challenges, we propose a novel open problem in the domain of representation learning:
\begin{problem}[Accelerating Representation Learning through Free Models]
\label{prob:efficient_rep}
According to the No Free Lunch theorem~\citep{wolpert1997no}, it is evident that accelerating the learning process without incorporating prior knowledge is inherently challenging. Fortunately, numerous publicly available pre-trained models can be accessed online, offering a form of free prior knowledge. Despite this, effectively utilizing these models poses several implicit challenges, as these models may not be directly relevant to our target learning task, or they may not be sufficiently well-trained. This leads to a question:
\vspace{-1em}
\begin{center}
    \emph{How can we leverage task- and architecture-agnostic publicly available models to accelerate representation learning for a specific task?}
\end{center}
\end{problem}
To address \probref{prob:efficient_rep}, we propose \algopt to utilize a freely available model downloaded from the internet to generate efficient data for training. This approach aims to accelerate training during the initial stages by effectively leveraging these generated data, thereby establishing a rapid pathway for representation learning (see \figref{fig:framework}).
Specifically, we list our \textbf{five key contributions below} as the first step toward bridging representation learning with data-efficient learning:

(a) \textit{Revealing beneficial/detrimental data properties for efficient/inefficient (self-)supervised learning (see \secref{sec:case_study}).}
We present a comprehensive analysis of linear models, demonstrating that data properties significantly influence the learning process by impacting the optimization of model training.
Our findings reveal that modifications to the data can markedly enhance or impair this optimization.
Additionally, we indicate that optimal training necessitates specific data properties---perfect bijective mappings between the samples and targets within a dataset.

(b) \textit{Identifying the inefficiency problems of (self-)supervised learning from a data-centric perspective (see \secref{sec:und_data}).}
Specifically, we identify several factors contributing to the inefficiencies in (self-)supervised learning over real-world data. For instance, prevalent data augmentation techniques in modern deep learning can introduce a `noisy mapping' issue, which may exacerbate the negative effects associated with inefficient data.

(c) \textit{Generalization bound for models trained on optimized efficient data (see \secref{sec:gen_bound}).}
Although the efficiency properties of data do not inherently ensure the generalization of the trained model, i.e., efficient data alone cannot guarantee generalization ability, we present a generalization bound to analyze models trained on such data.

(d) \textit{A novel method \algopt to generate and exploit efficient data (see \secref{sec:method}).}
Leveraging our theoretical insights regarding the bounds of generalization and convergence rate, we introduce \algopt, a novel optimization-free method tailored to efficiently generate and effectively exploit efficient data for accelerating representation learning.

(e) \textit{An application of our \algopt: accelerating (self-)supervised learning (see \secref{sec:exp} and \appref{sec:super_rela}).}
Extensive experiments across four widely-used datasets, seven neural network architectures, eight self-supervised learning algorithms demonstrate the effectiveness and efficiency of \algopt.
Training models with \algopt significantly outperforms training on the original dataset with the same budget, and even exceeds the performance of training on higher budget.

\section{Related Work}
This section integrates two distinct deep learning areas: (a) techniques to condense datasets while preserving efficacy; (b) self-supervised learning methods that enable training models on unlabeled data.
\looseness=-1

\subsection{Dataset Distillation: Efficient yet Effective Learning Using Fewer Data}
The objective of dataset distillation is to create a significantly smaller dataset that retains competitive performance relative to the original dataset.

\textbf{Refining proxy metrics between original and distilled datasets.}
Traditional approaches involve replicating the behaviors of the original dataset within the distilled one.
These methods aim to minimize discrepancies between surrogate neural network models trained on both synthetic and original datasets.
Key metrics for this process include matching gradients~\citep{zhao2020dataset,kim2022dataset,zhang2023accelerating,liu2023dream}, features~\citep{wang2022cafe}, distributions~\citep{zhao2023dataset,zhao2023improved}, and training trajectories~\citep{cazenavette2022dataset,cui2022dc,du2023minimizing,cui2023scaling,yu2023dataset,guo2023towards}.
However, these methods suffer from substantial computational overhead due to the incessant calculation of discrepancies between the distilled and original datasets.
The optimization of the distilled dataset involves minimizing these discrepancies, necessitating multiple iterations until convergence.
As a result, scaling to large datasets, such as ImageNet~\citep{deng2009imagenet}, becomes challenging.
\looseness=-1

\textbf{Extracting key information from original into distilled datasets.}
A promising strategy involves identifying metrics that capture essential dataset information. These methods efficiently scale to large datasets like ImageNet-1K using robust backbones without necessitating multiple comparisons between original and distilled datasets. For instance, SRe$^2$L~\citep{yin2023squeeze} condenses the entire dataset into a model, such as pre-trained neural networks like ResNet-18~\citep{he2016deep}, and then extracts the knowledge from these models into images and targets, forming a distilled dataset.
Recently, RDED~\citep{sun2024diversity} posits that images accurately recognized by strong observers, such as humans and pre-trained models, are more critical for learning.

\textbf{Summary.}
We make the following observations regarding scalable dataset distillation methods utilizing various metrics:
(a) a few of these metrics have proven effective for data distillation at the scale of ImageNet.
(b) all these metrics require human-labeled data;
(c) there is currently no established theory elucidating the conditions under which data distillation is feasible;
(d) despite their success, the theory behind training neural networks with reduced data is underexplored.

\subsection{Self-supervised Learning: Representation Learning Using Unlabeled Data}
\label{sec:related_ssl}

The primary objective of self-supervised learning is to extract robust representations without relying on human-labeled data. These representations should be competitive with those derived from supervised learning and deliver superior performance across multiple tasks.

\textbf{Contrasting self-generated positive and negative Samples.}
Contrastive learning-based methods implicitly assign a one-hot label to each sample and its augmented versions to facilitate discrimination.
Since InfoNCE~\citep{oord2018representation}, various works~\citep{he2020momentum, chen2020simple, chen2020improved, cao2020parametric, kalantidis2020hard, chen2021empirical, zhu2021improving, li2020prototypical, caron2020unsupervised, hu2021adco} have advanced contrastive learning.
MoCo~\citep{he2020momentum, chen2020improved, chen2021empirical} uses a momentum encoder for consistent negatives, effective for both CNNs and Vision Transformers.
SimCLR~\citep{chen2020simple} employs strong augmentations and a nonlinear projection head.
Other methods integrate instance classification~\citep{cao2020parametric}, data augmentation~\citep{kalantidis2020hard, zhu2021improving}, clustering~\citep{li2020prototypical, caron2020unsupervised}, and adversarial training~\citep{hu2021adco}. These enhance alignment and uniformity of representations on the hypersphere~\citep{wang2020understanding}.

\textbf{Asymmetric model-generating representations as targets.}
Asymmetric network methods achieve self-supervised learning with only positive pairs~\citep{jean-bastien2020bootstrap,richemond2020byol,chen2021exploring}, avoiding representational collapse through asymmetric architectures. BYOL~\citep{jean-bastien2020bootstrap} uses a predictor network and a momentum encoder. Richemond et al.~\citep{richemond2020byol} show BYOL performs well without batch statistics. SimSiam~\citep{chen2021exploring} halts the gradient to the target branch, mimicking the momentum encoder's effect. DINO~\citep{caron2021emerging} employs a self-distillation loss. UniGrad \citep{tao2022exploring} integrates asymmetric networks with contrastive learning methods within a theoretically unified framework.

\section{Revealing Critical Properties of Efficient Learning over Data}
\label{sec:motivation}
We begin by presenting formal definitions of supervised learning over a (efficient) dataset.
\begin{definition}[Supervised learning over data]
    \label{def:goal}
    For a dataset \( D=( D_X, D_Y) = \{(\xx_i, \yy_i)\}_{i=1}^{|D|} \), drawn from the data distribution \( (X, Y) \) in space $(\cX, \cY)$, the goal of a model learning algorithm is to identify an optimal model \( \mphi^{\star} \) that minimizes the expected error defined by:
    \begin{equation}
        \E_{(\xx, \yy) \sim (X, Y)} \left[ \ell(\mphi^{\star}(\xx), \yy) \right] \leq \epsilon \,,
        \label{eq:gen_goal}
    \end{equation}
    where $\ell$ indicates the loss function and $\epsilon$ denotes a predetermined deviation.
    This is typically achieved through a parameterized model \( \mphi_{\mtheta} \), where \( \mtheta \) denotes the model parameter within the parameter space \( \mTheta \).
    The optimal parameter \( \mtheta^{D} \) is determined by training the model to minimize the empirical loss over the dataset:
    \begin{equation}
        \label{eq:learn_tar}
        \textstyle
        \mtheta^{D} := \argmin_{\mtheta \in \mTheta} \left\{ \cL(\mphi_{\mtheta}; D; \ell) \right\} := \argmin_{\mtheta \in \mTheta} \left\{ \sum_{i=1}^{|D|} \ell(\mphi_{\mtheta}(\xx_i), \yy_i) \right\} \,.
    \end{equation}
    The training process leverages an optimization algorithm such as stochastic gradient descent~\citep{Robbins1951ASA,kingma2014adam}.
\end{definition}

\begin{definition}[Data-efficient Learning]
    \label{def:dd_goal}
    Data-efficient learning seeks to derive an optimized/efficient dataset, denoted as \(S = (S_X, S_Y) = \{ (\xx_j, \yy_j) \}_{j=1}^{\abs{S}}\), from the original dataset \(D\). The objective is to enable models \(\mphi_{\mtheta^{S}}\) trained on \(S\) to achieve the desired generalization performance, as defined in \eqref{eq:gen_goal}, with fewer training steps and a reduced computational budget compared to training on the original dataset \(D\).
\end{definition}

\subsection{Unifying (Self-)Supervised Learning from a Data-Centric Perspective}
\label{sec:effi_chall}
To ease the understanding and our methodology design in~\secref{sec:gen_bound}, we unify both conventional supervised learning and self-supervised learning as learning to map samples in $D_X$ to targets in $D_Y$: this view forms `supervised learning' from a data-centric perspective.
Specifically, these two learning paradigms involve generating targets $D_Y = \{\yy \mid \yy = \mpsi(\xx) \; \textup{s.t.} \; \xx \sim D_X\}$ and minimizing the empirical loss \eqref{eq:learn_tar}.
The only difference lies in the target generation models (or simply labelers) $\mpsi$:
\begin{enumerate}[label=(\alph*), nosep, leftmargin=16pt]
    \item \emph{Conventional supervised learning}, referred to as human-supervised learning, generates targets via human annotation.
          Note that the targets are stored and used statically throughout the training.
          \looseness=-1
    \item \emph{Self-supervised learning} (also see Footnote~\ref{ft:why_boyl}), e.g., BYOL~\citep{jean-bastien2020bootstrap} utilizes an Exponential Moving Average (EMA) version of the learning model $\mphi_{\mtheta}$ to generate targets $\yy = \textup{EMA}[\mphi_{\mtheta}](\xx)$.
          Note that the targets are dynamically changing during training as the model $\mphi_{\mtheta}$ keeps evolving.
\end{enumerate}
This unified perspective allows us to jointly examine and address the inefficient training issue of {(self-)supervised learning} from a data-centric perspective, in which in~\secref{sec:case_study} we first study the impact of samples $D_X$ and targets $D_Y$ on the model training process and then investigate whether and how a distilled dataset $S=(S_X, S_Y)$ can facilitate this process.

\subsection{Empirical and Theoretical Investigation of Data-Centric Efficient Learning}
\label{sec:case_study}
To elucidate the ideal data properties of training on a dataset \( D \), we examine the simple task over a bimodal Gaussian mixture distribution as a case study. We begin by defining the problem.
\begin{definition}[Bimodal Gaussian mixture distribution]
    \label{def:mix_gaus}
    Given two Gaussian distributions \(\cN_0(\mu_1, \Sigma^2 \mI)\) and \(\cN_1(\mu_2, \Sigma^2 \mI)\), where \(\mu_1\) and \(\mu_2\) are the means and \(\Sigma^2\) is the variance (here we set $\mu_1=1$, $\mu_2=2$ and $\Sigma=0.5$).
    We define a bimodal mixture data $G = (G_X, G_Y)$ as:
    \begin{equation}
        G := \{(\xx, y) \mid \xx = (1 - y) \cdot \xx_0 + y \cdot \xx_1\} \;\; \textup{s.t.} \;\; y \sim \textup{Bernoulli}(p=0.5), \xx_0 \sim \cN_0, \xx_1 \sim \cN_1 \,.
    \end{equation}
    Moreover, we define a corresponding binary classification neural network model as:
    \begin{equation}
        \textstyle
        f_{\mtheta}(\xx) := \sigma \left( \mtheta^{[1]} \cdot \textup{ReLU}\left(\mtheta^{[2]}\xx + \mtheta^{[3]}\right) + \mtheta^{[4]} \right) \,,
    \end{equation}
    where
    \( \sigma(z) = \frac{1}{1 + e^{-z}} \) is the sigmoid activation function;
    \(\textup{ReLU}(z) = \max(0, z)\) is the activation function for the hidden layer, which provides non-linearity to the model;
    \(\mtheta^{[2]}\) and \(\mtheta^{[3]}\) are the weights and biases of the hidden layer;
    \(\mtheta^{[1]}\) and \(\mtheta^{[4]}\) are the weights and biases of the output layer.
\end{definition}

Modern representation learning fundamentally relies on optimization (see \defref{def:goal}).
We show that modifications to the data can influence the convergence rate of the optimization process, thereby impacting the overall representation learning procedure.
Furthermore, we try to uncover several key properties of data efficiency through our theoretical analysis of the case study.
In the following, we denote the modified distribution by $G^\prime=(G^\prime_X,G^\prime_Y)$ and examine the altered samples $G^\prime_X$  and corresponding targets $G^\prime_Y$ independently.

\paragraph{Investigating the properties of modified samples.}
\label{sec:inves_x}
The modification process here only rescales the variance of the original sample distribution $G_X$ defined in~\defref{def:mix_gaus} with new $\Sigma$ (rather than the default $0.5$), while let $G^\prime_Y := G_Y$;
see explanations in~\appref{sec:analy_rescale_samples}.
Therefore, we examine the distilled samples \(G^\prime_{X}\) by setting the variable \(\Sigma\) within the interval $(0,1)$.

Results in~\figref{fig:inves_samples} demonstrate that the distilled samples $G^\prime_{X}$ with smaller variance $\Sigma$ achieve faster convergence and better performance compared to that of $G$.
To elucidate the underlying mechanism, we provide a rigorous theoretical analysis in~\appref{sec:proof_sample_rate}, culminating in~\thmref{thm:sample_rate}.

\begin{theorem}[Convergence rate of learning on efficient samples]
    \label{thm:sample_rate}
    For the classification task stated in~\defref{def:mix_gaus}, the convergence rate for the model $f_{\mtheta}$ trained $t$ after steps over distilled data $G^\prime$ is:
    \begin{equation}
        \textstyle
        \E_{\mtheta_t}\left[{ \cL(f_{\mtheta_t};G^\prime;\ell) - \cL(f_{\mtheta^\star};G^\prime;\ell) }\right] \leq \tilde{\cO}(\Sigma^2) \,,
    \end{equation}
    where $\ell$ denotes the \textup{MSE} loss, i.e., $\ell(\hat{y},y) := \| \hat{y} - y \|^2$, and $f_{\mtheta^\star}$ indicates the optimal model, $\tilde{\cO}$ signifies the asymptotic complexity.
    \emph{Modified samples characterized by a smaller value of $\Sigma$ facilitate faster convergence.}
\end{theorem}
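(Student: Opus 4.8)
The plan is to treat the MSE objective $\cL(f_{\mtheta};G^\prime;\ell)$ as a smooth stochastic optimization problem and to show that the rescaled variance $\Sigma^2$ is exactly what controls the only $\Sigma$-dependent term in a standard SGD convergence estimate. First I would reparametrize each sample drawn from the modified distribution as $\xx = \mmu_y + \Sigma\,\mxi$, where $\mmu_y$ is the center of the class-$y$ Gaussian and $\mxi \sim \cN(\0,\mI)$, so that all data-induced randomness enters through the $\Sigma\mxi$ perturbation and is manifestly scaled by $\Sigma$. A first-order Taylor expansion of $f_{\mtheta}$ about the centers, followed by taking expectation over $\mxi$, then yields $\E_{\mxi}\!\left[\ell(f_{\mtheta}(\xx),y)\right] = \left(f_{\mtheta}(\mmu_y)-y\right)^2 + \Sigma^2\,\norm{\nabla_{\xx} f_{\mtheta}(\mmu_y)}^2 + \cO(\Sigma^3)$. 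This cleanly separates a ``centers'' loss (the $\Sigma=0$ problem, which is precisely the perfect bijective mapping between the two points and their labels noted in our contributions) from a fluctuation term that is uniformly $\cO(\Sigma^2)$.

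Next I would establish the two regularity ingredients a quantitative rate requires. For smoothness, I would use that the sigmoid output and its derivative are bounded and that the ReLU composition is Lipschitz on the compact region where the activations concentrate, giving an $L$-smoothness constant for $\cL(\cdot;G^\prime;\ell)$. For curvature, I would establish a Polyak--\L ojasiewicz (or local strong-convexity) inequality with modulus $\mu$, exploiting that the two centers are separated ($\mu_1 \ne \mu_2$) so that the centers problem is non-degenerate. The core estimate is then a second-moment bound on the stochastic gradient: since the sampling noise is entirely the $\Sigma\mxi$ perturbation, a direct computation gives $\E\,\norm{\nabla \ell(f_{\mtheta}(\xx),y) - \nabla\cL}^2 \le \sigma_g^2$ with $\sigma_g^2 = \cO(\Sigma^2)$, up to a term proportional to the suboptimality that can be folded into the contraction factor.

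Plugging these into the textbook descent lemma for smooth, PL objectives gives $\E_{\mtheta_t}\!\left[\cL(f_{\mtheta_t};G^\prime;\ell) - \cL(f_{\mtheta^\star};G^\prime;\ell)\right] \le (1-\eta\mu)^t\,\Delta_0 + \tfrac{\eta L}{2\mu}\,\sigma_g^2$. Substituting $\sigma_g^2 = \cO(\Sigma^2)$ makes the steady-state floor $\tilde{\cO}(\Sigma^2)$; once $t$ is large enough for the geometric transient to fall below this floor---which is where the $\tilde{\cO}$ absorbs the logarithmic-in-$\Sigma$ step count together with the $L,\mu,\eta$ constants---the stated bound follows, and its monotone dependence on $\Sigma^2$ yields the ``smaller $\Sigma$ converges faster'' conclusion.

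I expect the curvature ingredient to be the main obstacle: the ReLU--sigmoid composition is non-convex and the sigmoid can saturate, so a global PL or strong-convexity constant need not exist. I would address this by localizing to a neighborhood of $\mtheta^\star$ (equivalently, the region where the activations are non-saturated), where the center separation $\mu_1 \ne \mu_2$ forces a strictly positive lower bound on the Hessian along the descent directions, and then verifying that the iterates remain in this region. Making this localization rigorous and controlling the trajectory is the delicate part, whereas extracting the $\Sigma^2$ scaling of the gradient variance is comparatively mechanical once the reparametrization $\xx = \mmu_y + \Sigma\mxi$ is in place.
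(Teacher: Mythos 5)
Your plan has a genuine gap at its load-bearing step, and it is precisely the step you defer to the end as ``delicate.'' For the model of Definition~3 (ReLU hidden layer, sigmoid output, MSE loss), the curvature ingredient cannot be obtained by localization: on separated or nearly separated data the population MSE of a sigmoid-output network has its infimum approached only as the output-layer parameters diverge ($\mtheta^{[1]} \to +\infty$, $\mtheta^{[4]} \to -\infty$ --- the paper's own appendix makes exactly this observation in its qualitative nonlinear analysis), so there is no finite $\mtheta^\star$ around which to define a neighborhood, let alone prove a local PL inequality or confine the iterates to it. The second pillar is also shaky: your claim $\sigma_g^2 = \cO(\Sigma^2)$ requires the class-sampling component of the gradient noise (the fluctuation from drawing a class-$0$ versus class-$1$ sample) to be ``folded into the contraction,'' which is an interpolation argument; sigmoid$+$MSE cannot interpolate the labels $\{0,1\}$ at any finite parameter, so at every finite point this component contributes a $\Sigma$-independent additive floor. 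Tellingly, the paper's explicit computation exhibits exactly this term: its steady-state floor is $\frac{\alpha^2\Gamma(3/\beta)}{\Gamma(1/\beta)} + \bigl(1 - \frac{\mu_2-\mu_1}{2}\bigr)^2$, i.e.\ variance \emph{plus} a constant, which is why the appendix only claims the bound is an \emph{increasing function of the variance} at fixed $t$, not a floor that vanishes as $\Sigma \to 0$. Your proposal would need the same softening even if the curvature issue were resolved.

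The paper avoids all of this by not proving the theorem for the nonlinear network at all. It proves a modified statement for a drastically reduced model: one dimension, labels in $\{\pm 1\}$, a single-parameter threshold classifier $f_\theta(x) = \mathrm{sign}(x+\theta)$, and the quadratic surrogate $\ell(v) = \tfrac{1}{2}(1-v)^2$, under which online SGD is an exactly linear recursion, $\theta_{t+1} - \theta^\star = (1-\eta)(\theta_t - \theta^\star) - \eta(x_t - y_t + \theta^\star)$ with zero-mean i.i.d.\ noise. Unrolling and taking second moments yields your ``geometric transient plus noise floor'' decomposition with no smoothness, convexity, or PL hypotheses whatsoever, and the excess risk is tied to parameter error by the elementary bound $\cL(f_\theta) - \cL(f_{\theta^\star}) \le C_1(\theta-\theta^\star)^2$; the actual two-layer network is discussed only qualitatively, plus a pointer to the feature-learning results of Li and Liang. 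So while your skeleton (transient $+$ $\Sigma$-scaled floor) matches the structure of what the paper establishes, executing it on the model as literally defined is an open research-level problem, not a chain of standard lemmas, and the two ingredients you treat as routine are the ones that fail.
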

\looseness=-1

\textbf{Investigating the properties of modified targets.}
\label{sec:inves_y}
On top of the property understanding for modified samples, we further investigate the potential of modified targets via $G^\prime$.
In detail,
for modified samples, we consider the most challenging (c.f.~\figref{fig:diff_sigma}) yet the common case, namely $G^\prime_{X}$ with $\Sigma=1$ (see explanations in~\appref{sec:analy_data_augment}).
For the corresponding modified targets $G^\prime_Y$, similar to the prior data-efficient methods~\citep{sun2024diversity,yin2023squeeze}, for any sample $\xx$ drawn from $G^\prime_{X}$, we refine its label by assigning $\hat{y} = \rho \cdot f_{\mtheta^\star}(\xx) + (1 - \rho) \cdot y$.
Here, $\rho$ denotes the relabeling intensity coefficient, and $f_{\mtheta^\star}$ represents a strong pre-trained model (simply, we utilize the model trained on the data in~\figref{fig:optimal_sample}).

\begin{theorem}[Convergence rate of learning on re-labeled data]
    \label{thm:label_rate}
    For the classification task as in~\defref{def:mix_gaus}, we have the convergence rate for the model $f_{\mtheta}$ trained after $t$ steps over modified data $G^\prime$:
    \begin{equation}
        \E_{\mtheta_t}\left[{ \cL(f_{\mtheta_t};G^\prime;\ell) - \cL(f_{\mtheta^\star};G^\prime;\ell) }\right] \leq \tilde{\cO}(1 - \rho) \,.
    \end{equation}
    Note that $\rho$ controls the upper bound of the convergence rate, indicating that \emph{using modified targets with a higher value of $\rho$ enables faster convergence.}
\end{theorem}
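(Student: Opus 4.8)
The plan is to reuse the optimization-analysis template behind \thmref{thm:sample_rate}, funnelling every $\rho$-dependence into the residual between the relabeled target and the prediction of the pre-trained model $f_{\mtheta^\star}$. First I would rewrite the modified target as $\hat{y} = f_{\mtheta^\star}(\xx) + r(\xx)$ with $r(\xx) := (1-\rho)\,(y - f_{\mtheta^\star}(\xx))$, so that the corruption injected into the labels is exactly linear in $(1-\rho)$ and uniformly bounded: since $y \in \{0,1\}$ and the sigmoid output obeys $f_{\mtheta^\star}(\xx) \in (0,1)$, we have $\|r(\xx)\| \le (1-\rho)$ pointwise. This simultaneously identifies $f_{\mtheta^\star}$ as the exact minimizer at $\rho = 1$ and quantifies the departure from that ideal regime, and it gives the reference value $\cL(f_{\mtheta^\star};G^\prime;\ell) = (1-\rho)^2\,\E[\|y - f_{\mtheta^\star}(\xx)\|^2]$.

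Second, I would substitute this decomposition into the MSE objective and subtract the reference loss. Writing $g_{\mtheta}(\xx) := f_{\mtheta}(\xx) - f_{\mtheta^\star}(\xx)$ and expanding the square, the $(1-\rho)^2$ contributions cancel and leave the clean identity
\[
\cL(f_{\mtheta};G^\prime;\ell) - \cL(f_{\mtheta^\star};G^\prime;\ell)
= \E[\|g_{\mtheta}(\xx)\|^2] - 2(1-\rho)\,\E[\,g_{\mtheta}(\xx)\,(y - f_{\mtheta^\star}(\xx))\,] \,.
\]
Thus the excess objective is governed by the model-discrepancy term $\E[\|g_{\mtheta}\|^2]$ together with a cross term already carrying an explicit factor $(1-\rho)$, so it suffices to show that the iterates drive $g_{\mtheta_t}$ to scale $(1-\rho)$.

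Third, I would track the (stochastic) gradient-descent iterates $\mtheta_t$ on $\cL(\,\cdot\,;G^\prime;\ell)$. Because $f_{\mtheta}$ is a bounded, smooth composition of an affine map, ReLU, and sigmoid, the MSE objective is $L$-smooth on the region traversed during training, and I would invoke a local strong-convexity / Polyak--{\L}ojasiewicz condition near the optimum --- the same device presumably used for \thmref{thm:sample_rate} --- to obtain a linear contraction of the optimization error, $\E_{\mtheta_t}[\cL(f_{\mtheta_t};G^\prime;\ell) - \cL(\mtheta^\star_{G^\prime})] \le \gamma^t C_0$ with $\gamma<1$. Two facts then inject the claimed rate: the minimizer $\mtheta^\star_{G^\prime}$ of the perturbed objective lies within $\cO(1-\rho)$ of $\mtheta^\star$ (stability of a non-degenerate minimizer under an $\cO(1-\rho)$ target perturbation), so $\E[\|g_{\mtheta^\star_{G^\prime}}\|^2] = \cO((1-\rho)^2)$; and under SGD the gradient-noise floor is controlled by $\E[\|r(\xx)\|^2] = \cO((1-\rho)^2)$. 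Combining with the cross term (bounded via Cauchy--Schwarz, also scaling linearly in $(1-\rho)$), using $(1-\rho)^2 \le (1-\rho)$ on $\rho\in[0,1]$, and absorbing the transient $\gamma^t C_0$ and the step-size/log factors into $\tilde{\cO}$, the excess loss collapses to $\tilde{\cO}(1-\rho)$ after sufficiently many steps; monotonicity in $\rho$ is immediate.

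The main obstacle will be rigorously justifying the curvature (local strong-convexity or PL) condition for the non-convex ReLU--sigmoid network --- this is what makes $\gamma$ and the minimizer-stability bound meaningful --- and I expect it to be handled by linearizing $f_{\mtheta}$ around $\mtheta^\star$ (a first-order / NTK-style expansion) or by restricting attention to the late-training regime where the loss surface is locally well-conditioned, reusing whatever mechanism underlies the proof of \thmref{thm:sample_rate}. Once that conditioning is granted, the target decomposition and the noise-floor estimate are routine, and the only surviving $\rho$-dependence is the promised $\tilde{\cO}(1-\rho)$.
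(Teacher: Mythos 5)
Your core algebraic move is the same one the paper uses: writing the relabeled target as a convex combination so that all $\rho$-dependence enters linearly through the label residual. In the paper's proof this appears as the identity $x - y^\prime + \theta^\star = \rho\,(x - f_{\theta^\star}(x) + \theta^\star) + (1-\rho)(x - y + \theta^\star)$, which is exactly your $\hat{y} = f_{\mtheta^\star}(\xx) + (1-\rho)(y - f_{\mtheta^\star}(\xx))$ seen from the gradient-noise side. So the key idea is correctly identified.

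The genuine gap is in your optimization step. Everything downstream of your decomposition rests on a local strong-convexity / PL condition for the ReLU--sigmoid network, which you propose to justify by ``reusing whatever mechanism underlies the proof of Theorem~\ref{thm:sample_rate}.'' No such mechanism exists in the paper: both Theorem~\ref{thm:sample_rate} and Theorem~\ref{thm:label_rate} are proved in a deliberately reduced setting --- a one-dimensional, one-parameter model $f_{\theta}(x)=\mathrm{sign}(x+\theta)$ trained on the quadratic surrogate $\ell(v)=\tfrac{1}{2}(1-v)^2$ --- where the SGD recursion $\theta_{t+1}=(1-\eta)\theta_t-\eta\,(x_t-y_t)$ is solved in closed form as a geometric sum, so non-convexity never arises; the nonlinear network of Definition~\ref{def:mix_gaus} is discussed only qualitatively (Appendix~\ref{sec:nonlinear_case} and the analogous subsection for relabeling). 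Hence the curvature assumption cannot be borrowed and would have to be developed from scratch (NTK linearization or a late-training restriction), which is precisely the hard part your plan leaves open. A second, smaller mismatch: because the paper's relabeler is the sign classifier rather than a target exactly realizable by the trained model, its final bound has the form $C_2(1-\eta)^t+\bigl(1-(1-\eta)^t\bigr)(C_3-C_4\rho)$ with $C_4=C_1^{\prime}\epsilon_0>0$ --- decreasing in $\rho$ but strictly positive even at $\rho=1$, since relabeling only shrinks the expected noise magnitude by $\epsilon_0 = \E|x-y+\theta^\star| - \E|x-f_{\theta^\star}(x)+\theta^\star|$ rather than eliminating it --- whereas your realizable-MSE framing predicts a zero noise floor at $\rho=1$. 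That discrepancy is a symptom of your analyzing a different (and harder) model than the one for which the paper actually carries out the proof.
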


Results in~\figref{fig:inves_samples} illustrate that the modified targets $G^\prime_Y$ with higher values of $\rho$ lead to faster training convergence and better performance.
See theoretical analysis in~\thmref{thm:label_rate} and~\appref{sec:proof_sample_rate}.
\looseness=-1

\begin{figure}[t]
    \centering
    \begin{subfigure}{.325\textwidth}
        \centering
        \includegraphics[width=\linewidth]{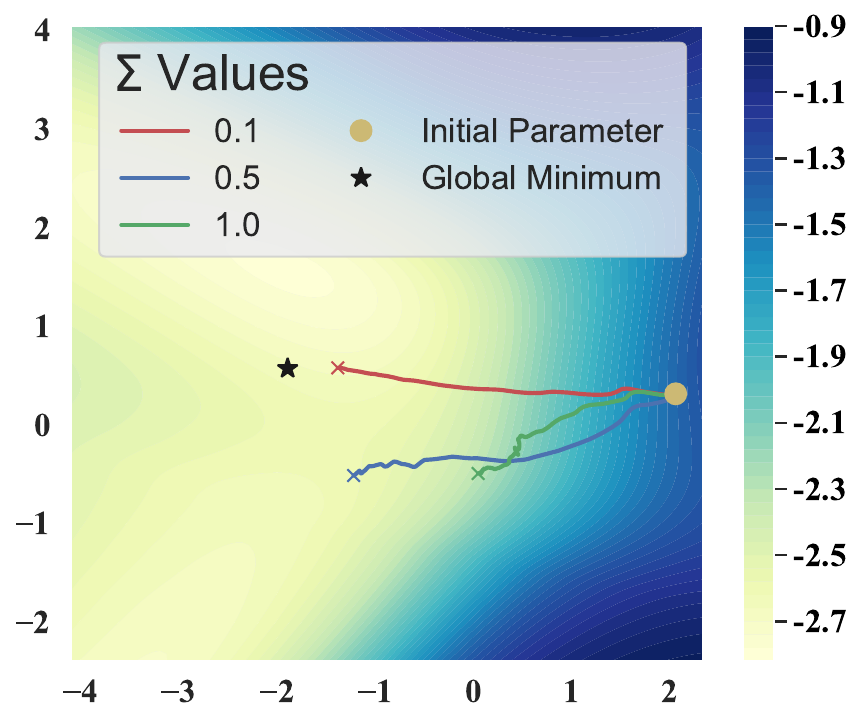}
        \caption{Loss landscape for $\Sigma$}
        \label{fig:original_sample}
    \end{subfigure}
    \begin{subfigure}{.325\textwidth}
        \centering
        \includegraphics[width=\linewidth]{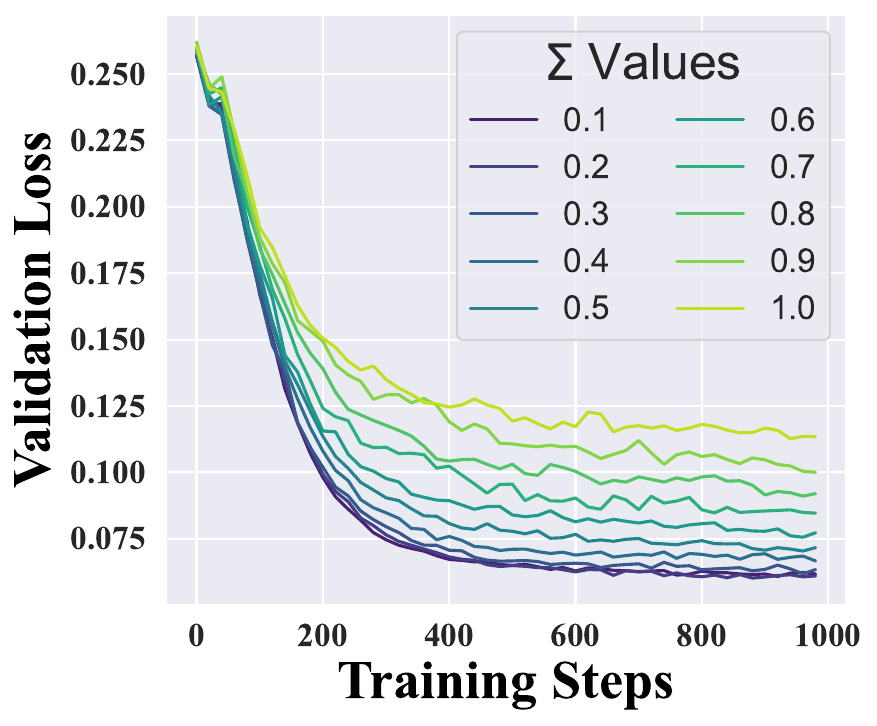}
        \caption{Evaluate $G^\prime_{X}$ with different $\Sigma$}
        \label{fig:diff_sigma}
    \end{subfigure}
    \begin{subfigure}{.325\textwidth}
        \centering
        \includegraphics[width=\linewidth]{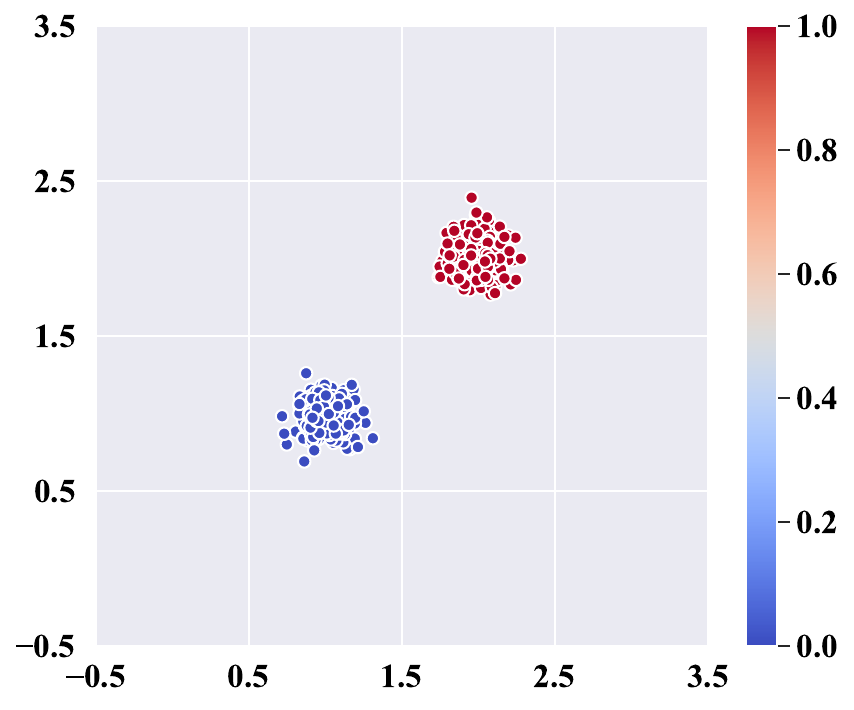}
        \caption{Optimal $G^\prime_{X}$ ($\Sigma=0.1$)}
        \label{fig:optimal_sample}
    \end{subfigure}
    \vspace{-0.5em}
    \caption{\small
        \textbf{Investigating modified samples with varied $\Sigma$ values}.
        Following \cite{li2018visualizing}, Figure~\ref{fig:original_sample} visualizes the validation loss landscape within a two-dimensional parameter space, along with three training trajectories corresponding to different $\Sigma$ settings.
        Figure~\ref{fig:diff_sigma} illustrates the performance of models trained using samples with varied $\Sigma$.
        The optimal case in our task, utilizing samples with $\Sigma=0.1$ (which achieves the lowest validation loss in Figure~\ref{fig:diff_sigma}), is visualized in Figure~\ref{fig:optimal_sample}, where the color bar represents the values of targets $y$.
    }
    \label{fig:inves_samples}
    \vspace{-1.em}
\end{figure}

\begin{figure}[t]
    \centering
    \begin{subfigure}{.325\textwidth}
        \centering
        \includegraphics[width=\linewidth]{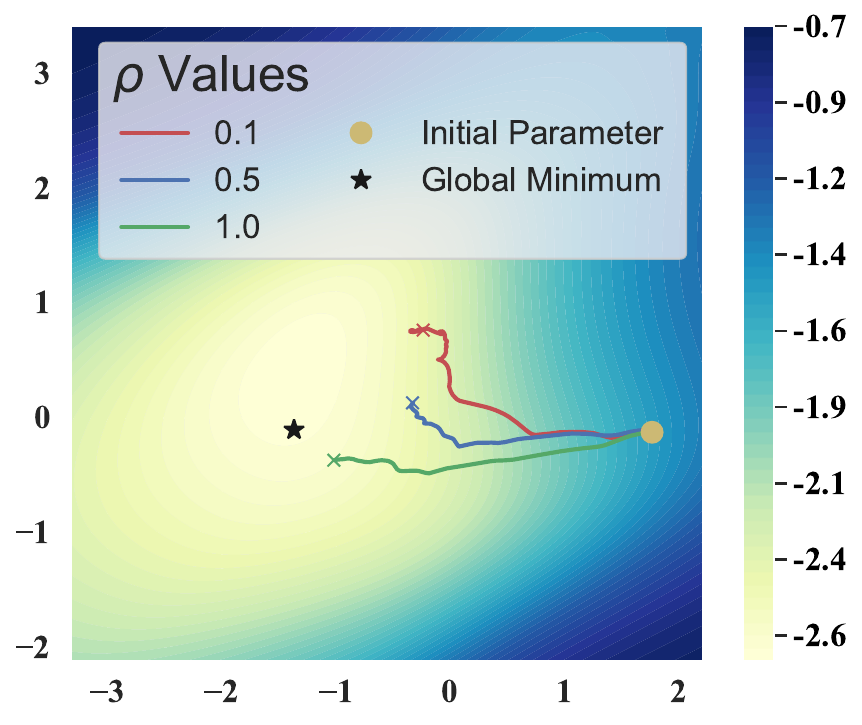}
        \caption{Loss landscape for $\rho$}
        \label{fig:label_landscape}
    \end{subfigure}
    \begin{subfigure}{.325\textwidth}
        \centering
        \includegraphics[width=\linewidth]{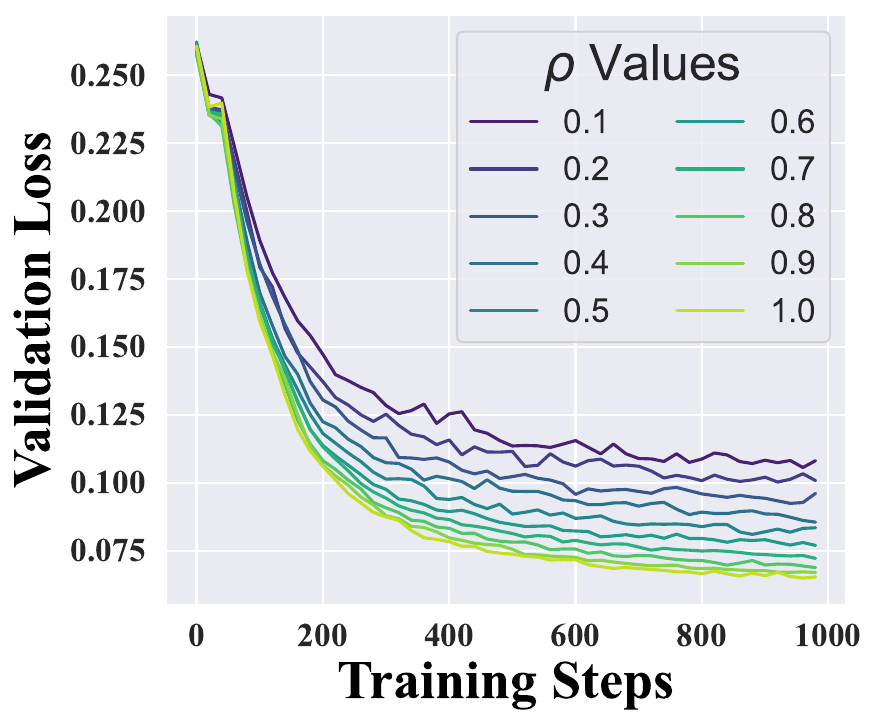}
        \caption{Evaluate $G^\prime_Y$ with different $\rho$}
        \label{fig:diff_rho}
    \end{subfigure}
    \begin{subfigure}{.325\textwidth}
        \centering
        \includegraphics[width=\linewidth]{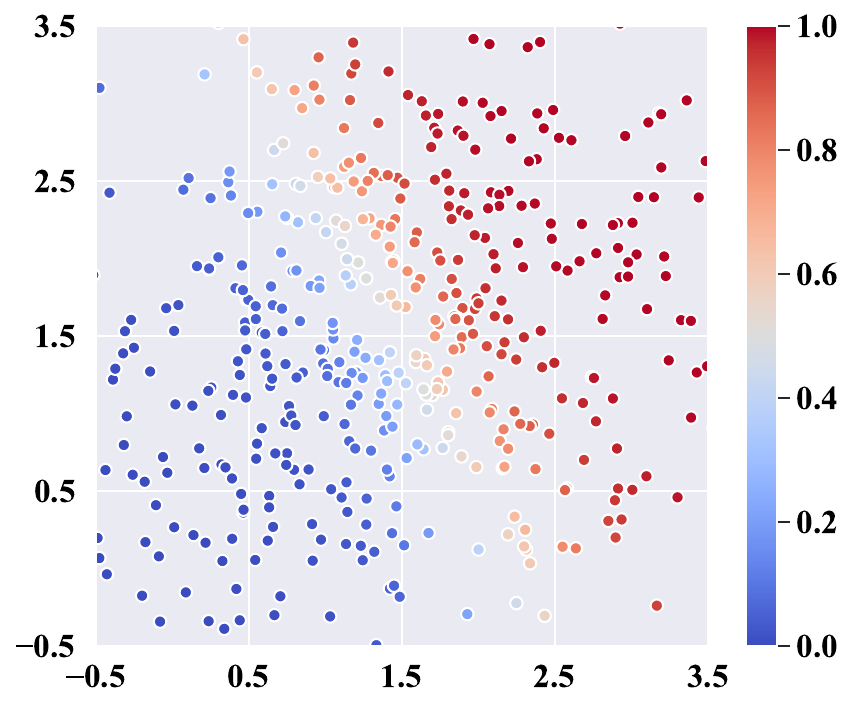}
        \caption{Optimal $G^\prime_Y$ ($\rho=1.0$)}
        \label{fig:optimal_label}
    \end{subfigure}
    \vspace{-0.5em}
    \caption{\small
        \textbf{Investigating modified targets with varied $\rho$ values}.
        We present a visualization of the validation loss landscape in Figure~\ref{fig:label_landscape}, including three training trajectories that correspond to different $\rho$ settings.
        Figure~\ref{fig:diff_rho} illustrates the performance of models trained using targets with varying $\rho$ values.
        The optimal scenario for our task, which uses targets with $\rho=1.0$, is depicted in Figure~\ref{fig:optimal_label}.
    }
    \vspace{-1.5em}
\end{figure}

\subsection{Extended Understanding of Data-Centric Efficient Learning}
\label{sec:und_data}

The empirical and theoretical investigations regarding the properties of modified samples \( G^\prime_{X} \) and targets \( G^\prime_{Y} \) in~\secref{sec:inves_x} are limited to a simplified case (as described in \defref{def:mix_gaus}) and may not extend to all practical scenarios, such as training a ResNet~\citep{he2016deep} on the ImageNet dataset~\citep{deng2009imagenet}.

Interestingly, we observe that the advantageous modifications of both samples \( G^\prime_{X} \) and targets \( G^\prime_{Y} \) converge towards a unified principle: minimizing or preventing any sample \( \xx \) from being labeled with multiple or inaccurate targets \( \yy \). This principle emphasizes the importance of providing accurate and informative targets \( \yy \) for each sample \( \xx \), as analyzed in \remarkref{rmk:noise}, and suggests extending this insight to any complex dataset like $S$.
\begin{remark}[Ideal data properties avoid implicitly introduced gradient noise from data]
    \label{rmk:noise}
    Intuitively, the semantic information within each sample \( \xx \) should be unique and not identical to another sample. Consequently, the exact target \( \yy \), which represents the semantic information of \( \xx \), should also be unique and informative. This implies the necessity of establishing bijective (or one-to-one) mappings between samples and their respective targets.

    In contrast, when a sample \( \xx \) (or several similar samples) is labeled with multiple different targets \( \yy \), it may implicitly introduce noise into the gradient \( \nabla\ell(\xx,\yy) \), thereby hindering the optimization.
\end{remark}

However, real-world datasets often deviate from the ideal properties described above, as discussed in \remarkref{rem:motivation} below and further analyzed in~\appref{sec:analy_data_augment}.
\begin{remark}[Imperfect mappings and inaccurate targets in real-world datasets]
    \label{rem:motivation}
    In practice, we observe that `noisy mappings' between input samples and targets are prevalent in real-world datasets.
    As illustrated in~\figref{fig:motivation}, several common phenomena contribute to this issue:
    \begin{itemize}[nosep, leftmargin=12pt]
        \item Similar or identical input samples may be assigned different targets due to using data augmentations, which is common in both self-supervised and human-supervised learning settings.
        \item Inaccurate targets may be generated, particularly in self-supervised learning scenarios.
        \item In human-supervised learning, all samples within a class are mapped to a one-hot target.
    \end{itemize}
    These imperfect mappings and inaccurate targets pose challenges to achieving optimal training efficiency and effectiveness for real-world datasets.
\end{remark}

\subsection{Generalization-bounded Efficient Data Synthesis}
\label{sec:gen_bound}

Given insights from \remarkref{rmk:noise}, an effective approach to generate efficient data \( S \) from original data \( D \) involves employing a high-quality labeler \(\mpsi\) to relabel each sample \(\mathbf{x}\) within \(D\). This process results in the formation of an optimized dataset \(S\). However, the generalization ability of models trained on these optimized datasets \(S\) is not inherently assured.
For a given labeler \(\mpsi\), we derive the generalization bound, which quantifies the representation distance between models \(\mphi_{\mtheta}\) and \(\mphi^\star\), trained on the optimized dataset \(S\). This is presented in \thmref{thm:gen_bound} (see \appref{sec:proof_gen_bound} for the proof).

\begin{definition}[Representation distance\footnote{Intuitively, a smaller \(\textup{D}_{\textup{Rep}}(\mphi_S \to \mphi_T;D)\) indicates that the model \(\mphi_S\) can be transformed into \(\mphi_T\) over data \(D\) via a linear model with relative ease. This also implies that $\mphi_S$ can achieve the same linear evaluation performance as $\mphi_T$ on data \(D\).}]
    We introduce our proposed metric as
    \begin{equation}
        \label{eq:rep_dis}
        \textup{D}_{\textup{Rep}}(\mphi_S \to \mphi_T;D) := \inf_{\mW \in \R^{m \times n}, \bb\in\R^{m}} \left\{ \mathbb{E}_{\xx \sim D} \left[ \ell(\mW\mphi_S(\xx) +\bb, \mphi_T(\xx)) \right] \right\} \,,
    \end{equation}
    which quantifies the distance between a source model \(\mphi_S\) and a target model \(\mphi_T\) with respect to a dataset \(D\), and the loss function is defined as $\ell(\hat{y}, y):=\mathbf{1}(\hat{y} \neq y)$.
\end{definition}

\begin{theorem}[Generalization bound with labeler $\mpsi$]
    \label{thm:gen_bound}
    Assumimg the model $\mphi_{\mtheta}: \mathcal{X} \to \mathcal{Y}$ belongs to a hypothesis space \( \Phi \).
    Then, for any \( \delta \in (0, 1) \), with probability at least \( 1 - \delta \), we have
    \begin{align}
        \label{eq:gen_bound}
        \begin{split}
            \textstyle
            \textup{D}_{\textup{Rep}}(\mphi_{\mtheta} \to \mphi^\star;X) & \leq \textup{B}_{\textup{Sample}} + \textup{B}_{\textup{Target}} + \textup{B}_{\textup{Model}} \,,
        \end{split}
    \end{align}
    where\bluehighlight{$\textup{B}_{\textup{Target}}=\textup{D}_{\textup{Rep}}(\mpsi\to\mphi^\star;D_X)$},\greenhighlight{$\textup{B}_{\textup{Model}}=\textup{D}_{\textup{Rep}}(\mphi_{\mtheta}\to\mpsi;S_X) + 2\mathfrak{R}_{D_X}(\Phi) +  \tilde{\cO}({\abs{D_X}}^{-1})$}, and \redhighlight{$\textup{B}_{\textup{Sample}}=\textup{D}_{\textup{TV}}(S_X, D_X)$}.
    $\textup{D}_{\textup{TV}}$ represents the total variation divergence~\citep{barron1992distribution}, and $\mathfrak{R}_{D_X}$ is the empirical Rademacher complexity.
\end{theorem}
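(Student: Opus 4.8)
The plan is to derive the bound by exploiting two structural properties of the representation distance $\textup{D}_{\textup{Rep}}$: a \emph{triangle inequality} that lets me interpose the labeler $\mpsi$ between $\mphi_{\mtheta}$ and $\mphi^\star$, and a pair of \emph{distribution-transfer} steps that move the reference data between the population $X$, the original samples $D_X$, and the synthetic samples $S_X$. The target inequality is then obtained by chaining $\mphi_{\mtheta} \to \mpsi \to \mphi^\star$ and relocating each piece onto the distribution where it is cheapest to control.

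First I would establish that $\textup{D}_{\textup{Rep}}(\cdot \to \cdot; D)$ obeys a triangle inequality for any fixed reference $D$. The key observation is that the linear readouts compose: if $\mW_1 \mphi_A(\xx) + \bb_1 = \mphi_B(\xx)$ and $\mW_2 \mphi_B(\xx) + \bb_2 = \mphi_C(\xx)$ hold at a point $\xx$, then $(\mW_2\mW_1)\mphi_A(\xx) + (\mW_2\bb_1 + \bb_2) = \mphi_C(\xx)$, so the composed readout incurs no $0$-$1$ error there. Contrapositively, the indicator of a composed error is dominated pointwise by the sum of the two individual error indicators; taking expectations, then infima over the two readout pairs independently, yields
\begin{equation}
\textup{D}_{\textup{Rep}}(\mphi_A \to \mphi_C; D) \leq \textup{D}_{\textup{Rep}}(\mphi_A \to \mphi_B; D) + \textup{D}_{\textup{Rep}}(\mphi_B \to \mphi_C; D) \,.
\end{equation}

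Next I would assemble the three terms. Applying this inequality on the empirical sample $D_X$ with $\mphi_B = \mpsi$ splits the empirical distance into $\textup{D}_{\textup{Rep}}(\mphi_{\mtheta} \to \mpsi; D_X)$ and the target term $\textup{B}_{\textup{Target}} = \textup{D}_{\textup{Rep}}(\mpsi \to \mphi^\star; D_X)$. Because the loss $\ell(\hat y, y) = \mathbf{1}(\hat y \neq y)$ takes values in $[0,1]$, any fixed readout satisfies $|\mathbb{E}_{S_X}[\cdot] - \mathbb{E}_{D_X}[\cdot]| \leq \textup{D}_{\textup{TV}}(S_X, D_X)$, so transferring the first piece from $D_X$ to $S_X$ costs at most $\textup{B}_{\textup{Sample}} = \textup{D}_{\textup{TV}}(S_X, D_X)$ and produces $\textup{D}_{\textup{Rep}}(\mphi_{\mtheta} \to \mpsi; S_X)$, the leading summand of $\textup{B}_{\textup{Model}}$. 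Finally, to pass from the population distance $\textup{D}_{\textup{Rep}}(\mphi_{\mtheta} \to \mphi^\star; X)$ to its empirical counterpart on $D_X$ I would invoke a standard uniform-convergence argument: since $\mphi_{\mtheta}$ ranges over $\Phi$ and the $0$-$1$ loss is bounded, with probability at least $1-\delta$ the population risk exceeds the empirical risk by at most $2\mathfrak{R}_{D_X}(\Phi) + \tilde{\cO}(\abs{D_X}^{-1})$ uniformly in the model, supplying the remaining summands of $\textup{B}_{\textup{Model}}$. Summing the three steps gives the claimed inequality.

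The main obstacle is the interaction between the infimum over linear readouts $(\mW, \bb)$ and the uniform-convergence step. The quantity being concentrated is $\xx \mapsto \mathbf{1}(\mW\mphi_{\mtheta}(\xx) + \bb \neq \mphi^\star(\xx))$, which depends not only on $\mphi_{\mtheta} \in \Phi$ but also on the readout pair; to charge the generalization gap solely to $\mathfrak{R}_{D_X}(\Phi)$ I must either fold the readout into the hypothesis class and bound its contribution via a contraction/composition estimate for Rademacher complexity, or restrict the readouts to bounded norm so their complexity is absorbed into the stated terms. Handling the non-Lipschitz $0$-$1$ loss in the Rademacher step---typically through a margin surrogate or a VC-type refinement---is the one place requiring genuine care; the triangle inequality and the total-variation transfer are then essentially mechanical.
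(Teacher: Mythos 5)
Your proposal follows essentially the same route as the paper's proof: the same composition-based triangle inequality for $\textup{D}_{\textup{Rep}}$, the same total-variation transfer of the $\mphi_{\mtheta}\to\mpsi$ term from $D_X$ to $S_X$, and the same Rademacher uniform-convergence step relating the population distance on $X$ to the empirical one on $D_X$. The subtlety you flag at the end---that the function class being concentrated involves the linear readouts $(\mW,\bb)$ as well as $\Phi$, and that the $0$-$1$ loss is not Lipschitz---is genuine, and the paper's own proof glosses over it by charging the gap to $2\mathfrak{R}_{D_X}(\Phi)$ and taking the infimum over $(\mW,\bb)$ after a per-readout high-probability statement.
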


Drawing insights from \thmref{thm:sample_rate},~\ref{thm:label_rate} and~\ref{thm:gen_bound}, we define the properties of ideal data below:
\begin{definition}[Properties of ideal efficient data, including samples $S_X$ and targets $S_Y$]
    \label{rmk:ideal_data}
    To meet the objectives of~\defref{def:dd_goal}, an ideal efficient data requires:
    \ding{192} Targets ($S_Y$):
    \begin{itemize}[nosep, leftmargin=12pt]
        \item generating targets $\mpsi(\xx)$ from the labeler $\mpsi$ that are accurate (i.e., align with human-annotating targets\footnote{
                  Alignment with human-annotated targets can occur via direct prediction or linear transportability.
              } and optimal model $\mphi^\star$), aiming to minimize \bluehighlight{$\textup{D}_{\textup{Rep}}(\mpsi\to\mphi^\star;D_X)$};
        \item the generated target $\mpsi(\xx)$ should be informative to corresponding sample $\xx$ (i.e., forming perfect bijective mappings with the original samples $\xx$), according to \remarkref{rmk:ideal_data};
    \end{itemize}
    \ding{193} Samples ($S_X$): low distribution disparity represented by $\textup{D}_{\textup{TV}}(S_X, D_X)$ and high sample diversity denoted as $\abs{S_X}$, aiming at minimizing \redhighlight{$\textup{D}_{\textup{TV}}(S_X, D_X)$}.
\end{definition}

To meet the requirement~\ding{193} specified in~\remarkref{rmk:ideal_data}, we utilize \textit{weak} data augmentation on $D_X$ to generate the set $S_X$, aimed at enhancing the diversity $\abs{S_X}$ while ensuring the term \(\textup{D}_{\textup{TV}}(S_X, D_X)\) remains minimal.
However, satisfying requirement \ding{192} is non-trivial, as capturing a labeler \(\mpsi\) that matches \(\mphi^\star\) is intractable (i.e., achieving \( \textup{D}_{\textup{Rep}}(\mpsi\to\mphi^\star;D_X)=0\) is challenging).

Fortunately, our experiments in real-world scenarios, as discussed in \secref{sec:ablation}, demonstrate that employing a prior model as the labeler \(\mpsi\) generally approximates \(\mphi^\star\) within the \textit{representation space}.
Therefore, we posit that introducing a prior model \(\mpsi\) to generate an efficient dataset \(S\) can typically accelerate the early stages of learning. Subsequently, the original dataset \(D\) should be employed.

In \eqref{eq:gen_bound}, the last term, \greenhighlight{\(\textup{D}_{\textup{Rep}}(\mphi_{\mtheta}\to\mpsi;S_X) + 2\mathfrak{R}_{D_X}(\Phi) +  \tilde{\cO}(\lvert D_X \rvert^{-1})\)}, includes \(2\mathfrak{R}_{D_X}(\Phi)\) and \(\tilde{\cO}(\lvert D_X \rvert^{-1})\), which depend on the neural architecture and the size of \(D_X\), respectively, and can be considered constants.
Thus, optimizing the model involves minimizing \(\textup{D}_{\textup{Rep}}(\mphi_{\mtheta}\to\mpsi;S_X)\) through training \(\mphi_{\mtheta}\). A detailed technical solution is provided in \secref{sec:method}.

\section{Methodology}
\label{sec:method}
Building upon the theoretical insights from \secref{sec:case_study}, we propose our \algopt (see~\figref{fig:framework}):
(a) \algopt-D \Tool is used to generate efficient data (c.f.\ \secref{sec:rela_d});
(b) \algopt-F \Runer guides the models to train over the efficient data from \algopt-D \Tool (c.f.\ \secref{sec:rela_f}).

\subsection{\algopt-D \Tool: Synthesis of Efficient Dataset} \label{sec:rela_d}
Motivated by two property requirements in~\defref{rmk:ideal_data}, here we introduce our optimization-free synthesis process of both samples and targets in our \algopt-D (see technical details in~\appref{sec:detailed_rela_d}).

\textbf{Generating transportable representations as the \underline{targets}.}
We argue that \textit{well-trained models (called \emph{prior models}) on diverse real-world datasets using various neural network architectures and algorithms converge towards the same linear representation space.
    In other words, the generated pseudo representations $R_Y$ for samples $D_X$ using these prior models are linearly transportable to each other and to the human-annotating targets.}
The empirical verifications refer to~\appref{sec:ablation}.
We further justify in~\appref{sec:ideal_prior} that the requirement \ding{192} in~\defref{rmk:ideal_data} can be achieved by employing a prior model as the ideal labeler $\mpsi: \R^d \rightarrow \R^m$, i.e., generating $R_Y=\{ \mpsi(\xx) \mid \xx \sim D_X \}$ as the targets.
The generation process of targets is conducted only once (refer to~\appref{sec:budget} for details), and the generated targets $R_Y$ are stored and combined with the samples $D_X$ to form the data $D = (D_X, R_Y)$.
\looseness=-1

\paragraph{Efficient and distribution-aligned \underline{sample} generation.}
To satisfy requirement \ding{193} in \defref{rmk:ideal_data} efficiently, we employ basic data augmentations into data $D_X$ such as \texttt{RandomResizeCrop} with a minimum scale of 0.5 (as opposed to the default of 0.08) and \texttt{RandomHorizontalFlip} with $p=0.5$.
\looseness=-1

\subsection{\algopt-F \Runer: Assist Learning with Generated Efficient Dataset}
\label{sec:rela_f}
In this section, we showcase the significance of understanding ideal data properties and generated efficient dataset in assisting self-supervised learning, given this self-supervised paradigm on unlabeled data suffers from significant inefficiency issues compared to human-supervised learning~\citep{wang2021solving}.
\looseness=-1

Here we propose a plug-and-play method that can be seamlessly integrated into any existing self-supervised learning algorithm, significantly enhancing its training efficiency by introducing an additional loss term.
Formally, the loss function is defined as follows:
\begin{equation}
    \label{eq:our_loss}
    \textstyle
    \lambda \cdot \cL_{\algopt} + (1 - \lambda) \cdot \cL_{\SSL}\,, \; \text{where} \;\cL_{\algopt}:= \E_{\xx,\yy\sim(D_X, R_Y)}\left[\ell(\mW\mphi_{\mtheta}(\xx) -\bb, \yy)\right]  \,,
\end{equation}
where $\ell(\zz, \yy):= 1 - {\zz \cdot \yy} / ({\|\zz\| \|\yy\|})$ be the loss function, $\cL_{\SSL}$ denotes the loss specified by any self-supervised learning method, respectively.
Furthermore, the data $D = (D_X, R_Y)$ are collected using the strategy outlined in~\secref{sec:rela_d}, with updates occurring at each $k$-th epoch.

The dynamic coefficient \(\lambda \in \{0,1\}\) divides the training process into two distinct stages.
Initially, $\lambda$ is set to $1$ to emphasize $\cL_{\algopt}$, assuming its crucial role in the early learning phase.
As the model $\mphi_{\mtheta}$ improves and self-generated targets become more reliable in $\cL_{\SSL}$, an adaptive attenuation algorithm adjusts \(\lambda\) to 0 (note that the initial $\lambda$ is tuning-free for all cases and see~\appref{sec:rela_alg} for details).
As a result, only a single loss term in \eqref{eq:our_loss} is calculated, ensuring no extra computational cost with \(\algopt\).

To enhance the recognition of \algopt-aided algorithms, we re-denote those that are used in their names.
For example, the BYOL algorithm~\citep{jean-bastien2020bootstrap}, when enhanced with \algopt, is re-denoted as BYOL \Runer.
Furthermore, as the prior models downloaded from the internet are not consistently robust, the aforementioned dynamic setting of $\lambda$ also prevents the model $\mphi_{\mtheta}$ from overfitting to potentially weak generated targets.
The efficacy of our proposed \algopt is empirically validated in~\secref{sec:exp}.

\section{Experiments}
\label{sec:exp}
This section describes the experimental setup and procedures undertaken to test our hypotheses and evaluate the effectiveness of our proposed methodologies.

\paragraph{Experimental setting.} \label{sec:expset}
We list the settings below (see more details in~\appref{app:details}).

\textit{$\bullet$ Datasets:}
For low-resolution data ($32 \times 32$), we evaluate our method on two datasets, i.e., CIFAR-10 \citep{krizhevsky2009cifar} and CIFAR-100 \citep{krizhevsky2009learning}.
For high-resolution data, we conduct experiments on two large-scale datasets including Tiny-ImageNet ($64 \times 64$) \citep{le2015tiny} and full ImageNet-1K ($224 \times 224$) \citep{deng2009imagenet}, to assess the scalability and effectiveness of our method on more complex and varied datasets.

\textit{$\bullet$ Neural network architectures:}
Similar to prior works/benchmarks of dataset distillation~\citep{sun2024diversity} and self-supervised learning~\citep{susmelj2020lightly,da2022solo}, we use several backbone architectures to evaluate the generalizability of our method, including ResNet-\{18, 50, 101\}~\citep{he2016deep}, EfficientNet-B0~\citep{tan2019efficientnet}, MobileNet-V2 \citep{sandler2018mobilenetv2}, ViT~\citep{dosovitskiy2020image}, and a series of CLIP-based models~\citep{radford2021learning}.
These architectures represent a range of model complexities and capacities, enabling a comprehensive assessment of our approach.

\textit{$\bullet$ Baselines:}
Referring to a prior widely-used benchmark~\citep{susmelj2020lightly,da2022solo},
we consider several state-of-the-art  methods as baselines for a broader practical impact, including:
SimCLR~\citep{chen2020simple},
Barlow Twins~\citep{zbontar2021barlow},
BYOL~\citep{jean-bastien2020bootstrap},
DINO~\citep{caron2021emerging},
MoCo~\citep{he2020momentum},
SimSiam~\citep{chen2021exploring},
SwAV~\citep{caron2020unsupervised},
and Vicreg~\citep{bardes2021vicreg}.

\textit{$\bullet$ Evaluation:}
Following previous benchmarks and research~\citep{susmelj2020lightly,da2022solo,chen2020simple,bardes2021vicreg}, we evaluate all the trained models using offline linear probing strategy to reflect the representation ability of the trained models, and ensure a fair and comprehensive comparison with baseline approaches.

\textit{$\bullet$ Implementation details:}
We implement our method by extending a popular self-supervised learning open-source benchmark~\citep{susmelj2020lightly} and use their configurations therein.
This includes using AdamW as the optimizer, with a mini-batch size of 128 (except for ImageNet-1K, where we use a mini-batch size of 512).
We implement our method through PyTorch~\citep{paszke2019pytorch}, and all experiments are conducted on NVIDIA RTX 4090 GPUs.
See more detailed configurations and hyper-parameters in~\appref{app:details}.
\looseness=-1

\begin{table}[t]
    \centering
    \caption{\small
        \textbf{Benchmark our \algopt with various prior models against BYOL}.
        We compare evaluation results of the models trained using $\bullet$ BYOL with $10\%$, $20\%$ and $50\%$ training budget/steps; $\bullet$ BYOL~\Runer with different prior models; $\bullet$ BYOL with full budget, denoted as BYOL$^\star$ in this table.
        Regarding the prior models used for our \algopt, we respectively utilize six models with increasing representation capabilities, including $\bullet$ randomly initialized network (Rand.); $\bullet$ four BYOL$^\star$-trained models (CF10-T, CF100-T, TIN-T, IN1K-T) corresponding to four datasets (listed below); $\bullet$ CLIP-RN50.
        The evaluations are performed across four datasets, i.e., CIFAR-10 (CF-10), CIFAR-100 (CF-100), Tiny-ImageNet (T-IN), and ImageNet-1K (IN-1K).
        We \underline{underline} the results that outperform the full training, and \textbf{bold} the results that achieve the highest performance using a specific ratio of budget.
        All the networks used for training are ResNet-18, except the ResNet-50 used for IN-1K.
    }
    \setlength{\tabcolsep}{3pt}
    \scalebox{0.85}{
        \begin{tabular}{@{}lcc|cccccc|c@{}}
            \toprule
            \multicolumn{2}{l}{}         &                         & \multicolumn{6}{c|}{BYOL~\Runer w/} &                                                                                                                                                                                                                                             \\ \midrule
            \multicolumn{1}{l|}{Dataset} & \multicolumn{1}{c|}{\%} & BYOL                                & Rand.                              & CF10-T                     & CF100-T                    & TIN-T                      & IN1K-T                               & CLIP-RN50                           & BYOL$^\star$                       \\ \midrule
            \multicolumn{1}{l|}{}        & \multicolumn{1}{c|}{10} & 58.3 $\pm$ 0.1                      & 71.4 $\pm$ 0.0                     & 81.1 $\pm$ 0.1             & 78.2 $\pm$ 0.1             & 79.6 $\pm$ 0.1             & 81.6 $\pm$ 0.0                       & \textbf{82.0 $\pm$ 0.1}             &                                    \\
            \multicolumn{1}{l|}{CF-10}   & \multicolumn{1}{c|}{20} & 70.1 $\pm$ 0.2                      & 77.1 $\pm$ 0.2                     & 83.6 $\pm$ 0.1             & 81.4 $\pm$ 0.0             & \underline{83.2 $\pm$ 0.1} & \textbf{\underline{84.4 $\pm$ 0.1}}  & \underline{83.9 $\pm$ 0.1}          & 82.7 $\pm$ 0.2                     \\
            \multicolumn{1}{l|}{}        & \multicolumn{1}{c|}{50} & 77.9 $\pm$ 0.0                      & 82.7 $\pm$ 0.1                     & \underline{86.5 $\pm$ 0.1} & \underline{86.2 $\pm$ 0.0} & \underline{86.2 $\pm$ 0.1} & \textbf{\underline{87.3 $\pm$ 0.2}}  & \underline{86.7 $\pm$ 0.0}          &                                    \\ \midrule
            \multicolumn{1}{l|}{}        & \multicolumn{1}{c|}{10} & 26.9 $\pm$ 0.2                      & 41.8 $\pm$ 0.2                     & 51.4 $\pm$ 0.1             & 51.4 $\pm$ 0.1             & \underline{53.5 $\pm$ 0.1} & \textbf{\underline{56.4 $\pm$ 0.2}}  & \underline{55.4 $\pm$ 0.1}          &                                    \\
            \multicolumn{1}{l|}{CF-100}  & \multicolumn{1}{c|}{20} & 34.8 $\pm$ 0.3                      & 48.1 $\pm$ 0.1                     & \underline{55.7 $\pm$ 0.1} & \underline{55.7 $\pm$ 0.1} & \underline{56.7 $\pm$ 0.0} & \textbf{\underline{59.5 $\pm$ 0.1} } & \underline{57.9 $\pm$ 0.0}          & 52.5 $\pm$ 0.3                     \\
            \multicolumn{1}{l|}{}        & \multicolumn{1}{c|}{50} & 41.4 $\pm$ 0.3                      & \underline{54.6 $\pm$ 0.2}         & \underline{59.7 $\pm$ 0.1} & \underline{59.8 $\pm$ 0.1} & \underline{60.0 $\pm$ 0.1} & \textbf{\underline{61.6 $\pm$ 0.1} } & \underline{61.0 $\pm$ 0.0}          &                                    \\ \midrule
            \multicolumn{1}{l|}{}        & \multicolumn{1}{c|}{10} & 25.1 $\pm$ 0.3                      & 34.5 $\pm$ 0.3                     & 39.0 $\pm$ 0.1             & 38.4 $\pm$ 0.0             & 41.2 $\pm$ 0.1             & \textbf{41.6 $\pm$ 0.1}              & 39.6 $\pm$ 0.4                      &                                    \\
            \multicolumn{1}{l|}{T-IN}    & \multicolumn{1}{c|}{20} & 30.7 $\pm$ 0.1                      & 38.2 $\pm$ 0.0                     & 41.9 $\pm$ 0.0             & 42.3 $\pm$ 0.0             & 43.2 $\pm$ 0.1             & \textbf{\underline{44.1 $\pm$ 0.1}}  & 42.6 $\pm$ 0.1                      & 43.6 $\pm$ 0.3                     \\
            \multicolumn{1}{l|}{}        & \multicolumn{1}{c|}{50} & 37.7 $\pm$ 0.2                      & \underline{43.9 $\pm$ 0.1}         & \underline{45.6 $\pm$ 0.1} & \underline{45.9 $\pm$ 0.1} & \underline{45.8 $\pm$ 0.1} & \textbf{\underline{46.4 $\pm$ 0.1}}  & \underline{46.3 $\pm$ 0.1}          &                                    \\ \midrule
            \multicolumn{1}{l|}{}        & \multicolumn{1}{c|}{10} & 44.5 $\pm$ 0.1                      & \multicolumn{1}{l}{51.7 $\pm$ 0.1} & 53.7 $\pm$ 0.1             & 53.3 $\pm$ 0.1             & 53.6 $\pm$ 0.1             & 54.9 $\pm$ 0.1                       & \textbf{56.2 $\pm$ 0.1}             & \multicolumn{1}{l}{}               \\
            \multicolumn{1}{l|}{IN-1K}   & \multicolumn{1}{c|}{20} & 55.3 $\pm$ 0.0                      & \multicolumn{1}{l}{56.9 $\pm$ 0.0} & 57.6 $\pm$ 0.1             & 57.6 $\pm$ 0.1             & 57.8 $\pm$ 0.1             & 58.0 $\pm$ 0.0                       & \textbf{59.5 $\pm$ 0.1}             & \multicolumn{1}{l}{61.9 $\pm$ 0.1} \\
            \multicolumn{1}{l|}{}        & \multicolumn{1}{c|}{50} & 60.8 $\pm$ 0.2                      & \multicolumn{1}{l}{61.1 $\pm$ 0.1} & \underline{62.1 $\pm$ 0.1} & 61.8 $\pm$ 0.1             & 61.7 $\pm$ 0.0             & 61.9 $\pm$ 0.0                       & \textbf{\underline{62.9 $\pm$ 0.1}} & \multicolumn{1}{l}{}               \\ \bottomrule
        \end{tabular}
    }
    \vspace{-1.75em}
    \label{tab:main_results}
\end{table}

\subsection{Primary Experimental Results and Analysis}
\label{sec:main_results}
Recall that our \algopt-D \Tool, as illustrated in~\figref{fig:framework} and \secref{sec:rela_d}, requires an unlabeled dataset and \textit{any pre-trained model freely available online} to generate the efficient dataset.
To justify the superior performance and generality of our \algopt across various unlabeled datasets using prior models with different representation abilities,
our comparisons in this subsection start with BYOL~\citep{jean-bastien2020bootstrap}\footnote{
    Note that (1) BYOL is competitive across various datasets~\citep{jean-bastien2020bootstrap,bardes2021vicreg,susmelj2020lightly,chen2020simple}, and (2) various self-supervised learning methods can be unified in the same framework~\citep{tao2022exploring} (see our detailed analysis in~\appref{sec:analy_diff_ssls}).
    \label{ft:why_boyl}
} and then extend to other self-supervised learning methods.

\tabref{tab:main_results} demonstrates the efficacy and efficiency of our \algopt in facilitating the learning of robust representations.
Overall, \textit{BYOL~\Runer consistently outperforms the original BYOL} when trained with a reduced budget.
In certain cases, such as on CIFAR-100, BYOL~\Runer employing only $10\%$ of the budget can surpass the performance of BYOL-trained models using the entire budget
Specifically:
\begin{enumerate}[label=(\alph*), nosep, leftmargin=16pt]
    \item A stronger prior model (e.g., CLIP) enhances the performance of \algopt more effectively than a weaker model (e.g., Rand.);
    \item Our \algopt is not sensitive to the prior knowledge. For instance, using CF10-T as the prior model can achieve competitive performance compared to that trained on extensive datasets (e.g., CLIP);
    \item A randomly initialized model can effectively aid in accelerating learning through our \algopt. This can be considered an effective scenario of ``weak-to-strong supervision''~\citep{burns2023weak} using pseudo targets.
\end{enumerate}

\textbf{Cross-architecture generalization.}
\label{sec:cross_arch}
\algopt-D \Tool generates efficient datasets using a specific neural architecture.
To evaluate the generalization ability of these datasets, it is essential to test their performance on various architectures not used in the distillation process.
\tabref{tb:crossarch} presents the performance of our \algopt in conjunction with various prior models and trained model architectures, demonstrating its robust generalization ability.
Specifically:
\begin{enumerate}[label=(\alph*), nosep, leftmargin=16pt]
    \item The integration of \algopt always enhances the performance of original BYOL;
    \item Our \algopt method exhibits minimal sensitivity to the architecture of the prior model, as evidenced by the comparable performance of BYOL~\Runer using both ViT-based and ResNet-based models.
\end{enumerate}

\begin{table}[t]
    \centering
    \caption{
        \small
        \textbf{Evaluating our \algopt on cross-architecture settings.}
        Our \algopt-D \Tool distills datasets with prior RN18 (Rand.) and CLIP-\{RN101, RN50$\times$4, ViT B/32, ViT B/16, ViT L/14\}, then versus transfer to ResNet-18; MobileNet-V2; EfficientNet-B0; ViT T/16.
        We train models using $10\%$ budget through (original) BYOL \Runer.
    }
    \setlength\tabcolsep{3.0pt}
    \scalebox{0.85}{
        \begin{tabular}{@{}l|c|c|cccccc@{}}
            \toprule
            Dataset                & Arch.           & Original       & \Runer w/ RN18 & RN101          & RN50x4         & ViT B/32       & ViT B/16       & ViT L/14       \\ \midrule
            \multirow{4}{*}{CF-10} & ResNet-18       & 58.3 $\pm$ 0.1 & 71.4 $\pm$ 0.0 & 81.9 $\pm$ 0.1 & 82.1 $\pm$ 0.3 & 83.2 $\pm$ 0.2 & 83.1 $\pm$ 0.1 & 82.4 $\pm$ 0.1 \\
                                   & MobileNet-V2    & 47.7 $\pm$ 0.1 & 69.4 $\pm$ 0.0 & 82.2 $\pm$ 0.1 & 80.8 $\pm$ 0.0 & 81.6 $\pm$ 0.1 & 82.9 $\pm$ 0.2 & 81.2 $\pm$ 0.2 \\
                                   & EfficientNet-B0 & 23.9 $\pm$ 0.2 & 68.8 $\pm$ 0.6 & 83.2 $\pm$ 0.2 & 83.9 $\pm$ 0.1 & 87.4 $\pm$ 0.1 & 86.4 $\pm$ 0.1 & 83.1 $\pm$ 0.1 \\
                                   & ViT T/16        & 43.4 $\pm$ 0.1 & 57.1 $\pm$ 0.1 & 65.9 $\pm$ 0.0 & 66.4 $\pm$ 0.1 & 69.9 $\pm$ 0.3 & 68.8 $\pm$ 0.1 & 63.7 $\pm$ 0.1 \\ \midrule
            \multirow{4}{*}{T-IN}  & ResNet-18       & 25.1 $\pm$ 0.3 & 34.5 $\pm$ 0.3 & 38.3 $\pm$ 0.1 & 39.1 $\pm$ 0.4 & 35.8 $\pm$ 0.1 & 32.4 $\pm$ 0.1 & 28.4 $\pm$ 0.2 \\
                                   & MobileNet-V2    & 8.8 $\pm$ 0.1  & 28.3 $\pm$ 0.3 & 39.9 $\pm$ 0.1 & 36.8 $\pm$ 0.2 & 36.0 $\pm$ 0.0 & 37.9 $\pm$ 0.3 & 20.6 $\pm$ 0.5 \\
                                   & EfficientNet-B0 & 4.1 $\pm$ 0.0  & 33.2 $\pm$ 0.3 & 43.5 $\pm$ 0.2 & 41.7 $\pm$ 0.2 & 44.0 $\pm$ 0.1 & 44.2 $\pm$ 0.0 & 37.9 $\pm$ 0.1 \\
                                   & ViT T/16        & 12.5 $\pm$ 0.1 & 24.6 $\pm$ 0.0 & 26.1 $\pm$ 0.1 & 27.6 $\pm$ 0.1 & 26.9 $\pm$ 0.2 & 24.5 $\pm$ 0.1 & 21.6 $\pm$ 0.0 \\ \bottomrule
        \end{tabular}
    }
    \vspace{-1em}
    \label{tb:crossarch}
\end{table}

\textbf{Combining \algopt across various self-supervised learning methods.}
\label{sec:cross_method}
To demonstrate the effectiveness and versatility of \algopt in enhancing various self-supervised learning methods, we conduct experiments with widely-used techniques.
\tabref{tb:diffmethod} presents the results, highlighting the robust generalization capability of \algopt.
Our findings consistently show that \algopt improves the performance of these methods while maintaining the same budget ratio, emphasizing its potential on learning using unlabeled data.
Additionally, we provide the results when combining \algopt with human-supervised learning in~\appref{sec:super_rela}.

\begin{table}[t]
    \setlength\tabcolsep{4.4pt}
    \centering
    \caption{
        \small
        \textbf{Evaluating our \algopt across different self-supervised learning methods.}
        We extend our analysis beyond BYOL by training and evaluating models using seven additional self-supervised learning methods, along with their \algopt-augmented counterparts \Runer, utilizing randomly initialized ResNet-18 (Rand.) and CLIP-RN50 as prior models for the \algopt-D \Tool. All methods are trained using $10\%$ budget.
    }
    \scalebox{0.85}{
        \begin{tabular}{@{}l|c|ccccccc@{}}
            \toprule
            Dataset                & Method          & SimCLR         & Barlow         & DINO           & MoCo           & SimSiam        & SwAV           & Vicreg         \\ \midrule
            \multirow{3}{*}{CF-10} & Original        & 70.7 $\pm$ 0.2 & 63.7 $\pm$ 0.3 & 66.2 $\pm$ 0.2 & 67.4 $\pm$ 0.4 & 45.8 $\pm$ 0.4 & 66.2 $\pm$ 0.3 & 71.3 $\pm$ 0.2 \\
                                   & \Runer w/ Rand. & 70.9 $\pm$ 0.0 & 68.8 $\pm$ 0.2 & 70.6 $\pm$ 0.1 & 70.9 $\pm$ 0.1 & 66.7 $\pm$ 0.1 & 69.5 $\pm$ 0.2 & 71.3 $\pm$ 0.1 \\
                                   & CLIP-RN50       & 76.4 $\pm$ 0.1 & 76.5 $\pm$ 0.2 & 82.4 $\pm$ 0.1 & 79.8 $\pm$ 0.1 & 79.3 $\pm$ 0.1 & 77.3 $\pm$ 0.0 & 80.1 $\pm$ 0.1 \\ \midrule
            \multirow{3}{*}{T-IN}  & Original        & 30.4 $\pm$ 0.1 & 28.9 $\pm$ 0.4 & 26.7 $\pm$ 0.3 & 27.1 $\pm$ 0.2 & 17.8 $\pm$ 0.3 & 20.2 $\pm$ 0.1 & 34.0 $\pm$ 0.1 \\
                                   & \Runer w/ Rand. & 30.7 $\pm$ 0.2 & 31.9 $\pm$ 0.1 & 29.4 $\pm$ 0.2 & 33.4 $\pm$ 0.1 & 25.4 $\pm$ 0.1 & 29.1 $\pm$ 0.2 & 34.1 $\pm$ 0.1 \\
                                   & CLIP-RN50       & 33.0 $\pm$ 0.3 & 33.5 $\pm$ 0.2 & 35.1 $\pm$ 0.0 & 37.1 $\pm$ 0.1 & 32.6 $\pm$ 0.1 & 32.6 $\pm$ 0.1 & 39.1 $\pm$ 0.2 \\ \bottomrule
        \end{tabular}
    }
    \label{tb:diffmethod}
\end{table}

\section{Conclusion and Limitation}
\label{sec:con_lim}
In this paper, to address the \probref{prob:efficient_rep},
we investigate the optimal properties of data, including samples and targets, to identify the properties that improve generalization and optimization in deep learning models.
Our theoretical insights indicate that targets which are informative and linearly transportable to strong representations (e.g., human annotations) enable trained models to exhibit robust representation abilities.
Furthermore, we empirically find that well-trained models (called prior models) across various tasks and architectures serve as effective labelers for generating such targets.
Consequently, we propose the Representation Learning Accelerator (\algopt), which leverages any freely available prior model to generate high-quality targets for samples. Additionally, \algopt can enhance existing (self-)supervised learning approaches by utilizing these generated data to accelerate training.
However,
our theoretical analysis is restricted to the simplified scenario described in~\defref{def:mix_gaus}, which has limited applicability in real-world contexts.

\clearpage
\bibliographystyle{plainnat}
\bibliography{sources/paper}
\endgroup

\appendix
\clearpage

\onecolumn
{
    \hypersetup{linkcolor=black}
    \parskip=0em
    \renewcommand{\contentsname}{Contents}
    \tableofcontents
    \addtocontents{toc}{\protect\setcounter{tocdepth}{3}}
}

\section{Ablation Study}
\label{sec:ablation}
We conduct ablation studies to understand the impact of each component of \algopt on performance.
\looseness=-1

\begin{figure}[!h]
    \centering
    \begin{subfigure}{.325\textwidth}
        \centering
        \includegraphics[width=\linewidth]{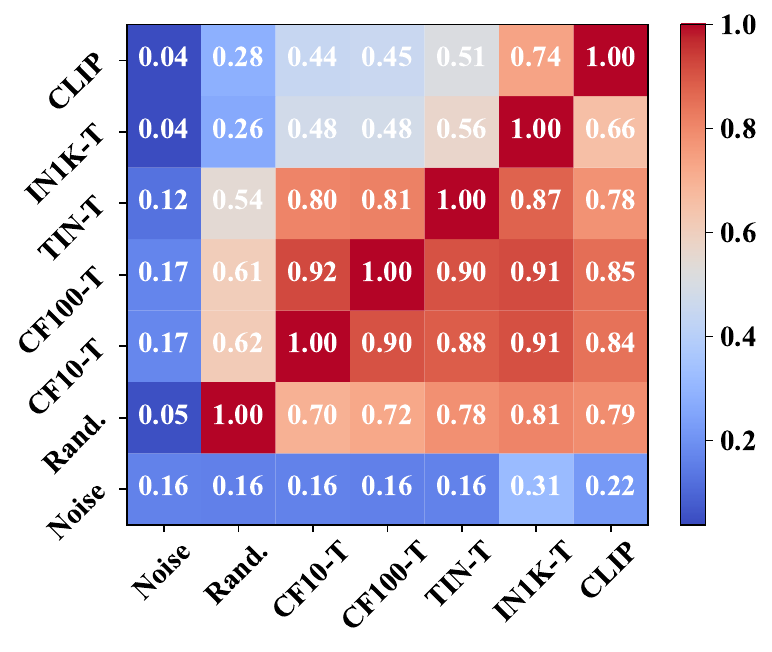}
        \caption{Representation similarity}
        \label{fig:dyna_vs_stat}
    \end{subfigure}
    \begin{subfigure}{.325\textwidth}
        \centering
        \raisebox{0.25em}{\includegraphics[width=0.96\linewidth]{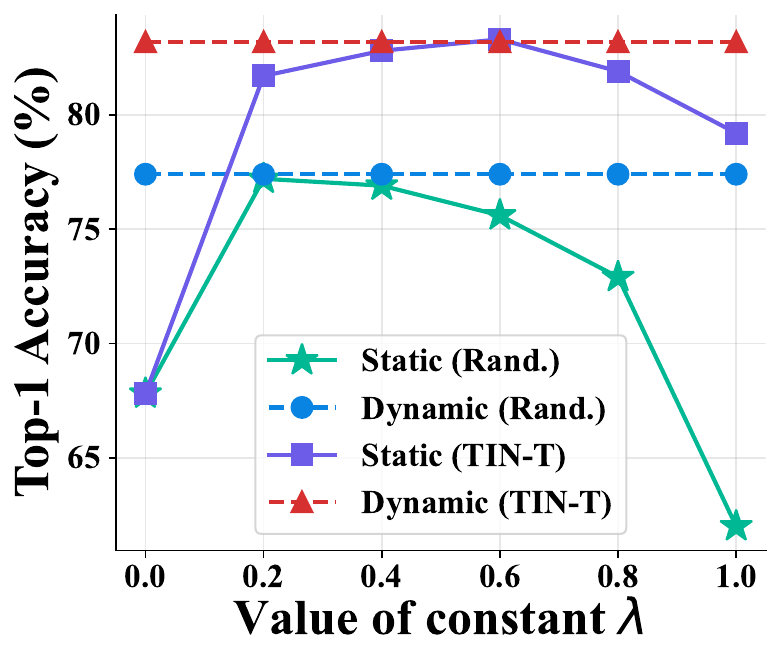}}
        \caption{Dynamic vs. static strategies}
        \label{fig:diff_lambda}
    \end{subfigure}
    \begin{subfigure}{.325\textwidth}
        \centering
        \raisebox{0.13em}{\includegraphics[width=\linewidth]{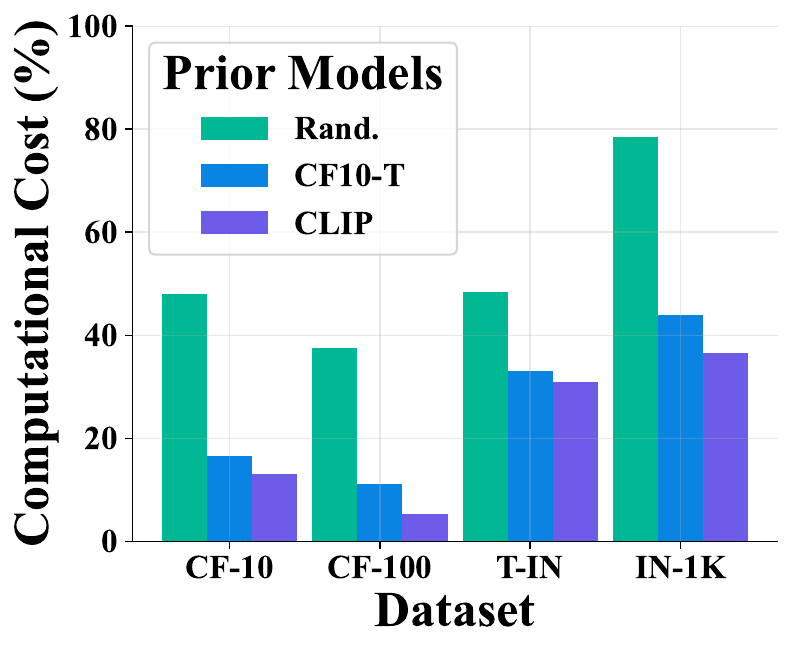}}
        \caption{Computational cost analysis}
        \label{fig:data_ratio}
    \end{subfigure}
    \vspace{-0.5em}
    \caption{\small
        \textbf{Ablation study on BYOL \Runer components and parameters}.
        (a) We analyze the representation similarity between various source models (indicated on the x-axis) and target models (indicated on the y-axis).
        (b) We compare the static \algopt\ weight setting strategy with our adaptive strategy. Dotted lines (`- -') represent our adaptive strategy, while solid lines (`---') denote the static \(\lambda\) setting strategy. Specifically, in the static weight setting (e.g., \(0.4\)), the first \(40\%\) of the training leverages \algopt, with the remaining \(60\%\) employing the original algorithm.
        (c) We present the computational cost, quantified as training time/steps, of our \algopt across various prior models.
        \looseness=-1
    }
    \vspace{-1em}
\end{figure}

\textbf{Empirical representation similarity.}
Our foundational assumption of our \algopt is that the representations of well-trained models, developed using various neural network architectures and algorithms on diverse real-world datasets, exhibit linear transportability to one another.
To test this hypothesis, we assess representation similarity, a metric that quantifies the linear transferability between pre-trained models. The results, depicted in \figref{fig:dyna_vs_stat}, demonstrate that representations from robust models (e.g., CLIP) can be effectively transferred to less robust models (e.g., CF10-T). This finding aligns with our results in \tabref{tab:main_results}, showing that leveraging powerful models (e.g., CLIP and IN1K-T) consistently enhances learning in models trained on datasets with limited knowledge, such as CF-10.
\looseness=-1

\textbf{Combining \algopt and BYOL with static $\lambda$ setting strategies.}
The coefficient \(\lambda\), as introduced in~\secref{sec:rela_f}, is pivotal in controlling the weight of the {\algopt} phase during training. To assess the robustness of our adaptive strategy, which dynamically adjusts \(\lambda\), we compare it to a static \(\lambda\) setting strategy. The results in~\figref{fig:diff_lambda} indicate that larger (smaller) {\algopt} weights are advantageous when using a strong (weak) prior model. Nonetheless, static settings lack generalizability across various scenarios, whereas our adaptive strategy demonstrates superior generalization capabilities.

\textbf{Analysis of the computational cost when using \algopt with different prior models.}
To validate that our \algopt, in conjunction with various prior models, can effectively reduce computational costs in self-supervised learning, we conducted experiments comparing the computational expense required for BYOL \Runer to achieve equivalent performance to the original BYOL trained with a full budget. The results, illustrated in \figref{fig:data_ratio}, consistently demonstrate that \algopt assists BYOL in lowering training costs while maintaining equivalent performance levels. Furthermore, it is evident that employing robust prior models consistently leads to greater reductions in training budgets.
\looseness=-1

\section{Proof of Theorem~\ref{thm:sample_rate}}
\label{sec:proof_sample_rate}
In this section, we prove a slightly modified version of Theorem\ref{thm:sample_rate},
extending the distribution to Generalized Gaussian Mixture(GGM) and making some assumptions for technical reasons. Yet this proof could still reflect the essential of the theorem.
\subsection{Setup}
\textbf{Notation} $\textup{N}(\mu,\alpha,\beta)$ denotes the generalized Gaussian distribution with pdf $\frac{\beta}{2 \alpha \Gamma(1 / \beta)} e^{-(|x-\mu| / \alpha)^\beta}$, $\textup{B}$ for Bernoulli distribution.\\
We focus on the 1-dim situation. Assume that $\mu_1<\mu_2$. Define the original data distribution($\mathcal{N}_0 = \textup{N}(\mu_1, \alpha_0,\beta_0)$ and $\mathcal{N}_1 = \textup{N}(\mu_2, \alpha_0,\beta_0)$)
\begin{equation*}
    G := \{(x, y) \mid y \sim 2\cdot \textup{B}(1,\frac{1}{2})-1, x \sim \frac{1-y}{2} \cdot \cN_0 + \frac{1+y}{2} \cdot \cN_1 \}
\end{equation*}
and the modified one ($\mathcal{N}_0' = \textup{N}(\mu_1, \alpha,\beta)$ and $\mathcal{N}_1' = \textup{N}(\mu_2, \alpha,\beta)$):
\begin{equation*}
    G^\prime := \{(x, y) \mid y \sim 2\cdot \textup{B}(1,\frac{1}{2})-1, x \sim \frac{1-y}{2} \cdot \cN_0^\prime + \frac{1+y}{2} \cdot \cN_1^\prime  \} \,.
\end{equation*}

Our task is predicting $y$ given $x$. Note that $y\in \{\pm1\}$, which is a bit different from the definition in Section~\ref{sec:case_study}.
In 1-dim situation, we just need one parameter for this classification task, so define $f_{\theta}(x):=\textup{sign}(x+\theta)$
to fit the distribution.
We could compute the generalization loss on original distribution:
$$
    \cL(f_{\theta})=(\int_{-\theta}^{+\infty} \,dF_- + \int^{-\theta}_{-\infty} \,dF_+)/2
    =(1- \int_{-\frac{\theta+\mu_2}{\alpha_0}}^{-\frac{\theta+\mu_1}{\alpha_0}} \,dF)/2
$$
Obviously $\theta^\star=-\frac{\mu_1+\mu_2}{2}$, we have:
\begin{align*}
    \cL(f_{\theta})-\cL(f_{\theta^\star}) & = (\int_{-\frac{\mu_2-\mu_1}{2\alpha_0}}^{\frac{\mu_2-\mu_1}{2\alpha_0}} \,dF- \int_{-\frac{\theta+\mu_2}{\alpha_0}}^{-\frac{\theta+\mu_1}{\alpha_0}} \,dF)/2
    \\ &  \leq C_1 \cdot (\theta-\theta^\star)^2 \quad (or \; C_1^{\prime}\,|\theta-\theta^\star|)
\end{align*}
where $C_1$, $C_1^\prime$ are constants, $F_0$ $F_1$ $F$ denote the CDF of $\cN_0$ $\cN_1$
and $\textup{N}(0,1,\beta_0)$ respectively. The inequality above is due to the fact that
function $h(x)=(\int^{1}_{-1} \,dF- \int_{x-1}^{x+1} \,dF)/x^2$ has limits at 0 and so is bounded.

\subsection{Algorithm} For a dataset $\{(x_i, y_i)\}_{i=1}^{n}$, set the loss function $L(\theta)=\frac{1}{n}\sum_{i=1}^{n}\ell \left[y_i(x_i+\theta)\right]$,
$\ell(v)=\tfrac{1}{2}(1-v)^2$.
We apply the stocastic gradient descent algorithm and assume the online setting ($n=1$):
at step $t$ draw one sample $(x_t,y_t)$ from $G^\prime$ then use the gradient
$\nabla L(\theta_t)$ to update $\theta$ ($\eta \in (0,1),t \in \mathbb{N}$):
$$\theta_{t+1}=\theta_t-\eta \nabla L(\theta_t),$$
$$\nabla L(\theta_t)=\theta +(x_t-y_t).$$
It can be observed that randomness of $x$ leads to noies on gradient.
\subsection{Bounds with Variance}  We prove the proposition that lower variance of GG can
make convergence faster, i.e. $\E\left[\cL(f_{\theta_t})-\cL(f_{\theta^\star})\right] $
is bounded by an increasing function of variance ($t$ fixed).
\begin{proof}
    From above, we could get
    $$\theta_t=(1-\eta)^{t}\theta_0-\eta\left[(x_{t-1}-y_{t-1})+(1-\eta)(x_{t-2}-y_{t-2})+\dots+(1-\eta)^{t-1}(x_{0}-y_{0}) \right]$$
    and so :
    \begin{align*}
        \E\left[\cL(f_{\theta_t})-\cL(f_{\theta^\star})\right]
         & \leq C_1 \E\left[(\theta_t-\theta^\star)^2\right]                                                                                                                                                                              \\
         & = C_1\E\,\{ \left[(1-\eta)^{t}(\theta_0-\theta^\star)-\eta\sum_{j=1}^{t}(1-\eta)^{j-1}(x_{t-j}-y_{t-j}+\theta^\star)\right]^2 \}                                                                                               \\
         & = C_1 \mathbb{E} \left[ (1 - \eta)^{2t} (\theta_0 - \theta^\star)^2 + \eta^2 \sum_{j=1}^{t} (1 - \eta)^{2(j-1)} (x_{t-j} - y_{t-j} + \theta^\star)^2 \right]                                                                   \\
         & = C_1 \left( (1 - \eta)^{2t} (\theta_0 - \theta^\star)^2 + \frac{\eta}{(2 - \eta)} (1-(1 - \eta)^{2t})\left[\frac{\alpha^2 \Gamma(3 / \beta)}{\Gamma(1 / \beta)} + \left( 1 - \frac{\mu_2 - \mu_1}{2} \right)^2\right] \right)
    \end{align*}
    \\The last two equalities is due to the fact
    that for $(x,y)\sim G^\prime$
    $$\E\left[x-y+\theta^\star\right]=0 \;,$$
    $$\E\left[(x-y+\theta^\star)^2\right]=\frac{\alpha^2 \Gamma(3 / \beta)}{\Gamma(1 / \beta)}+\left(1-\frac{\mu_2-\mu_1}{2}\right)^2 .$$
\end{proof}
\subsection{Nonlinear case}
\label{sec:nonlinear_case}
In this subsection, we conduct some qualitative analysis on the nonlinear case. The setting
is the same as that in Section~\ref{sec:case_study}. We point out the differences compared
with the linear case above: $\xx \in \mathbb{R}^d, y \in \{0,1\}$ and
\begin{equation*}
    \textstyle
    f_{\mtheta}(\xx) := \sigma \left( \mtheta^{[1]} \cdot \textup{ReLU}\left(\mtheta^{[2]}\xx + \mtheta^{[3]}\right) + \mtheta^{[4]} \right)
\end{equation*}
where \( \sigma(z) = \frac{1}{1 + e^{-z}} \) is the sigmoid function; \(\textup{ReLU}(z) = \max(0, z)\) is the activation function for the hidden layer, which provides non-linearity to the model;
\(\mtheta^{[2]}\) and \(\mtheta^{[3]}\) are the weights and biases of the hidden layer;
\(\mtheta^{[1]}\) and \(\mtheta^{[4]}\) are the weights and biases of the output layer.
\par To make things explicit, we still assume the online setting and set the loss function $L(\theta)=\frac{1}{2}(f_{\mtheta}(\xx)-y)^2$.
Assume after some iterations, $\mtheta^{[2]}\cdot \mu_1 + \mtheta^{[3]}<0$ and $\mtheta^{[2]}\cdot \mu_2 + \mtheta^{[3]}>0$ (coordinate-wise).
In this situation, we could see that if $\xx$ is close to its mean($\mu_1$ or $\mu_2$), the sign of $\textup{ReLU}\left(\mtheta^{[2]}\xx + \mtheta^{[3]}\right)$ will be the same as $y$.
So $f_{\mtheta}$ will become an optimal classifier
if $\mtheta^{[1]}\to +\infty$ and $\mtheta^{[4]}\to -\infty$. We focus on $\mtheta^{[1]}$, using SGD:
$$\mtheta^{[1]}_{t+1}=\mtheta^{[1]}_t-\eta \frac{\partial L}{\partial \mtheta^{[1]}} ,$$
$$\frac{\partial L}{\partial \mtheta^{[1]}}=(f_{\mtheta}(\xx)-y)\sigma(1-\sigma)\textup{ReLU}\left(\mtheta^{[2]}\xx + \mtheta^{[3]}\right)$$
Note that we drop the variable value in $\sigma(\cdot)$ to make the expression more compact.

Then we can analyze the phenomenon qualitatively: larger $\Sigma$ will make convergence slower.
The reason is that the larger $\Sigma$ is, when $\xx$ is drawn from $\cN_1$ ($y=1$),
$\mtheta^{[2]}\xx + \mtheta^{[3]}<0$ is more likely to happen(i.e. straying far away from the mean), causing $\mtheta^{[1]}$ to stop updating;
what's worse, when $\xx$ is drawn from $\cN_0$ ($y=0$), with larger probability $\mtheta^{[2]}\xx + \mtheta^{[3]}>0$
which will make $\mtheta^{[1]}$ to go in the opposite direction. In summary,
it is $\Sigma$ that makes the gradient noisy thus impacts the convergence rate.

\subsection{From the Perspective of Feature Learning}
In essence, the theoretical results in \cite{li2018learning} could also be interpreted as a proof of the theorem.
\cite{li2018learning}  study the learning of a two-layer ReLU neural network for $k$-class classification via stochastic gradient descent (SGD),
assuming that each class corresponds to $l$ patterns(distributions), with every two of the $k\times l$ distributions of the input data are separated by a distance $\delta$.
Below is the main theorem in \cite{li2018learning}:
\begin{proposition}
    Suppose some assumptions are satisfied, then for every $\epsilon > 0$, there is $M = \text{poly}(k, l, 1/\delta, 1/\epsilon)$ such that for every $m \geq M$, after doing a minibatch SGD with batch size $B = \text{poly}(k, l, 1/\delta, 1/\epsilon, \log m)$ and learning rate $\eta = \frac{1}{m \cdot \text{poly}(k, l, 1/\delta, 1/\epsilon, \log m)}$ for  $T = \text{poly}(k, l, 1/\delta, 1/\epsilon, \log m)$ iterations, with high probability,
    the generalization error(the probability that the model misclassifies) of the learned model is at most $\epsilon$.
\end{proposition}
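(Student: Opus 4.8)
The plan is to follow the overparameterization-plus-coupling strategy of \cite{li2018learning}: treat the two-layer ReLU network as a small perturbation of its linearization at a random initialization, exploit the $\delta$-separation of the $k\cdot l$ component distributions to certify that a low-error solution lies within reach of SGD, and then transfer the low \emph{training} loss into a low \emph{population} error via an online-to-batch argument (the stated scheme draws fresh minibatches, so each step's empirical loss is an unbiased estimate of the generalization error). First I would fix a Gaussian initialization of the $m$ hidden weights and introduce a \emph{pseudo-network} whose ReLU activation patterns are frozen at their initialization values while its weights continue to be trained. Since this pseudo-network is linear in the trainable parameters, its surrogate loss is convex, and this is the source of tractability for the optimization.

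The key steps, in order, would be: (i) a \emph{coupling lemma}, showing that when $m$ is polynomially large and $\eta$ polynomially small the fraction of hidden units that ever flip their activation across the $T$ iterations is negligible, so the true network and the pseudo-network produce near-identical outputs throughout training; (ii) an \emph{existence lemma} using the separation $\delta$ to exhibit an explicit target parameter achieving small surrogate loss with a constant classification margin --- each of the $k\cdot l$ well-separated clusters can be fitted by a bounded number of random features, and a width $m=\mathrm{poly}(k,l,1/\delta,1/\epsilon)$ guarantees that enough useful features land near every cluster; (iii) an \emph{optimization bound} combining (i) and (ii): on the approximately convex pseudo-loss, stochastic gradient descent obeys a standard descent/regret guarantee, so after $T=\mathrm{poly}(k,l,1/\delta,1/\epsilon,\log m)$ steps the average training surrogate loss falls below $\epsilon$; and (iv) a \emph{generalization step} converting the online regret into a population bound by online-to-batch, and then converting small surrogate loss with margin into a misclassification probability at most $\epsilon$ through a routine margin argument.

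The main obstacle is the interaction of steps (i) and (iii): one must simultaneously guarantee that SGD moves the weights \emph{far enough} to drive the surrogate loss down, yet \emph{little enough} that the frozen-activation approximation remains valid for all $T$ iterations. This forces a delicate joint calibration of $m$, $\eta$, $B$, and $T$ --- the width must be large enough that activation flips stay rare even after the cumulative parameter movement, while the learning rate and batch size must tame the stochastic-gradient variance (analogous to the gradient-variance computation carried out for the linear case above) so that noise does not overwhelm progress. The separation $\delta$ is what ultimately closes the argument: it endows the target solution with a margin bounded below by $\mathrm{poly}(\delta)$, ensuring that the $\cO(\cdot)$ perturbation introduced by the coupling cannot alter any classification decision.
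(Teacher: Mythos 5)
The first thing to note is that the paper does not actually prove this proposition: it is quoted verbatim as the main theorem of \cite{li2018learning}, imported solely to give a feature-learning reinterpretation of Theorem~\ref{thm:sample_rate} (the paper even leaves the hypotheses unstated, writing only ``suppose some assumptions are satisfied''). So there is no in-paper proof to compare yours against; the only meaningful comparison is with the proof in \cite{li2018learning} itself.

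Measured against that, your sketch is a faithful high-level reconstruction of the actual argument. Li and Liang do (i) couple the trained network to a pseudo-network whose ReLU activation patterns are frozen near initialization, with width $m=\mathrm{poly}(k,l,1/\delta,1/\epsilon)$ and a correspondingly small learning rate keeping the coupling valid throughout training; (ii) use the $\delta$-separation of the $k\cdot l$ component distributions to exhibit a bounded-norm solution of the (linear-in-parameters, hence convex) pseudo-loss with a margin; (iii) run a convex/online-regret analysis of SGD on the pseudo-loss; and (iv) convert to population misclassification error, which is legitimate as an online-to-batch step because each minibatch is drawn fresh from the distribution. Two caveats on where the technical weight falls. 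First, the coupling is not quite ``negligible flips, hence identical outputs'': a small but nonzero fraction of units does flip, and the proof must bound the resulting discrepancy between the true gradient and the pseudo-gradient at every iteration rather than discard it --- this is precisely where the joint calibration of $m$, $\eta$, $B$, and $T$ that you flag as the main obstacle is carried out. Second, the existence step needs the separation assumption on \emph{all} pairs of the $k\cdot l$ distributions (not just cross-class pairs) in the form the cited paper assumes it; your sketch implicitly uses this but it is worth stating, since it is the hypothesis hidden behind the proposition's ``some assumptions.'' Neither caveat is a gap in your plan, only a marker of which lemmas absorb the difficulty.
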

Theoretical results above show that a larger $\delta$ helps network to learn more efficiently.
(In our case, $\delta$ can be roughly viewed as the Mahalanobis distance between the two Gaussians, which is inverse proportion to the variance.)
Also, Appendix D.2 of \cite{li2018learning} demonstrates an example very similar to ours in which the input data is drawn from Gaussian distributions with different means,
indicating increasing variance of the Gaussian causes the test accuracy to decrease and  takes longer time
to get a good solution.

\section{Proof of Theorem~\ref{thm:label_rate}}
\label{sec:proof_label_rate}
\subsection{Setting}
Use the same setting in Section~\ref{sec:proof_sample_rate}(linear case) except that
\begin{equation*}
    G^\prime := \{(x, y^\prime) \mid y \sim 2\cdot \textup{B}(1,\frac{1}{2})-1, x = \frac{1-y}{2} \cdot \cN_0 + \frac{1+y}{2} \cdot \cN_1 ,y^\prime=\rho \cdot f_{\theta^*}(x)+(1-\rho)y\} \,.
\end{equation*}

In other words, we modify the distribution of $y$ instead of $x$ this time.
\subsection{Bounds with $\rho$}  We're going to prove that higher $\rho$ can
make convergence faster, i.e. $\E\left[\cL(f_{\theta_t})-\cL(f_{\theta^\star})\right] $
is bounded by an decreasing function of $\rho$ ($t$ fixed).
\begin{proof}
    The crucial part $x-y^\prime+\theta^\star=\rho (x-f_{\theta^*}(x)+\theta^\star)
        +(1-\rho)(x-y+\theta^\star)$, and in fact $$\E|x-y+\theta^\star|-\E|x-f_{\theta^*}(x)+\theta^\star|:=\epsilon_0>0 \;.$$
    Similarly, we can get bounds with $\rho$ (see $C_1^{\prime}$ in Section~\ref{sec:proof_sample_rate}):

    \begin{align*}
        \E\left[\cL(f_{\theta_t})-\cL(f_{\theta^\star})\right]
         & \leq C_1^{\prime} \E\left[ (1-\eta)^{t}|\theta_0-\theta^\star|+(1-(1-\eta)^{t})|x-y^\prime+\theta^\star|\right] \\
         & \leq  C_2(1-\eta)^{t} + C_1^\prime(1- (1-\eta)^{t} )\left[\rho \cdot \E|x-f_{\theta^*}(x)+\theta^\star| \right. \\
         & \quad \left. + (1-\rho)\cdot\E |x-y+\theta^\star|\right]                                                        \\
         & \leq  C_2(1-\eta)^{t}+(1- (1-\eta)^{t} )(C_3-C_4\rho)
    \end{align*}

    where $C_2=C_1^{\prime}\;|\theta_0-\theta^\star|$, $C_3=C_1^{\prime}\;\E |x-y+\theta^\star|$, $C_4=C_1^{\prime}\;\epsilon_0>0$.
\end{proof}
\subsection{Nonlinear case}
To see the impact of modifying $y$ more clearly,
we directly set $\rho=1$ and conduct a similar analysis as in Section~\ref{sec:nonlinear_case}.
Let's still focus on $\mtheta^{[1]}$ and use the same assumptions in Section~\ref{sec:nonlinear_case},
then we have:
$$\mtheta^{[1]}_{t+1}=\mtheta^{[1]}_t-\eta \frac{\partial L}{\partial \mtheta^{[1]}} ,$$
$$\frac{\partial L}{\partial \mtheta^{[1]}}=(f_{\mtheta}(\xx)-f_{\mtheta^*}(\xx))\sigma(1-\sigma)\textup{ReLU}\left(\mtheta^{[2]}\xx + \mtheta^{[3]}\right)$$
Note $y$ is replaced by $f_{\mtheta^*}(\xx)$. For instance $\xx$ is drawn from $\cN_0$ ($y=0$) but strays far away from the $\mu_1$,
causing $\mtheta^{[2]}\xx + \mtheta^{[3]}>0$. In this situation $f_{\mtheta^*}$ is likely to regard
$\xx$ as a sample from $\cN_1$ (i.e. $f_{\mtheta^*}(\xx)$ close to 1) thus making $\mtheta^{[1]}$ to go in the right direction instead of the opposite.
This explains why larger $\rho$ can make convergence faster.

\section{Proof of Theorem~\ref{thm:gen_bound}}
\label{sec:proof_gen_bound}
We follow some proof steps in \cite{shai2010learning}. Let's begin by introducing some notations used in this section.\\
\textbf{Notation and Setup} $\mathcal X$ is the input space, $\mathcal D_S$ and $\mathcal D_T$ are two distributions over $\mathcal X$.
Let $\mathcal Y = \{0,1\}$.
$\mathcal H$ denotes a hypothesis class from $\mathcal X$ to $\mathcal Y$.
To simplify notations, $\forall h,f \in \mathcal{H}$ let $ \epsilon_S(h,f)=  E_ {\xx\sim \mathcal D_S}  [\mathbf{1}(h(\xx) \neq f(\xx))]$,
and $\hat \epsilon_S(h,f)$ be empirical error ($\epsilon_T(h,f)$, $ \hat \epsilon_T(h,f)$ similar).
\par Then we introduce some concepts and lemmas, most of which are from \cite{shai2010learning}.
\begin{definition}[$\mathcal H$-divergence]
    The $\mathcal H$-divergence between two distributions $\mathcal D$ and $\mathcal D'$ is defined as:
    $$ d_ {\mathcal H}  (\mathcal D,\mathcal D')=  2 \sup_{h \in  \mathcal H}|\textup{Pr}_ {\mathcal D}  [I(h)]-  \textup{Pr}_ {\mathcal D'}  [I(h)]|$$
    where $I(h)=\{\xx: h(\xx)=1\}$.
\end{definition}
\begin{definition}[Total Variation Distance]
    For  two distributions $\mathcal D$ and $\mathcal D^\prime$, the total variation distance of them is defined as:
    $$\textup{D}_{\textup{TV}}(\mathcal D, \mathcal D^\prime) = \sup_{A \subseteq \mathcal F} | \textup{Pr}_\mathcal D (A) - \textup{Pr}_{\mathcal D^\prime} (A) |$$
    where $\mathcal{F}$ denotes the collection of all events in the probability space.
\end{definition}

\begin{lemma}
    For  two distributions $\mathcal D$ and $\mathcal D^\prime$, by definition it's easy to see that:
    $$
        \frac{1}{2} d_ {\mathcal H}  (\mathcal D,\mathcal D') \leq \textup{D}_{\textup{TV}}(\mathcal D, \mathcal D^\prime)
    $$
\end{lemma}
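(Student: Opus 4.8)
The plan is to exploit the fact that, up to the factor of $2$, the $\mathcal H$-divergence is a supremum of exactly the same probability-difference functional that defines the total variation distance, only taken over a strictly smaller collection of events. First I would rewrite the left-hand side straight from the definition: dividing the defining identity of $d_{\mathcal H}$ by two gives
\[
\tfrac{1}{2}\, d_{\mathcal H}(\mathcal D, \mathcal D') = \sup_{h \in \mathcal H} \bigl| \textup{Pr}_{\mathcal D}[I(h)] - \textup{Pr}_{\mathcal D'}[I(h)] \bigr|,
\]
where each $I(h) = \{\xx : h(\xx) = 1\}$ is the positive region of the hypothesis $h$.

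Second, the key observation is that for every $h \in \mathcal H$ the set $I(h)$ is a measurable event — this is already implicit, since $\textup{Pr}_{\mathcal D}$ and $\textup{Pr}_{\mathcal D'}$ are being applied to it — so the family $\{I(h) : h \in \mathcal H\}$ is a subcollection of $\mathcal F$, the collection of all events. Since taking the supremum of a nonnegative functional over a smaller index set can only decrease or preserve its value, I obtain
\[
\sup_{h \in \mathcal H} \bigl| \textup{Pr}_{\mathcal D}[I(h)] - \textup{Pr}_{\mathcal D'}[I(h)] \bigr| \le \sup_{A \subseteq \mathcal F} \bigl| \textup{Pr}_{\mathcal D}(A) - \textup{Pr}_{\mathcal D'}(A) \bigr| = \textup{D}_{\textup{TV}}(\mathcal D, \mathcal D').
\]
Chaining the two displays yields the claimed inequality $\tfrac{1}{2} d_{\mathcal H}(\mathcal D, \mathcal D') \le \textup{D}_{\textup{TV}}(\mathcal D, \mathcal D')$.

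There is no genuine obstacle here, and indeed the statement itself flags the result as ``easy to see'': it is a one-line monotonicity-of-supremum argument resting on the set inclusion $\{I(h) : h \in \mathcal H\} \subseteq \mathcal F$. The only point I would state explicitly is the measurability of each $I(h)$, which is what legitimizes that inclusion and guarantees that the total variation supremum, ranging over \emph{all} events, dominates the supremum restricted to the hypothesis-induced events.
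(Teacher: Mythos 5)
Your argument is correct and is exactly what the paper intends: the paper gives no separate proof, stating the inequality follows ``by definition,'' and the implicit reasoning is precisely your monotonicity-of-supremum observation that $\{I(h) : h \in \mathcal H\}$ is a subfamily of all measurable events, so the restricted supremum is dominated by the total variation supremum. Spelling out the measurability of each $I(h)$ is a reasonable (if minor) addition beyond what the paper says.
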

\begin{definition}[Symmetric Difference Hypothesis Space]
    $$\mathcal{H} \Delta \mathcal{H} := \{g:\mathcal X\to \{0,1\}  |g(\mathbf{x})=h(\mathbf{x}) \oplus h^{\prime}(\mathbf{x})  \quad \forall h , h^{\prime} \in \mathcal{H} \}$$

    $\oplus$ denotes the XOR operation.
\end{definition}
\begin{lemma}
    $$
        \forall h, h^{\prime} \in \mathcal{H},\left|\epsilon_S\left(h, h^{\prime}\right)-\epsilon_T\left(h, h^{\prime}\right)\right| \leq \frac{1}{2} d_{\mathcal{H} \Delta \mathcal{H}}\left(\mathcal{D}_S, \mathcal{D}_T\right)
    $$
\end{lemma}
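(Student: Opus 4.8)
The plan is to reduce the statement to a single application of the definition of $\mathcal{H}$-divergence, specialized to the symmetric difference class $\mathcal{H}\Delta\mathcal{H}$. The pivotal observation is that once we fix a pair $h, h' \in \mathcal{H}$, the \emph{disagreement} between them is itself evaluated by one hypothesis lying in $\mathcal{H}\Delta\mathcal{H}$: namely the function $g(\xx) := h(\xx) \oplus h'(\xx)$, which by the definition of the symmetric difference hypothesis space is a member of that class, and which takes the value $1$ exactly on the disagreement region $\{\xx : h(\xx) \neq h'(\xx)\}$.

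First I would rewrite both error terms as probabilities of a single event. Since $\epsilon_S(h, h') = \mathbb{E}_{\xx \sim \mathcal{D}_S}[\mathbf{1}(h(\xx) \neq h'(\xx))]$ and the indicator $\mathbf{1}(h(\xx) \neq h'(\xx))$ is precisely $g(\xx)$, we obtain $\epsilon_S(h, h') = \textup{Pr}_{\mathcal{D}_S}[I(g)]$ with $I(g) = \{\xx : g(\xx) = 1\}$; the identical rewriting holds for the target term with $\mathcal{D}_T$. Subtracting gives $|\epsilon_S(h, h') - \epsilon_T(h, h')| = |\textup{Pr}_{\mathcal{D}_S}[I(g)] - \textup{Pr}_{\mathcal{D}_T}[I(g)]|$.

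Next I would pass to the supremum over the class. Because $g \in \mathcal{H}\Delta\mathcal{H}$, the quantity above is bounded by $\sup_{g' \in \mathcal{H}\Delta\mathcal{H}} |\textup{Pr}_{\mathcal{D}_S}[I(g')] - \textup{Pr}_{\mathcal{D}_T}[I(g')]|$, which, by the definition of $\mathcal{H}$-divergence instantiated with the class $\mathcal{H}\Delta\mathcal{H}$, equals exactly $\tfrac{1}{2} d_{\mathcal{H}\Delta\mathcal{H}}(\mathcal{D}_S, \mathcal{D}_T)$. Chaining these (in)equalities yields the claim, and the factor $\tfrac{1}{2}$ is simply the reciprocal of the factor $2$ built into the divergence definition.

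There is no genuine obstacle here; this is the classical argument of \cite{shai2010learning}. The one point meriting a moment's care is the membership step, where one must invoke the definition of $\mathcal{H}\Delta\mathcal{H}$ to certify that the disagreement indicator $h \oplus h'$ is an admissible hypothesis over which the supremum defining the divergence actually ranges. Everything else is the routine rewriting of expectations of $\{0,1\}$-valued functions as probabilities of their level-one sets.
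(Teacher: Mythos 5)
Your proof is correct and follows essentially the same route as the paper's: both hinge on the observation that the disagreement indicator $\mathbf{1}(h(\xx)\neq h'(\xx))$ is exactly the element $g = h \oplus h'$ of $\mathcal{H}\Delta\mathcal{H}$, so that $\epsilon_S(h,h') = \textup{Pr}_{\mathcal{D}_S}[I(g)]$ and the bound follows by taking the supremum in the definition of the $\mathcal{H}\Delta\mathcal{H}$-divergence. The only cosmetic difference is that the paper phrases the identity as an equality of the two suprema, whereas you fix a pair $(h,h')$ and bound it by the supremum directly; these are the same argument.
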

\begin{proof}
    only need note that $h(\mathbf{x}) \oplus h^{\prime}(\mathbf{x})=|h(\xx)-h^\prime(\xx)|$,
    so $$ \sup_{g \in  \mathcal H \Delta \mathcal{H}}|\textup{Pr}_ {\mathcal D_S}  [I(g)]-  \textup{Pr}_ {\mathcal D_T}  [I(g)]|
        = \sup_{h , h^{\prime} \in \mathcal{H}}|\epsilon_S\left(h, h^{\prime}\right)-\epsilon_T\left(h, h^{\prime}\right)|$$
    this is done by definition.
\end{proof}

With above notations we can derive a general proposition related with Theorem~\ref{thm:gen_bound}.
\begin{proposition}
    Assumimg $\mathcal H$ is a hypothesis class from $\mathcal X$ to $\mathcal Y$
    and $\mphi,\mphi^{\prime} \in \mathcal{H}$, we have:

    \begin{equation*}
        \mathbb{E}_{\xx \sim \mathcal D_T}\left[ \ell(\mphi(\xx),\mphi^{\prime}(\xx))\right] \leq \mathbb{E}_{\xx \sim \mathcal D_S}\left[ \ell(\mphi(\xx),\mphi^{\prime}(\xx))\right] + \textup{D}_{\textup{TV}}(\mathcal D_S, \mathcal D_T)
    \end{equation*}
    where loss function is $\ell(\hat{y}, y):=\mathbf{1}(\hat{y} \neq y)$.
\end{proposition}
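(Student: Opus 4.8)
The plan is to rewrite each indicator expectation as the probability of a single disagreement event and then invoke the definition of total variation distance directly. Concretely, I would set $A := \{\xx \in \mathcal{X} : \mphi(\xx) \neq \mphi^{\prime}(\xx)\}$. Since $\mphi, \mphi^{\prime} \in \mathcal{H}$ map into $\mathcal{Y} = \{0,1\}$, the set $A$ is a well-defined event in $\mathcal{F}$, so it is admissible in the supremum defining $\textup{D}_{\textup{TV}}$.

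First I would observe that, because $\ell(\hat{y}, y) = \mathbf{1}(\hat{y} \neq y)$, the expectation of the loss under any distribution $\mathcal{D}$ is exactly the mass assigned to $A$, namely $\mathbb{E}_{\xx \sim \mathcal{D}}[\ell(\mphi(\xx), \mphi^{\prime}(\xx))] = \textup{Pr}_{\mathcal{D}}(A)$ for each $\mathcal{D} \in \{\mathcal{D}_S, \mathcal{D}_T\}$. Thus the claimed inequality is equivalent to $\textup{Pr}_{\mathcal{D}_T}(A) \leq \textup{Pr}_{\mathcal{D}_S}(A) + \textup{D}_{\textup{TV}}(\mathcal{D}_S, \mathcal{D}_T)$.

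Next I would apply the definition of total variation distance. Since $A$ is one particular event, the supremum over all events dominates this single term, giving $|\textup{Pr}_{\mathcal{D}_S}(A) - \textup{Pr}_{\mathcal{D}_T}(A)| \leq \textup{D}_{\textup{TV}}(\mathcal{D}_S, \mathcal{D}_T)$. Retaining only the signed difference $\textup{Pr}_{\mathcal{D}_T}(A) - \textup{Pr}_{\mathcal{D}_S}(A) \leq |\textup{Pr}_{\mathcal{D}_S}(A) - \textup{Pr}_{\mathcal{D}_T}(A)|$ and rearranging yields exactly the desired bound.

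There is no substantial obstacle here: the argument is essentially a one-line consequence of the definition once the $0/1$ loss is identified with the probability of the disagreement event. The only point requiring a moment of care is confirming that $A$ is a legitimate event in $\mathcal{F}$ so that it participates in the supremum defining $\textup{D}_{\textup{TV}}$, which is immediate from $\mphi, \mphi^{\prime}$ being $\{0,1\}$-valued hypotheses. I would also note in passing that the symmetry of $\textup{D}_{\textup{TV}}$ means the ordering of $\mathcal{D}_S$ and $\mathcal{D}_T$ in the bound is irrelevant.
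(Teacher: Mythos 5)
Your proof is correct, but it takes a more elementary route than the paper. You identify $\mathbb{E}_{\xx \sim \mathcal{D}}\left[\ell(\mphi(\xx),\mphi^{\prime}(\xx))\right]$ with $\textup{Pr}_{\mathcal{D}}(A)$ for the single disagreement event $A=\{\xx : \mphi(\xx)\neq\mphi^{\prime}(\xx)\}$ and then apply the definition of $\textup{D}_{\textup{TV}}$ to that one event, which settles the claim in one step. The paper instead routes the argument through the domain-adaptation machinery of Ben-David et al.: it first bounds $\epsilon_T(\mphi,\mphi^{\prime}) \leq \epsilon_S(\mphi,\mphi^{\prime}) + \left|\epsilon_T(\mphi,\mphi^{\prime})-\epsilon_S(\mphi,\mphi^{\prime})\right|$, then invokes a lemma controlling $\left|\epsilon_S-\epsilon_T\right|$ uniformly over all pairs in $\mathcal{H}$ by $\frac{1}{2} d_{\mathcal{H} \Delta \mathcal{H}}(\mathcal{D}_S, \mathcal{D}_T)$ (observing that disagreement sets are exactly the indicator sets of the symmetric-difference class $\mathcal{H}\Delta\mathcal{H}$), and finally bounds $\frac{1}{2} d_{\mathcal{H}\Delta\mathcal{H}}$ by $\textup{D}_{\textup{TV}}$. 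The two arguments rest on the same underlying fact --- the mass of a single event changes by at most $\textup{D}_{\textup{TV}}$ when switching distributions --- but your version is shorter and self-contained, while the paper's intermediate $\mathcal{H}\Delta\mathcal{H}$-divergence bound is uniform over the hypothesis class, which is the standard device in domain adaptation when the divergence must be estimated from samples or when uniformity over a class is later needed; for the fixed pair $\mphi,\mphi^{\prime}$ of this proposition that generality is not required. The only point to make explicit in your write-up is that $A \in \mathcal{F}$ requires the (standard, implicit) measurability of the hypotheses; this is assumed by the paper as well, so it is not a gap.
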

\begin{proof}
    using the lemmas above,
    $$
        \begin{aligned}
            \epsilon_T(\mphi,\mphi^{\prime})
             & \leq \epsilon_S\left(\mphi,\mphi^{\prime}\right)+\left|\epsilon_T\left(\mphi,\mphi^{\prime}\right)-\epsilon_S\left(\mphi,\mphi^{\prime}\right)\right| \\
             & \leq \epsilon_S\left(\mphi,\mphi^{\prime}\right)+\frac{1}{2} d_{\mathcal{H} \Delta \mathcal{H}}\left(\mathcal{D}_S, \mathcal{D}_T\right)              \\
             & \leq \epsilon_S\left(\mphi,\mphi^{\prime}\right)+\textup{D}_{\textup{TV}}(\mathcal D_S, \mathcal D_T)                                                 \\
        \end{aligned}
    $$
\end{proof}
To obtain some useful inequalities of generalization bound, we need to introduce the Rademacher complexity.
\begin{definition}[Rademacher complexity of a function class]
    Given a sample of points \( S = \{ z_1, z_2, \ldots, z_m \} \subset Z \), and considering a function class \( \mathcal{F} \) of real-valued functions over \( Z \), the empirical Rademacher complexity of \( \mathcal{F} \) given \( S \) is defined as:

    $$
        \mathfrak{R}_S(\mathcal{F}) = \frac{1}{m} \mathbb{E}_{\sigma} \left[ \sup_{f \in \mathcal{F}} \sum_{i=1}^{m} \sigma_i f(z_i) \right]
    $$

    where \( \sigma_i \) are independent and identically distributed Rademacher random variables.
    In other words, for \( i = 1, 2, \ldots, m \), the probability that \( \sigma_i = +1 \) is equal to the probability that \( \sigma_i = -1 \), and both are \( \frac{1}{2} \).
    Further, let $P$ be a probability distribution over $Z$. The Rademacher complexity of the function class $\mathcal{F}$ with respect to $P$ for sample size $m$ is:
    $$
        \mathfrak{R}_{P, m}(\mathcal{F}):=\mathbb{E}_{S \sim P^m}\left[\mathfrak{R}_S(\mathcal{F})\right]
    $$
\end{definition}

\begin{lemma}[Generalization bound with Rademacher complexity]
    Let \( \mathcal{F} \) be a family of loss functions \( \mathcal{F} = \{ (x, y) \mapsto \ell((x, y), h) : h \in \mathcal{H} \} \) with \( \ell((x, y), h) \in [0,1] \) for all \( \ell, (x, y) \) and \( h \). Then, with probability \( 1 - \delta \), the generalization gap is

    \[
        {L}(h) - \hat{L}(h) \leq 2\mathfrak{R}_{U}(\mathcal{F}) + 3\sqrt{\frac{\log(2/\delta)}{2n}} \, ,
    \]

    for all \( h \in \mathcal{H} \) and samples \( U \)  of size \( n \).
\end{lemma}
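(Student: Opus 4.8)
The plan is to follow the classical uniform-convergence argument through McDiarmid's bounded-differences inequality together with a symmetrization step, which reproduces exactly the stated constants. Writing $L(h)=\mathbb{E}[\ell((x,y),h)]$ and $\hat{L}(h)=\frac{1}{n}\sum_{i=1}^{n}\ell((x_i,y_i),h)$ for a sample $U=\{z_1,\dots,z_n\}$, I would first define the sample-dependent quantity
\[
\Phi(U) := \sup_{h \in \mathcal{H}} \left( L(h) - \hat{L}(h) \right) \,,
\]
so that bounding $\Phi(U)$ controls the generalization gap uniformly over $\mathcal{H}$ and in particular for the specific $h$ of interest.

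First I would verify the bounded-differences property of $\Phi$: because every loss value lies in $[0,1]$, replacing a single point $z_i$ by $z_i'$ changes $\hat{L}(h)$ by at most $1/n$ for each fixed $h$, hence moves the supremum $\Phi$ by at most $1/n$. McDiarmid's inequality then yields, with probability at least $1-\delta/2$,
\[
\Phi(U) \leq \mathbb{E}_U[\Phi(U)] + \sqrt{\frac{\log(2/\delta)}{2n}} \,.
\]
The central step is to control $\mathbb{E}_U[\Phi(U)]$ by the Rademacher complexity via symmetrization. I would introduce a ghost sample $U'=\{z_1',\dots,z_n'\}$ drawn i.i.d.\ from the same distribution, use $L(h)=\mathbb{E}_{U'}[\hat{L}_{U'}(h)]$, move the expectation outside the supremum with Jensen's inequality, and then exploit that swapping $z_i \leftrightarrow z_i'$ leaves the joint law invariant in order to insert Rademacher signs $\sigma_i$. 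This gives $\mathbb{E}_U[\Phi(U)] \leq 2\,\mathfrak{R}_{P,n}(\mathcal{F})$, the expected Rademacher complexity.

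Finally I would apply McDiarmid a second time, now to $\mathfrak{R}_U(\mathcal{F})$ as a function of the sample (which again has bounded differences of order $1/n$), obtaining with probability at least $1-\delta/2$ that $\mathfrak{R}_{P,n}(\mathcal{F}) \leq \mathfrak{R}_U(\mathcal{F}) + \sqrt{\log(2/\delta)/(2n)}$. A union bound over the two failure events, each of probability $\delta/2$, combined with the factor $2$ multiplying the Rademacher term, assembles the two concentration contributions into $\sqrt{\log(2/\delta)/(2n)} + 2\sqrt{\log(2/\delta)/(2n)} = 3\sqrt{\log(2/\delta)/(2n)}$, which is precisely the claimed bound. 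I expect the symmetrization step to be the main obstacle, since it is the only genuinely probabilistic idea in the argument — justifying the ghost-sample coupling and the insertion of the sign flips $\sigma_i$ without altering the expectation — whereas the bounded-differences checks and the two invocations of McDiarmid are mechanical once $\Phi$ and the coupling are set up.
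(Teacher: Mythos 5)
Your proposal is correct and is precisely the classical argument the paper itself defers to: the paper offers no proof of this lemma, stating only that it ``could be found in most machine learning textbooks,'' and your route---McDiarmid on $\Phi(U)=\sup_{h}\bigl(L(h)-\hat{L}(h)\bigr)$, ghost-sample symmetrization giving $\mathbb{E}[\Phi]\le 2\mathfrak{R}_{P,n}(\mathcal{F})$, a second McDiarmid application to pass from expected to empirical Rademacher complexity, and a union bound assembling $1+2$ concentration terms into the constant $3$---is exactly that standard textbook proof with the correct constants. Nothing further is needed.
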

The proof of this classical result could be found in most machine learning textbooks, like \cite{fml2018}. Since $\ell(\cdot)$ is $1$-Lipschitz, we can derive that $\mathfrak{R}_{U_S}(\ell \circ \mathcal{H}) \leq \mathfrak{R}_{U_S}(\mathcal{H})$.

Theorem~\ref{thm:gen_bound} involves the representation distance, below we prove the triangle inequality for representation distance.
\begin{lemma}[Triangle inequality for Representation distance]
    for any functions \( \phi_S, \phi_T, \phi_U \) and distribution \( D \), we have:
    \[ \textup{D}_{\textup{Rep}}(\phi_S \to \phi_T; D) \leq \textup{D}_{\textup{Rep}}(\phi_S \to \phi_U; D) + \textup{D}_{\textup{Rep}}(\phi_U \to \phi_T; D) \]
\end{lemma}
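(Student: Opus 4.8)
The plan is to exploit two facts: the composition of two affine maps is again affine, and the discrete (0-1) loss $\ell(\hat y,y)=\mathbf{1}(\hat y\neq y)$ satisfies a pointwise triangle inequality. Since the infima in the definition of $\textup{D}_{\textup{Rep}}$ need not be attained, I would first fix $\epsilon>0$ and select $\epsilon$-near-optimal affine maps: $(\mW_1,\bb_1)$ with $\mathbb{E}_{\xx\sim D}[\ell(\mW_1\phi_S(\xx)+\bb_1,\phi_U(\xx))]\leq \textup{D}_{\textup{Rep}}(\phi_S\to\phi_U;D)+\epsilon$, and $(\mW_2,\bb_2)$ with $\mathbb{E}_{\xx\sim D}[\ell(\mW_2\phi_U(\xx)+\bb_2,\phi_T(\xx))]\leq \textup{D}_{\textup{Rep}}(\phi_U\to\phi_T;D)+\epsilon$.

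Next I would form the composite map $\mW:=\mW_2\mW_1$ and $\bb:=\mW_2\bb_1+\bb_2$, for which $\mW\phi_S(\xx)+\bb=\mW_2(\mW_1\phi_S(\xx)+\bb_1)+\bb_2$; note the dimensions compose correctly, so $(\mW,\bb)$ is an admissible candidate in the infimum defining $\textup{D}_{\textup{Rep}}(\phi_S\to\phi_T;D)$. Consequently the left-hand side of the claimed inequality is at most $\mathbb{E}_{\xx\sim D}[\ell(\mW\phi_S(\xx)+\bb,\phi_T(\xx))]$, and it remains to bound this expectation.

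The crux is a pointwise claim: for each $\xx$, if $\mW_2(\mW_1\phi_S(\xx)+\bb_1)+\bb_2\neq\phi_T(\xx)$, then at least one of (i) $\mW_1\phi_S(\xx)+\bb_1\neq\phi_U(\xx)$ or (ii) $\mW_2\phi_U(\xx)+\bb_2\neq\phi_T(\xx)$ must hold; for if both failed, substituting $\mW_1\phi_S(\xx)+\bb_1=\phi_U(\xx)$ into the composite would give exactly $\phi_T(\xx)$, a contradiction. This yields the indicator inequality
\[
\mathbf{1}\bigl(\mW\phi_S(\xx)+\bb\neq\phi_T(\xx)\bigr)\leq \mathbf{1}\bigl(\mW_1\phi_S(\xx)+\bb_1\neq\phi_U(\xx)\bigr)+\mathbf{1}\bigl(\mW_2\phi_U(\xx)+\bb_2\neq\phi_T(\xx)\bigr).
\]

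Finally I would take $\mathbb{E}_{\xx\sim D}$ of this pointwise inequality, substitute the two $\epsilon$-near-optimal bounds, and let $\epsilon\to 0$ to obtain $\textup{D}_{\textup{Rep}}(\phi_S\to\phi_T;D)\leq \textup{D}_{\textup{Rep}}(\phi_S\to\phi_U;D)+\textup{D}_{\textup{Rep}}(\phi_U\to\phi_T;D)$. The algebraic composition and the discrete-metric triangle inequality are routine; the only delicate point is the bookkeeping around the possibly-unattained infimum, which is cleanly handled by the $\epsilon$-argument, so I do not anticipate a serious obstacle.
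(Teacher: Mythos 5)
Your proof is correct and follows essentially the same route as the paper's: compose the two (near-)optimal affine maps, note the composite is again affine and hence admissible, and use the pointwise fact that the 0--1 loss of the composition is dominated by the sum of the two intermediate 0--1 losses. Your explicit $\epsilon$-near-optimizer bookkeeping is a slightly more careful treatment of the possibly-unattained infima than the paper's direct ``take the infimum over $W_1,b_1,W_2,b_2$'' step, but the argument is the same in substance.
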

\begin{proof}
    Let's denote:
    \[ d_1 = \textup{D}_{\textup{Rep}}(\phi_S \to \phi_U; D) \]
    \[ d_2 = \textup{D}_{\textup{Rep}}(\phi_U \to \phi_T; D) \]

    By definition:
    \[ d_1 = \inf_{W_1 \in \mathbb{R}^{m \times n}, b_1 \in \mathbb{R}^m} \mathbb{E}_{x \sim D} \ell(W_1\phi_S(x) + b_1, \phi_U(x)) \]
    \[ d_2 = \inf_{W_2 \in \mathbb{R}^{m \times n}, b_2 \in \mathbb{R}^m} \mathbb{E}_{x \sim D} \ell(W_2\phi_U(x) + b_2, \phi_T(x)) \]

    We need to show:
    \[ \textup{D}_{\textup{Rep}}(\phi_S \to \phi_T; D) \leq d_1 + d_2 \]

    Consider the composition of the transformations:
    \[ \phi_S(x) \xrightarrow{W_1, b_1} \phi_U(x) \xrightarrow{W_2, b_2} \phi_T(x) \]

    The combined transformation can be written as:
    \[ W_2(W_1\phi_S(x) + b_1) + b_2 = W_2W_1\phi_S(x) + W_2b_1 + b_2 \]

    Using the properties of the loss function \( \ell \), we can write:
    \[ \ell(W_2W_1\phi_S(x) + W_2b_1 + b_2, \phi_T(x)) \leq \ell(W_1\phi_S(x) + b_1, \phi_U(x)) + \ell(W_2\phi_U(x) + b_2, \phi_T(x)) \]
    This inequality holds for the reason that it can only break when the two items at right-hand side are both 0, in which the left side is also 0 due to rule of composition.
    Thus the inequality holds for all \( x \) and \( W_1, b_1, W_2, b_2 \).

    Taking expectations over \( x \sim D \):
    \[ \mathbb{E}_{x \sim D} \ell(W_2W_1\phi_S(x) + W_2b_1 + b_2, \phi_T(x)) \leq \mathbb{E}_{x \sim D} \ell(W_1\phi_S(x) + b_1, \phi_U(x)) + \mathbb{E}_{x \sim D} \ell(W_2\phi_U(x) + b_2, \phi_T(x)) \]

    Taking the infimum over \( W_1, b_1 \) and \( W_2, b_2 \):
    \[ \inf_{W_1, b_1, W_2, b_2} \mathbb{E}_{x \sim D} \ell(W_2W_1\phi_S(x) + W_2b_1 + b_2, \phi_T(x)) \leq d_1 + d_2 \]

    Since the left-hand side is an upper bound for \( \textup{D}_{\textup{Rep}}(\phi_S \to \phi_T; D) \), we have:
    \[ \textup{D}_{\textup{Rep}}(\phi_S \to \phi_T; D) \leq d_1 + d_2 \]
    Therefore, the triangle inequality holds for the metric \( \textup{D}_{\textup{Rep}} \).
\end{proof}

Now we are ready to prove Theorem~\ref{thm:gen_bound}.
\begin{proof}
    For arbitrary \( W, b \), with probability \( 1 - \delta \):
    \[\textup{D}_{\textup{Rep}}(\mphi_{\mtheta} \to \mphi^\star;X) \leq \epsilon_X(W \mphi_{\mtheta} + b, \mphi^\star) \leq \epsilon_{D_X}(W \mphi_{\mtheta} + b, \mphi^\star)+2\mathfrak{R}_{D_X}(\Phi) + 3\sqrt{\frac{\log(2/\delta)}{2\abs{D_X}}}\]

    Taking the infimum over \( W, b \):
    \[\textup{D}_{\textup{Rep}}(\mphi_{\mtheta} \to \mphi^\star;X) \leq \textup{D}_{\textup{Rep}}(\mphi_{\mtheta} \to \mphi^\star;D_X)+2\mathfrak{R}_{D_X}(\Phi) + 3\sqrt{\frac{\log(2/\delta)}{2\abs{D_X}}}\]

    Using the triangle inequality for representation distance:
    \[\textup{D}_{\textup{Rep}}(\mphi_{\mtheta} \to \mphi^\star;D_X) \leq \textup{D}_{\textup{Rep}}(\mphi_{\mtheta} \to \psi;D_X) + \textup{D}_{\textup{Rep}}(\psi \to \mphi^{\star};D_X)\]

    For arbitrary \( W, b \), using the proposition above,
    \[\textup{D}_{\textup{Rep}}(\mphi_{\mtheta} \to \psi;D_X) \leq \epsilon_{D_X}(W \mphi_{\mtheta} + b, \psi)  \leq \epsilon_{S_X}(W \mphi_{\mtheta} + b, \psi) + \textup{D}_{\textup{TV}}(S_X, D_X)\]

    Taking the infimum over \( W, b \):
    \[\textup{D}_{\textup{Rep}}(\mphi_{\mtheta} \to \psi;D_X) \leq \textup{D}_{\textup{Rep}}(\mphi_{\mtheta} \to \psi;S_X) + \textup{D}_{\textup{TV}}(S_X, D_X)\]

    Combining the above results, we have:
    \begin{equation*}
        \begin{aligned}
            \textup{D}_{\textup{Rep}}(\mphi_{\mtheta} \to \mphi^\star;X) & \leq \textup{D}_{\textup{Rep}}(\psi \to \mphi^{\star};D_X) + \textup{D}_{\textup{Rep}}(\mphi_{\mtheta} \to \psi;S_X)  +  2\mathfrak{R}_{D_X}(\Phi) + 3\sqrt{\frac{\log(2/\delta)}{2\abs{D_X}}} \\ & + \textup{D}_{\textup{TV}}(S_X, D_X)
        \end{aligned}
    \end{equation*}
\end{proof}

\textbf{Impact of initialization}
In this part we leverage the Neural Tangent Kernel (NTK) framework \cite{jacot2018neural} to deduce that a neural network initialized closer to a target function \( f^\star \) converges faster during training.
Under the NTK regime, neural networks exhibit linearized training dynamics, allowing us to predict how the network's output evolves during training.

In the context of supervised learning, consider a neural network with parameters \( \theta \) and output function \( f(x; \theta) \), where \( x \) is the input. The goal is to approximate a target function \( f^\star(x) \) by minimizing a loss function, typically the mean squared error (MSE):

\[
    L(\theta) = \frac{1}{2} \sum_{i=1}^n (f(x_i; \theta) - y_i)^2
\]

where \( \{(x_i, y_i)\}_{i=1}^n \) is the training data and \( y_i = f^\star(x_i) \).

Under gradient descent with learning rate \( \eta \), the parameter updates are:

\[
    \theta_{t+1} = \theta_t - \eta \nabla_\theta L(\theta_t)
\]

In the NTK regime, where the network width tends to infinity, the network's output evolves linearly with respect to the parameters around initialization \( \theta_0 \) \cite{Lee_2020}.
We can approximate:

\[
    f(x; \theta) \approx f(x; \theta_0) + \nabla_\theta f(x; \theta_0)^\top (\theta - \theta_0)
\]

This linearization allows us to express the evolution of the network's output as:

\[
    f_t(x) = f_0(x) - \eta \sum_{s=0}^{t-1} \sum_{i=1}^n K(x, x_i) (f_s(x_i) - y_i)
\]

where \( f_t(x) = f(x; \theta_t) \) is the network output at time \( t \) and \( K(x, x') = \nabla_\theta f(x; \theta_0)^\top \nabla_\theta f(x'; \theta_0) \) is the Neural Tangent Kernel.

In continuous time (gradient flow), the training dynamics can be described by a differential equation:

\[
    \frac{df_t(x)}{dt} = -\sum_{i=1}^n K(x, x_i) (f_t(x_i) - y_i)
\]

Vectorizing over all training inputs, we have:

\[
    \frac{d\mathbf{f}_t}{dt} = -\mathbf{K} (\mathbf{f}_t - \mathbf{y})
\]

where \( \mathbf{f}_t = [f_t(x_1), f_t(x_2), \dots, f_t(x_n)]^\top \),\( \mathbf{y} = [y_1, y_2, \dots, y_n]^\top \)
and \( \mathbf{K} \) is the NTK matrix with entries \( K_{ij} = K(x_i, x_j) \).
This differential equation has the solution:

\[
    \mathbf{f}_t - \mathbf{y} = e^{-\mathbf{K} t} (\mathbf{f}_0 - \mathbf{y})
\]

This equation shows that the error \( \mathbf{f}_t - \mathbf{y} \) decays exponentially over time, with the rate governed by the NTK matrix \( \mathbf{K} \).
Since \( \mathbf{K} \) is symmetric, we can decompose \( \mathbf{K} \) into its eigenvalues \( \{\lambda_j\} \) and corresponding eigenvectors \( \{\mathbf{v}_j\} \):

\[
    \mathbf{K} = \sum_{j} \lambda_j \mathbf{v}_j \mathbf{v}_j^\top
\]

the solution becomes:

\[
    \mathbf{f}_t - \mathbf{y} = \sum_{j} e^{-\lambda_j t} c_j \mathbf{v}_j
\]

where \( c_j = \mathbf{v}_j^\top (\mathbf{f}_0 - \mathbf{y}) \). The rate of convergence for each component of the error is proportional to the corresponding eigenvalue \( \lambda_j \) of the NTK. Larger eigenvalues lead to faster decay.
Initial error matters: the initial error \( \mathbf{f}_0 - \mathbf{y} \) scales the amplitude of the exponential terms.
A smaller initial error means the network starts closer to the target function, reducing the time required for the error to decay to a specific threshold.

Suppose we want the error to be less than \( \epsilon \):

\[
    \| \mathbf{f}_t - \mathbf{y} \| \leq \epsilon
\]

Using the solution:

\[
    \| \mathbf{f}_t - \mathbf{y} \| \leq \| e^{-\lambda_{\text{min}} t} \| \| \mathbf{f}_0 - \mathbf{y} \|
\]

where \( \lambda_{\text{min}} \) is the smallest (positive) eigenvalue of \( \mathbf{K} \). Solving for time \( t \):

\[
    t \geq \frac{1}{\lambda_{\text{min}}} \ln\left( \frac{\| \mathbf{f}_0 - \mathbf{y} \|}{\epsilon} \right)
\]

By leveraging the NTK framework, we've shown that a better initialization(meaning the neural network's initial output function is closer to the target function \( f^\star \))
leads to faster convergence during training. This is because the training dynamics under NTK are linear, and the exponential error decay is directly influenced by the magnitude of the initial error.
Analysis above implies that if we can train a network that is closer to the target function, it will converge faster to the target function.

\subsection{Relation between data distribution and Rademacher complexity}
Now let's look at the Rademacher complexity appeared above more carefully.
Let 1-Lipschitz positive homogeneous activation $\sigma_i$ be given, and
$$
    \mathcal{H}:=\left\{\xx \mapsto \sigma_L\left(W_L \sigma_{L-1}\left(\cdots \sigma_1\left(W_1 \xx\right) \cdots\right)\right):\left\|W_i\right\|_{\mathrm{F}} \leq B, \xx \in \mathbb{R}^d\right\} \text {. }
$$

Then using Theorem 1 in \cite{golowich2019sizeindependent},
for samples $S$ of size $m$ we have bound for the empirical Rademacher complexity:
$$
    \mathfrak{R}_{S}(\mathcal{H}) \leq \frac{1+\sqrt{2 L \ln 2}}{m}B^L\|X\|_{\mathrm{F}}
$$
where $X \in \mathbb{R}^{d \times m}$ is the input data matrix, and $\|\cdot\|_{\mathrm{F}}$ denotes the Frobenius norm.

If we further assume that the data are drawn from the distribution $G$ (stated in Section~\ref{sec:case_study})
with covariance $\Sigma$ (for simplicity let $\mu_1=-\mu,\mu_2=\mu$),
then we can bound the Rademacher complexity of $\mathcal{H}$ with respect to $G$:
$$
    \begin{aligned}
        \mathfrak{R}_{G, m}(\mathcal{H})
         & =\mathbb{E}_{S \sim G^m}\left[\mathfrak{R}_S(\mathcal{H})\right]
        \\&\leq \frac{1+\sqrt{2 L \ln 2}}{m}B^L\;\mathbb{E}_{S \sim G^m}\|X\|_{\mathrm{F}}
        \\&= \frac{\sqrt{2}+2\sqrt{ L \ln 2}}{m}B^L\cdot \Sigma\; \Gamma \left(\frac{1+dm}{2} \right) \, M\left(-\frac{1}{2},\frac{d m}{2},-\frac{d m \mu ^2}{2 \Sigma ^2}\right) \slash \Gamma\left(\frac{dm}{2}\right)
    \end{aligned}
$$
The right part of the last inequality is an increasing function with respect to $\Sigma$,
and $\Gamma(\cdot)$ denotes the gamma function,
$M(\cdot,\cdot,\cdot)$ is the Kummer's confluent hypergeometric function, given by:
$$
    M(a, b, z)=\sum_{n=0}^{\infty} \frac{a^{(n)} z^n}{b^{(n)} n!}={ }_1 F_1(a ; b ; z),
$$
where:
$$
    \begin{aligned}
         & a^{(0)}=1,                         \\
         & a^{(n)}=a(a+1)(a+2) \cdots(a+n-1),
    \end{aligned}
$$
is the rising factorial.
\\ \textbf{Remark.} Usually, the Rademacher complexity $\mathfrak{R}_{S}(\ell \circ \mathcal{F})$
could be bounded by  of $\mathfrak{R}_{S}(\mathcal{F})$.
For example, if $\ell$ is $L$-lipschitz, then $\mathfrak{R}_{S}(\ell \circ \mathcal{F}) \leq L \cdot \mathfrak{R}_{S}(\mathcal{F})$.
That's why we directly compute the Rademacher complexity of $\mathcal{H}$
instead of $\ell \circ \mathcal{H}$.

\section{Explanation of Rescaling Samples}
\label{sec:analy_rescale_samples}

Dataset distillation seeks to create a condensed dataset that allows models to achieve performance comparable to those trained on the full dataset, but with fewer training steps. In this section, we will demonstrate that rescaling the variance of Gaussian distributions, as defined in Definition~\ref{def:mix_gaus}, does not affect the optimal performance of models trained on these rescaled data.

\begin{proof}
    Consider two Gaussian distributions \(\mathcal{N}_0(\mu_1, \sigma^2 \mathbf{I})\) and \(\mathcal{N}_1(\mu_2, \sigma^2 \mathbf{I})\) with means \(\mu_1 = 1\) and \(\mu_2 = 2\), and variance \(\sigma^2\). We define the bimodal mixture distribution \(G = (G_X, G_Y)\) such that \((\mathbf{x}, y)\) is sampled according to:
    \[
        \mathbf{x} = (1 - y) \cdot \mathbf{x}_0 + y \cdot \mathbf{x}_1, \quad \text{with} \quad y \sim \text{Bernoulli}(0.5), \quad \mathbf{x}_0 \sim \mathcal{N}_0, \quad \mathbf{x}_1 \sim \mathcal{N}_1.
    \]
    The decision rule for optimal classification is determined by the likelihood ratio test. For a given sample \(\mathbf{x}\), the log-likelihood ratio \(\Lambda(\mathbf{x})\) is given by:
    \[
        \Lambda(\mathbf{x}) = \log \frac{P(\mathbf{x} \mid y = 1)}{P(\mathbf{x} \mid y = 0)}.
    \]
    Since \(\mathbf{x}_0\) and \(\mathbf{x}_1\) are drawn from Gaussian distributions, their probability density functions are:
    \[
        P(\mathbf{x} \mid y = 0) = \frac{1}{(2\pi \sigma^2)^{d/2}} \exp \left( -\frac{1}{2\sigma^2} \|\mathbf{x} - \mu_1\|^2 \right),
    \]
    \[
        P(\mathbf{x} \mid y = 1) = \frac{1}{(2\pi \sigma^2)^{d/2}} \exp \left( -\frac{1}{2\sigma^2} \|\mathbf{x} - \mu_2\|^2 \right).
    \]
    Substituting these into the log-likelihood ratio, we have:
    \[
        \Lambda(\mathbf{x}) = \log \frac{\frac{1}{(2\pi \sigma^2)^{d/2}} \exp \left( -\frac{1}{2\sigma^2} \|\mathbf{x} - \mu_2\|^2 \right)}{\frac{1}{(2\pi \sigma^2)^{d/2}} \exp \left( -\frac{1}{2\sigma^2} \|\mathbf{x} - \mu_1\|^2 \right)}.
    \]
    Simplifying, we obtain:
    \[
        \Lambda(\mathbf{x}) = -\frac{1}{2\sigma^2} \|\mathbf{x} - \mu_2\|^2 + \frac{1}{2\sigma^2} \|\mathbf{x} - \mu_1\|^2.
    \]
    Further simplification gives:
    \[
        \Lambda(\mathbf{x}) = \frac{1}{2\sigma^2} \left( \|\mathbf{x} - \mu_1\|^2 - \|\mathbf{x} - \mu_2\|^2 \right).
    \]
    Since the optimal decision threshold for balanced classes (i.e., \(P(y=1) = P(y=0) = 0.5\)) is \(\Lambda(\mathbf{x}) = 0\), we set:
    \[
        \|\mathbf{x} - \mu_1\|^2 - \|\mathbf{x} - \mu_2\|^2 = 0.
    \]
    Expanding and rearranging, we derive:
    \[
        \|\mathbf{x}\|^2 - 2\mu_1 \mathbf{x} + \mu_1^2 - \|\mathbf{x}\|^2 + 2\mu_2 \mathbf{x} - \mu_2^2 = 0.
    \]
    This simplifies to:
    \[
        2(\mu_2 - \mu_1) \mathbf{x} + (\mu_1^2 - \mu_2^2) = 0,
    \]
    \[
        2(\mu_2 - \mu_1) \mathbf{x} = \mu_2^2 - \mu_1^2,
    \]
    \[
        \mathbf{x} = \frac{\mu_2^2 - \mu_1^2}{2(\mu_2 - \mu_1)}.
    \]
    Solving yields:
    \[
        \mathbf{x} = \frac{\mu_2 + \mu_1}{2}.
    \]
    Therefore, the optimal decision boundary is \(x = 1.5\), which is independent of the variance \(\sigma^2\). This completes the proof.
    \label{pro:two_gauss_perf}
\end{proof}

Regarding efficiency, utilizing scaled data to train similar models with fewer training steps is proven in Section~\ref{sec:proof_sample_rate}. Additionally, rescaling each Gaussian distribution preserves their means, which aligns with the objectives of conventional distribution matching-based dataset distillation methods. These methods aim to distill data while maintaining the distributional properties, specifically their means.

\section{Detailed Methodology of \algopt-D}
\label{sec:detailed_rela_d}

Recall that $\mphi_{\mtheta}: \R^d \rightarrow \R^n$.
We then introduce a transport matrix $\mW \in \R^{m \times n}$ and define the combined model as $\mW \mphi_{\mtheta}(\cdot) \in \R^{m}$.
During the training phase, the parameters $\mW$ and $\mtheta$ are jointly optimized.
However, as the dimension $m$ increases, the computational complexity of the optimization grows rapidly.
To address this issue, we propose reducing the dimensionality of the target matrix $R_\mY$ from $m$ to $n$, where $n < m$:
\begin{small}
    \begin{equation} \label{eq:pca_y}
        R_\mY^{\prime} \!=\! V_{n}^\top \left(\frac{R_\mY - \mu}{\sigma}\right) \;\; \textup{s.t.} \;\; \frac{1}{\abs{R_\mY}-1} \left((R_\mY-\mu)^\top (R_\mY-\mu)\right) = V \Lambda V^\top, \; V_n = V[:,:n] \,,
    \end{equation}
\end{small}%
where \( \mu \) and \( \sigma \) denote the mean and standard deviation, respectively, of each column in the \( R_\mY \).
Practically, we use batch PCA to perform the computation shown in \eqref{eq:pca_y}, as illustrated in \ref{sec:batch_pca}.

\subsection{Proof for Ideal Properties of Prior Models}
\label{sec:ideal_prior}

We aim to demonstrate that modern deep learning methods can effectively train models to serve as robust prior models by extracting sufficient information from samples as representations.

Therefore, we poist the existence of a prior model $\xi$ capable of losslessly extracting the information of samples $D_\cX$ when trained using the InfoNCE loss~\cite{oord2018representation}, a method prevalently employed in contemporary deep learning algorithms~\cite{chen2020simple}.

\begin{proof}
    To demonstrate that an encoder $\xi$ trained with the InfoNCE loss preserves all information from the input data $D_\mathcal{X}$, we proceed as follows.

    \textbf{1. Definitions and Setup}

    Let $\mathcal{X}$ denote the input data space with data distribution $p_{\text{data}}(x)$.
    Let $p_{\text{pos}}(x, x^+)$ denote the distribution of positive pairs, typically generated via data augmentation.
    The encoder $\xi: \mathcal{X} \to \mathbb{R}^d$ maps inputs to $d$-dimensional representations.
    The similarity function $q: \mathbb{R}^d \times \mathbb{R}^d \to \mathbb{R}$ (e.g., dot product) measures similarity between representations.
    Given a positive pair $(x, x^+)$, let $\{x^-_i\}_{i=1}^N$ be $N$ negative samples drawn i.i.d. from $p_{\text{data}}(x)$.
    The InfoNCE loss is defined as:
    \[
        \mathcal{L}_{\text{InfoNCE}}(\xi, q) = -\mathbb{E}_{(x, x^+) \sim p_{\text{pos}}} \left[ \log \frac{e^{q(\xi(x), \xi(x^+))/\tau}}{\sum_{i=1}^N e^{q(\xi(x), \xi(x^-_i))/\tau}} \right]
    \]
    where $\tau > 0$ is a temperature parameter.

    \textbf{2. InfoNCE as a Mutual Information Lower Bound}

    The InfoNCE objective serves as a lower bound to the mutual information between representations of positive pairs:
    \[
        I(\xi(X); \xi(X^+)) \geq \mathcal{J}(\xi, q)
    \]
    where
    \[
        \mathcal{J}(\xi, q) = \mathbb{E}_{(x, x^+) \sim p_{\text{pos}}} \left[ q(\xi(x), \xi(x^+)) - \tau \cdot \log \left(\sum_{i=1}^N e^{q(\xi(x), \xi(x^-_i))/\tau}\right) \right]
    \]
    As the number of negative samples $N \to \infty$, this bound becomes tight, approaching the true mutual information $I(\xi(X); \xi(X^+))$.
    \textbf{3. Optimal Encoder and Similarity Function}
    Let $(\xi^*, q^*)$ denote the optimal encoder and similarity function that maximize $\mathcal{J}(\xi, q)$. Under the assumption of infinite negative samples and an expressive similarity function, the optimal similarity satisfies:
    \[
        q^*(\xi^*(x), \xi^*(x^+)) = \log \frac{p_{\text{pos}}(x, x^+)}{p_{\text{data}}(x)p_{\text{data}}(x^+)} + C
    \]
    where $C$ is a constant independent of $(x, x^+)$. This aligns $q^*$ with the pointwise mutual information (PMI) between $x$ and $x^+$.
    \textbf{4. Injectivity through Mutual Information Maximization}
    Maximizing $I(\xi(X); \xi(X^+))$ encourages the encoder $\xi$ to capture as much information about $X$ as possible. To ensure injectivity:
    \begin{itemize}
        \item \textbf{Sufficient Dimensionality:} The representation dimension $d$ must be at least as large as the intrinsic dimensionality of $\mathcal{X}$.
        \item \textbf{Expressive Architecture:} The encoder $\xi$ should be sufficiently expressive, potentially utilizing architectural constraints (e.g., invertible networks) to promote injectivity.
    \end{itemize}
    Under these conditions, maximizing mutual information implies that $\xi^*$ approximates an injective mapping on the support of $p_{\text{data}}(x)$, i.e., $\xi^*(x) = \xi^*(x') \Rightarrow x = x'$ almost surely.
    \textbf{5. Existence of the Inverse Mapping}
    Given that $\xi^*$ is injective, there exists a deterministic inverse mapping $g: \mathbb{R}^d \to \mathcal{X}$ such that:
    \[
        g(\xi^*(x)) = x \quad \text{for all } x \in \mathcal{X}
    \]
    This mapping $g$ can be constructed as the inverse of $\xi^*$ on its image:
    \[
        g(z) = \xi^{*-1}(z) \quad \text{where } z \in \xi^*(\mathcal{X})
    \]
\end{proof}

\section{Analysis of Different Self-Supervised Learning Methods}
\label{sec:analy_diff_ssls}

We refer to the primary theoretical results for existing self-supervised learning methods as presented in~\cite{tao2022exploring}, where all methods are unified under a simple and cohesive framework, detailed below. Specifically, UniGrad~\cite{tao2022exploring} demonstrates that most self-supervised learning methods can be unified by analyzing their gradients.

\subsection{A Unified Framework for SSL}
\label{sec:method1}

\begin{table}[t]
    \centering
    \caption{Notations used in this section.}
    \resizebox{0.58\textwidth}{!}{
        \begin{tabular}{ll}
            \toprule
            Notation                       & \multicolumn{1}{c}{Meaning}                 \\
            \midrule
            $u_1$, $u_2$                   & current concerned samples                   \\
            $v$                            & unspecified samples                         \\
            \midrule
            $u_1^o$, $v^o$                 & samples from online branch                  \\
            $u_2^t$, $v^t$                 & samples from unspecified target branch      \\
            $u_2^s$, $v^s$                 & samples from weight-sharing target branch   \\
            $u_2^d$, $v^d$                 & samples from stop-gradient target branch    \\
            $u_2^m$, $v^m$                 & samples from momentum-encoder target branch \\
            \midrule
            $\mathcal{V}$                  & unspecified sample set                      \\
            $\mathcal{V}_{\mathrm{batch}}$ & sample set of current batch                 \\
            $\mathcal{V}_{\mathrm{bank}}$  & sample set of memory bank                   \\
            $\mathcal{V}_{\infty}$         & sample set of all previous samples          \\
            \bottomrule
        \end{tabular}}
    \label{tab:notations}
    \vspace{-0.5em}
\end{table}

A typical self-supervised learning framework employs a siamese network with two branches: an online branch and a target branch. The target branch serves as the training target for the online branch.

Given an input image $x$, two augmented views $x_1$ and $x_2$ are generated, serving as inputs for the two branches. The encoder $f(\cdot)$ extracts representations $u_i \triangleq f(x_i)$ for $i=1, 2$ from these views.

Table~\ref{tab:notations} details the notations used. $u_1$ and $u_2$ denote the current training samples, while $v$ denotes unspecified samples. $u_1^o$ and $v^o$ are representations from the online branch. Three types of target branches are widely used: 1) weight-sharing with the online branch ($u_2^s$ and $v^s$); 2) weight-sharing but detached from gradient back-propagation ($u_2^d$ and $v^d$); 3) a momentum encoder updated from the online branch ($u_2^m$ and $v^m$). If unspecified, $u_2^t$ and $v^t$ are used. A symmetric loss is applied to the two augmented views, as described in \cite{chen2021exploring}.

$\mathcal{V}$ represents the sample set in the current training step. Methods vary in constructing this set: $\mathcal{V}_{\mathrm{batch}}$ includes all samples from the current batch, $\mathcal{V}_{\mathrm{bank}}$ uses a memory bank storing previous samples, and $\mathcal{V}_{\infty}$ includes all previous samples, potentially larger than a memory bank.

\subsection{Contrastive Learning Methods}
\label{sec:contrastive}
Contrastive learning relies on negative samples to prevent representational collapse and enhance performance. Positive samples are derived from different views of the same image, while negative samples come from other images. The goal is to attract positive pairs and repel negative pairs, typically using the InfoNCE loss~\cite{oord2018representation}:
\begin{equation}
    \label{eq:contrastive_loss}
    L = \mathop{\mathbb{E}}_{u_1,u_2}\bigg[-\log\frac{\exp{(\mathrm{cos}(u_1^o, u_2^t)/\tau)}}{\sum_{v^t\in\mathcal{V}}\exp{(\mathrm{cos}(u_1^o, v^t)/\tau)}}\bigg],
\end{equation}
where $\mathrm{cos}(\cdot)$ denotes cosine similarity, and $\tau$ is the temperature hyper-parameter. This formulation can be adapted for various methods, discussed below.

\textbf{MoCo~\cite{he2020momentum, chen2020improved}.} MoCo uses a momentum encoder for the target branch and a memory bank for storing previous representations. Negative samples are drawn from this memory bank. The gradient for sample $u_1^o$ is:
\begin{equation}
    \label{eq:moco_grad}
    \frac{\partial L}{\partial u_1^o} = \frac{1}{\tau N}\bigg(-u_2^{m} + \sum_{v^{m}\in\mathcal{V}_{\mathrm{bank}}}s_vv^m\bigg),
\end{equation}
where $s_v=\frac{\exp{(\mathrm{cos}(u_1^o, v^m)/\tau)}}{\sum_{y^m\in \mathcal{V}_{\mathrm{bank}}}\exp{(\mathrm{cos}(u_1^o, y^m)/\tau)}}$ and $N$ is the number of samples in the batch.

\textbf{SimCLR~\cite{chen2020simple}.} SimCLR shares weights between the target and online branches and does not stop back-propagation. It uses all representations from other images in the batch as negative samples. The gradient is:
\begin{equation}
    \label{eq:simclr_grad}
    \begin{split}
        \frac{\partial L}{\partial u_1^o} = &\frac{1}{\tau N}\bigg(-u_2^{s} + \sum_{v^s\in \mathcal{V}_{\mathrm{batch}}\setminus u_1^o}s_vv^s\bigg) \\
        &+ \color{blue}{\underbrace{\color{black}{\frac{1}{\tau N}\bigg(-u_2^{s} + \sum_{v^s\in \mathcal{V}_{\mathrm{batch}}\setminus u_1^o}t_vv^s\bigg)}}_{\text{reduce to }0}}\color{black}{,}
    \end{split}
\end{equation}
where $t_v=\frac{\exp{(\mathrm{cos}(v^s, u_1^o)/\tau)}}{\sum_{y^s\in \mathcal{V}_{\mathrm{batch}}\setminus v^s}\exp{(\mathrm{cos}(v^s, y^s)/\tau)}}$. If the gradient through the target branch is stopped, the second term vanishes.

\textbf{Unified Gradient.} The gradient for these methods can be unified as:
\begin{equation}
    \label{eq:contrastive_grad}
    \frac{\partial L}{\partial u_1^o} = \frac{1}{\tau N}\bigg(-u_2^t + \sum_{v^t\in \mathcal{V}}s_vv^t\bigg),
\end{equation}
comprising a weighted sum of positive and negative samples. The term $-u_2^t$ pulls positive samples together, while $\sum_{v^t\in \mathcal{V}}s_vv^t$ pushes negative samples apart. The main difference between methods lies in the target branch used and the construction of the contrastive sample set $\mathcal{V}$.

\subsection{Asymmetric Network Methods}
Asymmetric network methods learn representations by maximizing the similarity of positive pairs without using negative samples. These methods require symmetry-breaking network designs to avoid representational collapse. A predictor $h(\cdot)$ is appended after the online branch, and the gradient to the target branch is stopped. The objective function is:
\begin{equation}
    \label{eq:asymmetric_loss}
    L = \mathop{\mathbb{E}}_{u_1,u_2}\bigg[-\mathrm{cos}(h(u_1^o), u_2^t)\bigg].
\end{equation}

\textbf{Relation to BYOL~\cite{jean-bastien2020bootstrap}.} BYOL uses a momentum encoder for the target branch, i.e., $u_2^t=u_2^m$ in Eq.(\ref{eq:asymmetric_loss}).

\textbf{Relation to Simsiam~\cite{chen2021exploring}.} Simsiam shows that a momentum encoder is unnecessary and only applies the stop-gradient operation to the target branch, i.e., $u_2^t=u_2^d$ in Eq.(\ref{eq:asymmetric_loss}).

\textbf{Unified Gradient.} Despite the performance of asymmetric network methods, the avoidance of collapse solutions is not well understood. DirectPred~\cite{tian2021understanding} explores this by studying training dynamics and proposes an analytical solution for the predictor $h(\cdot)$.

DirectPred formulates the predictor as $h(v)=W_h v$, with $W_h$ calculated based on the correlation matrix $\mathbb{E}_v(v v^T)$. The correlation matrix, $F$, is computed as the moving average for each batch: $F\triangleq \sum_{v^o \in \mathcal{V}_{\infty}} \rho_vv^o {v^o}^T$, where $\rho_v$ is the moving average weight. Decomposing $F$ into eigenvalues $\Lambda_F$ and eigenvectors $U$, $W_h$ is:
\begin{equation}
    \label{eq:directpred}
    W_h = U\Lambda_h U^T, \ \ \Lambda_h = \Lambda_{F}^{1/2}+\epsilon \lambda_{max}I,
\end{equation}
where $\lambda_{max}$ is the max eigenvalue of $F$ and $\epsilon$ is a hyper-parameter to boost small eigenvalues.

DirectPred also derives the gradient:
\begin{equation}
    \label{eq:asymmetric_grad_simplified}
    \frac{\partial L}{\partial u_1^o} = \frac{1}{||W_h u_1^o||_2N}\bigg(-W_h^T u_2^t+\lambda\sum_{v^o \in \mathcal{V}_{\infty}} (\rho_v{u_1^o}^Tv^o)v^o\bigg),
\end{equation}
where $-W_h^T u_2^t$ and $\sum_{v^o \in \mathcal{V}_{\infty}} (\rho_v{u_1^o}^Tv^o)v^o$ act as positive and negative gradients respectively, and $\lambda=\frac{{u_1^o}^T W_h^T u_2^t}{{u_1^o}^T(F + \epsilon^2I)u_1^o}$ is a balance factor.

Though negative samples are absent in the loss function, they emerge from the predictor network's optimization. The eigenspace of the predictor $W_h$ aligns with the feature correlation matrix $F$, encoding its information. During back-propagation, this encoded information functions as a negative gradient, influencing the optimization direction.

\subsection{Feature Decorrelation Methods}
Feature decorrelation methods have recently emerged as a novel approach in self-supervised learning. These methods aim to reduce redundancy among different feature dimensions to prevent collapse. Various loss functions have been proposed for this purpose. We examine their relations below.

\noindent\textbf{Relation to Barlow Twins~\cite{zbontar2021barlow}}.
Barlow Twins employs the following loss function:
\begin{equation}
    \label{eq:bt_loss}
    L = \sum_{i=1}^C{(W_{ii}-1)^2+\lambda\sum_{i=1}^C{\sum_{j\ne{i}}}{W_{ij}^2}},
\end{equation}
where $W=\frac{1}{N}\sum_{v_1^o, v_2^s\in\mathcal{V}_{\mathrm{batch}}}v_1^ov_2^{sT}$ is a cross-correlation matrix, $C$ denotes the number of feature dimensions, and $\lambda$ is a balancing hyper-parameter. The diagonal elements of $W$ are encouraged to be close to $1$, while the off-diagonal elements are forced towards $0$.

Despite appearing different, Eq.~\eqref{eq:bt_loss} operates similarly to previous methods from a gradient perspective, calculated as:
\begin{equation}
    \label{eq:bt_grad}
    \frac{\partial L}{\partial u_1^o} =  \frac{2}{N} \left(-Au_2^s + \lambda\sum_{v_1^o, v_2^s\in\mathcal{V}_{\mathrm{batch}}}{\frac{u_2^{sT}v_2^s}{N}v_1^o}\right),
\end{equation}
where $A = I-(1-\lambda)W_{\mathrm{diag}}$ and $(W_{\mathrm{diag}})_{ij} = \delta_{ij} W_{ij}$ is the diagonal matrix of $W$. Barlow Twins applies batch normalization instead of $\ell_2$ normalization to the representation $v$.

\noindent\textbf{Relation to VICReg~\cite{bardes2021vicreg}}.
VICReg modifies Barlow Twins with the following loss function:
\begin{equation}
    \label{eq:vic_loss}
    \begin{split}
        L = &\frac{1}{N}\sum_{v_1^o, v_2^s\in\mathcal{V}_{\mathrm{batch}}}||v_1^o-v_2^s||_2^2 + \frac{\lambda_1}{C}\sum_{i=1}^C\sum_{j\ne i}^CW_{ij}'^2 \\
        &+ \frac{\lambda_2}{C}\sum_{i=1}^C\max(0, \gamma - \mathrm{std}(v_1^o)_i),
    \end{split}
\end{equation}
where $W'=\frac{1}{N-1}\sum_{v_1^o\in\mathcal{V}_{\mathrm{batch}}}(v_1^o-\bar{v}_1^o)(v_1^o-\bar{v}_1^o)^T$ is the covariance matrix of the same view, $\mathrm{std}(v)_i$ denotes the standard deviation of the $i$-th channel of $v$, $\gamma$ is a constant target value, and $\lambda_1$, $\lambda_2$ are balancing weights.

The gradient is derived as follows:
\begin{equation}
    \label{eq:vic_grad}
    \begin{split}
        \frac{\partial L}{\partial u_1^o} = &\frac{2}{N}\left(-u_2^s + \lambda\sum_{v_1^o\in\mathcal{V}_{\mathrm{batch}}}\frac{\tilde{u}_1^{oT}\tilde{v}_1^o}{N}\tilde{v}_1^o\right)\\
        & +\underbrace{\frac{2\lambda}{N}\left(\frac{1}{\lambda}u_1^o - B\tilde{u}_1^o\right)}_{\text{reduces to }0},
    \end{split}
\end{equation}
where $\tilde{v} = v-\bar{v}$ is the de-centered sample, $\lambda = \frac{2\lambda_1N^2}{c(N-1)^2}$, and $B = \frac{N}{c\lambda(N-1)}(2\lambda_1W'_{\mathrm{diag}} + \frac{\lambda_2}{2}\mathrm{diag}(\1(\gamma-\mathrm{std}(v_1^o)>0)\oslash\mathrm{std}(v_1^o)))$. Here, $\mathrm{diag}(x)$ is a matrix with the vector $x$ on its diagonal, $\1(\cdot)$ is the indicator function, and $\oslash$ denotes element-wise division.

VICReg does not normalize $v$; instead, it uses de-centering and a standard deviation term in the loss function.

\textbf{Unified Gradient.} Given the equivalence of $v^s$ and $v^o$, the gradient for feature decorrelation methods can be unified as:
\begin{equation}
    \frac{\partial L}{\partial u_1^o} = \frac{2}{N}\left(-u_2^t + \lambda\sum_{v^o\in\mathcal{V}_{\mathrm{batch}}}\frac{u^{oT}v^o}{N}v^o_1\right),
\end{equation}
where $-u_2^t$ is the positive gradient, and $\sum_{v^o\in\mathcal{V}_{\mathrm{batch}}}\left(\frac{u^{oT}v^o}{N}\right)v^o_1$ is the negative gradient. $\lambda$ is a balancing factor. The difference between methods lies in the subscript for the negative coefficient. Feature decorrelation methods function similarly to other self-supervised methods, with the positive and negative gradients derived from the diagonal and off-diagonal elements of the correlation matrix.

\section{Budget of \algopt for Data Synthesis}
\label{sec:budget}

While training on the distilled dataset is both efficient and effective, the distillation process in optimization-based approaches~\cite{cazenavette2022dataset, zhao2020dataset, wang2018dataset} is computationally intensive~\cite{cui2023scaling, sun2024diversity}, often exceeding the computational load of training on the full dataset.

In contrast, the synthetic data generated by our \algopt framework requires a budget less than that of training for a single epoch (c.f. Section~\ref{sec:rela_d}).
This is because the synthesis budget of our \algopt is equivalent to performing inference over the entire dataset $D_X$. Consequently, this computational expense is negligible given that training epochs typically number around $100$.

\section{\algopt in Labeled Dataset Distillation and Human-Supervised Learning}
\label{sec:super_rela}

We apply our \algopt in labeled dataset distillation and human-supervised learning.

\subsection{Experimental Setup}

\textbf{Datasets and Neural Network Architectures.}
We conduct experiments on datasets of varying scales and resolutions.
\begin{itemize}[nosep, leftmargin=12pt]
    \item \textbf{Small-scale:} We evaluate on CIFAR-10 ($32 \times 32$) \cite{krizhevsky2009cifar} and CIFAR-100 ($32 \times 32$) \cite{krizhevsky2009learning}.
    \item \textbf{Large-scale:} We utilize Tiny-ImageNet ($64 \times 64$) \cite{le2015tiny} and ImageNet-1K ($224 \times 224$) \cite{deng2009imagenet}.
\end{itemize}
Consistent with previous works on dataset distillation \cite{yin2023squeeze,zhao2023improved,guo2023towards}, we use ConvNet \cite{guo2023towards} and ResNet-\{18,50\} \cite{he2016deep} as our backbone networks across all datasets. Specifically, Conv-3 is employed for CIFAR-10/100, and Conv-4 for Tiny-ImageNet and ImageNet-1K.

\textbf{Baselines.}
We compare our method with several SOTA distillation methods capable of scaling to large high-resolution datasets, including G-VBSM \cite{shao2023generalized}, SRe$^2$L \cite{yin2023squeeze}, and RDED \cite{sun2024diversity}. To the best of our knowledge, SRe$^2$L, G-VBSM, and RDED are the only published works that efficiently scale to datasets of any size, making them our closest baselines. All distilled datasets synthesized from these baselines undergo the same post-training process. Results are reported in Table~\ref{tb:main}.
For our \algopt, the prior models used for distillation are identical to the pre-trained models employed in the baseline methods.

\begin{table*}[!t]
    \centering
    \caption{\small
        \textbf{Comparison with SOTA Baseline Dataset Distillation Methods.}
        In the table, \textbf{bold} indicates the best result.
        \IPC refers to the number of Images Per Class for distilled datasets.
    }
    \setlength\tabcolsep{2pt}
    \resizebox{1.\textwidth}{!}{
        \begin{tabular}{@{}cc|cccc|cccc|cccc@{}}
            \toprule
            \multicolumn{2}{c|}{}                        & \multicolumn{4}{c|}{ConvNet} & \multicolumn{4}{c|}{ResNet-18} & \multicolumn{4}{c}{ResNet-50}                                                                                                                                                                                                              \\ \midrule
            \multicolumn{1}{c|}{Dataset}                 & IPC                          & G-VBSM                         & SRe2L                         & RDED           & \algopt (Ours)          & G-VBSM         & SRe2L          & RDED           & \algopt (Ours)          & G-VBSM         & SRe2L          & RDED                   & \algopt (Ours)          \\ \midrule
            \multicolumn{1}{c|}{\multirow{3}{*}{CF-10}}  & 1                            & 21.2 $\pm$ 0.2                 & 22.2 $\pm$ 1.1                & 28.9 $\pm$ 0.4 & \textbf{40.3 $\pm$ 0.4} & 17.0 $\pm$ 1.0 & 19.9 $\pm$ 0.9 & 27.8 $\pm$ 0.4 & \textbf{35.0 $\pm$ 0.2} & 17.2 $\pm$ 0.9 & 20.2 $\pm$ 0.5 & 24.8 $\pm$ 0.5         & \textbf{31.7 $\pm$ 0.3} \\
            \multicolumn{1}{c|}{}                        & 10                           & 38.6 $\pm$ 0.8                 & 39.6 $\pm$ 0.4                & 56.0 $\pm$ 0.1 & \textbf{67.0 $\pm$ 0.4} & 36.3 $\pm$ 0.7 & 39.4 $\pm$ 0.9 & 47.3 $\pm$ 0.5 & \textbf{74.7 $\pm$ 0.1} & 33.8 $\pm$ 1.1 & 37.2 $\pm$ 0.6 & 45.1 $\pm$ 0.7         & \textbf{70.9 $\pm$ 1.3} \\
            \multicolumn{1}{c|}{}                        & 50                           & 62.7 $\pm$ 0.5                 & 57.7 $\pm$ 0.4                & 71.1 $\pm$ 0.2 & \textbf{77.3 $\pm$ 0.1} & 64.5 $\pm$ 0.6 & 62.8 $\pm$ 1.2 & 76.4 $\pm$ 0.4 & \textbf{89.2 $\pm$ 0.1} & 61.5 $\pm$ 0.6 & 61.6 $\pm$ 0.2 & 74.1 $\pm$ 0.6         & \textbf{88.5 $\pm$ 0.3} \\ \midrule
            \multicolumn{1}{c|}{\multirow{3}{*}{CF-100}} & 1                            & 13.4 $\pm$ 0.3                 & 12.9 $\pm$ 0.1                & 21.8 $\pm$ 0.4 & \textbf{32.5 $\pm$ 0.3} & 13.4 $\pm$ 0.5 & 11.5 $\pm$ 0.4 & 4.6 $\pm$ 0.1  & \textbf{31.7 $\pm$ 1.2} & 12.6 $\pm$ 0.6 & 10.1 $\pm$ 0.1 & 4.5 $\pm$ 0.2          & \textbf{27.8 $\pm$ 1.5} \\
            \multicolumn{1}{c|}{}                        & 10                           & 38.7 $\pm$ 0.8                 & 34.2 $\pm$ 0.3                & 47.0 $\pm$ 0.3 & \textbf{51.3 $\pm$ 0.1} & 47.0 $\pm$ 0.4 & 42.7 $\pm$ 0.5 & 53.4 $\pm$ 0.3 & \textbf{64.8 $\pm$ 0.0} & 47.5 $\pm$ 0.5 & 44.2 $\pm$ 0.5 & 54.0 $\pm$ 0.3         & \textbf{66.2 $\pm$ 0.0} \\
            \multicolumn{1}{c|}{}                        & 50                           & 53.8 $\pm$ 0.4                 & 52.2 $\pm$ 0.3                & 55.3 $\pm$ 0.2 & \textbf{56.3 $\pm$ 0.3} & 60.0 $\pm$ 0.1 & 57.4 $\pm$ 0.2 & 64.0 $\pm$ 0.0 & \textbf{68.8 $\pm$ 0.2} & 62.2 $\pm$ 0.3 & 60.6 $\pm$ 0.2 & 65.8 $\pm$ 0.3         & \textbf{69.6 $\pm$ 0.2} \\ \midrule
            \multicolumn{1}{c|}{\multirow{3}{*}{T-IN}}   & 1                            & 8.6 $\pm$ 0.2                  & 11.8 $\pm$ 0.4                & 17.0 $\pm$ 0.4 & \textbf{27.0 $\pm$ 0.3} & 9.0 $\pm$ 0.1  & 13.5 $\pm$ 0.2 & 15.4 $\pm$ 0.6 & \textbf{24.2 $\pm$ 1.4} & 8.2 $\pm$ 0.2  & 12.8 $\pm$ 0.2 & 14.8 $\pm$ 0.5         & \textbf{23.2 $\pm$ 1.1} \\
            \multicolumn{1}{c|}{}                        & 10                           & 33.9 $\pm$ 0.2                 & 34.5 $\pm$ 0.5                & 41.2 $\pm$ 0.5 & \textbf{42.9 $\pm$ 0.1} & 37.7 $\pm$ 0.3 & 43.6 $\pm$ 0.5 & 48.4 $\pm$ 0.3 & \textbf{55.8 $\pm$ 0.1} & 41.3 $\pm$ 0.1 & 47.0 $\pm$ 0.1 & 50.2 $\pm$ 0.1         & \textbf{58.1 $\pm$ 0.4} \\
            \multicolumn{1}{c|}{}                        & 50                           & 46.6 $\pm$ 0.2                 & 46.3 $\pm$ 0.2                & 47.1 $\pm$ 0.2 & \textbf{47.4 $\pm$ 0.2} & 52.2 $\pm$ 0.0 & 53.4 $\pm$ 0.3 & 57.4 $\pm$ 0.2 & \textbf{59.9 $\pm$ 0.0} & 55.9 $\pm$ 0.3 & 57.1 $\pm$ 0.1 & 59.2 $\pm$ 0.2         & \textbf{61.5 $\pm$ 0.0} \\ \midrule
            \multicolumn{1}{c|}{IN-1k}                   & 1                            & 3.4 $\pm$ 0.1                  & 2.5 $\pm$ 0.0                 & 5.2 $\pm$ 0.1  & \textbf{10.8 $\pm$ 0.0} & 2.1 $\pm$ 0.1  & 2.8 $\pm$ 0.2  & 5.9 $\pm$ 0.1  & \textbf{5.9 $\pm$ 0.2}  & 1.6 $\pm$ 0.0  & 3.0 $\pm$ 0.4  & \textbf{6.8 $\pm$ 0.1} & 5.9 $\pm$ 0.2           \\
            \multicolumn{1}{c|}{}                        & 10                           & 22.6 $\pm$ 0.7                 & 12.7 $\pm$ 0.3                & 20.4 $\pm$ 0.3 & \textbf{32.2 $\pm$ 0.1} & 35.7 $\pm$ 0.2 & 31.1 $\pm$ 0.1 & 41.1 $\pm$ 0.2 & \textbf{48.5 $\pm$ 0.3} & 42.6 $\pm$ 0.4 & 38.3 $\pm$ 0.5 & 46.2 $\pm$ 0.3         & \textbf{48.5 $\pm$ 0.3} \\
            \multicolumn{1}{c|}{}                        & 50                           & 37.8 $\pm$ 0.4                 & 36.0 $\pm$ 0.3                & 38.8 $\pm$ 0.6 & \textbf{52.9 $\pm$ 0.4} & 51.6 $\pm$ 0.2 & 49.5 $\pm$ 0.2 & 55.3 $\pm$ 0.2 & \textbf{61.3 $\pm$ 0.1} & 60.3 $\pm$ 0.1 & 58.4 $\pm$ 0.1 & 62.5 $\pm$ 0.1         & \textbf{61.3 $\pm$ 0.1} \\ \bottomrule[1.5pt]
        \end{tabular}
    }
    \label{tb:main}
\end{table*}

\subsection{Main Results}
Table~\ref{tb:main} demonstrates the superiority of our \algopt, despite incorporating a zero-cost sample synthesis process.

\section{\algopt Algorithm}
\label{sec:rela_alg}

\begin{algorithm}[H]
    \caption{Adaptive Loss Weighting Algorithm for \algopt and Self-Supervised Learning}
    \begin{algorithmic}[1]
        \REQUIRE Number of training steps \( T \), Initial \( \ell_s = 2.0 \), Initial \( \ell_f = 1.0 \), Initial \( \lambda = 1 \)
        \FOR{each training step \( t = 1 \) \TO \( T \)}
        \IF{$\lambda = 1$ }
        \STATE \( \ell_c \gets \text{Value of } \cL_\algopt \) \COMMENT{Retrieve the current loss value from the optimization process}
        \STATE \( \ell_f \gets 0.999 \times \ell_f + 0.001 \times \ell_c \) \COMMENT{Calculate the short-term loss}
        \STATE \( \ell_s \gets 0.99 \times \ell_s + 0.01 \times \ell_f \) \COMMENT{Calculate the long-term loss}
        \ENDIF
        \IF{$\exp(-\max\{\ell_s-\ell_f, 0\}) \geq 0.995$}
        \STATE $\lambda \gets 0$ \COMMENT{\algopt learning is converged and over}
        \ENDIF
        \STATE \(  \cL \gets \lambda \cdot \cL_\algopt + (1 - \lambda) \cdot \cL_{\SSL} \) \COMMENT{Control the \algopt and the SSL states using $\lambda$}
        \ENDFOR
    \end{algorithmic}
    \label{alg:rela}
\end{algorithm}

In essence, this algorithm is designed to detect the convergence of \algopt. Upon convergence, it transitions to the original algorithm for self-supervised learning. This implicitly segments the learning procedure into two phases: the fast stage (\algopt) and the slow stage (SSL).
Moreover, Figure~\ref{fig:rela_dynamic} depicts the dynamics of the training process.

\begin{figure}[!h]
    \centering
    \begin{subfigure}{.48\textwidth}
        \centering
        \includegraphics[width=\linewidth]{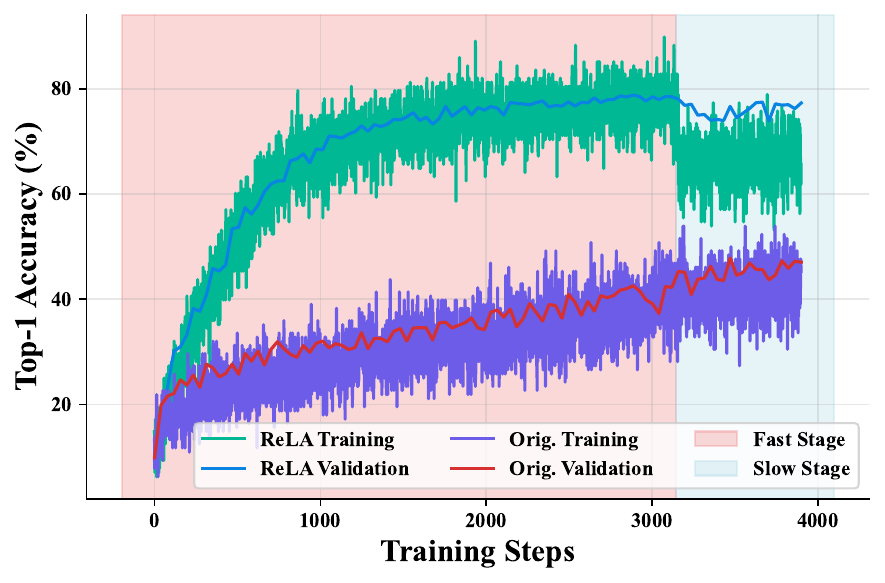}
        \caption{\algopt with CF10-T as the prior model}
        \label{fig:draw_rela_dynamic}
    \end{subfigure}
    \begin{subfigure}{.48\textwidth}
        \centering
        \includegraphics[width=\linewidth]{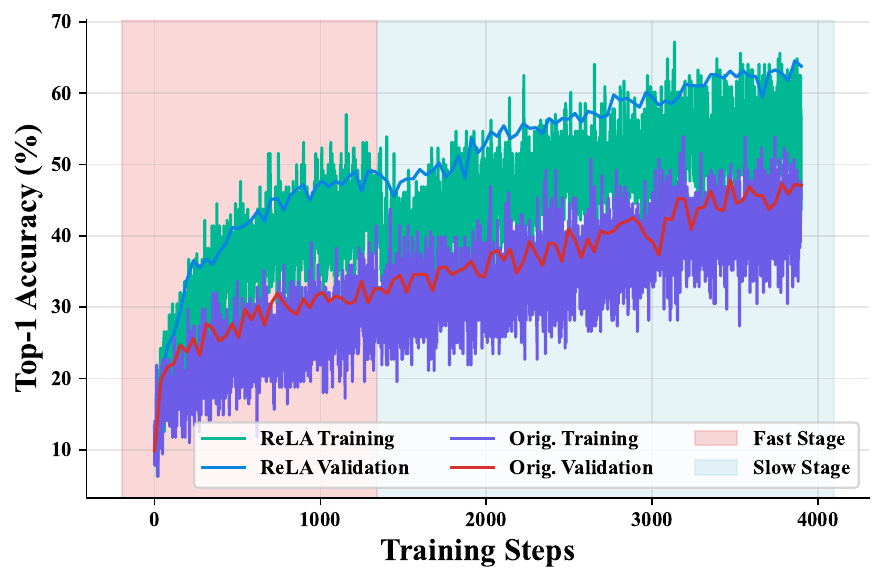}
        \caption{\algopt with Rand. as the prior model}
        \label{fig:draw_rela_dynamic_2}
    \end{subfigure}
    \vspace{-0.5em}
    \caption{\small
        \textbf{Comparison of training dynamics between \algopt and the original (Orig.) BYOL algorithm}.
    }
    \label{fig:rela_dynamic}
    \vspace{-1em}
\end{figure}

\section{Experimental Details }
\label{app:details}

\subsection{Detailed Setup for Experiments in Section~\ref{sec:case_study}}

We show the detailed setup for the experiments in Section~\ref{sec:case_study} in Tables~\ref{tab:simple_nn_architecture} and \ref{tab:optimization_parameters}.

\begin{table}[h]
    \centering
    \setlength\tabcolsep{4.4pt}
    \centering
    \caption{
        \textbf{Architecture of the simple neural network model.}
    }
    \scalebox{0.88}{
        \begin{tabular}{cccc}
            \toprule
            \textbf{Layer} & \textbf{Type}            & \textbf{Input Units} & \textbf{Output Units} \\ \midrule
            Input Layer    & -                        & 2                    & -                     \\ \midrule
            Hidden Layer   & Fully Connected (Linear) & 2                    & 50                    \\
                           & Activation (ReLU)        & -                    & -                     \\ \midrule
            Output Layer   & Fully Connected (Linear) & 50                   & 1                     \\
                           & Activation (Sigmoid)     & -                    & -                     \\ \midrule
            Loss Function  & Mean Squared Error (MSE) & -                    & -                     \\ \bottomrule[1.5pt]
        \end{tabular}
    }
    \vspace{0.5em}
    \label{tab:simple_nn_architecture}
\end{table}

\begin{table}[h]
    \caption{
        \textbf{Optimization parameters for the simple neural network model.}
    }
    \centering
    \setlength\tabcolsep{4.4pt}
    \centering
    \scalebox{0.88}{
        \begin{tabular}{cc}
            \toprule
            \textbf{Parameter} & \textbf{Value}                    \\ \midrule
            Training Steps     & 1000                              \\ \midrule
            Batch Size         & 1                                 \\ \midrule
            Optimizer          & Stochastic Gradient Descent (SGD) \\ \midrule
            Learning Rate      & 0.002                             \\ \midrule
            Momentum           & 0.98                              \\ \bottomrule[1.5pt]
        \end{tabular}
    }
    \vspace{0.5em}
    \label{tab:optimization_parameters}
\end{table}

\subsection{Detailed Setup for Experiments in Section~\ref{sec:exp}}

\paragraph{Training details for self-supervised learning methods.}

We show the details of training in Table~\ref{tab:training_parameters}.

\begin{table}[h]
    \centering
    \setlength\tabcolsep{4.4pt}
    \centering
    \caption{
        \textbf{Training parameters and optimizer settings for self-supervised learning methods.}
    }
    \scalebox{0.88}{
        \begin{tabular}{cc}
            \toprule
            \textbf{Parameter} & \textbf{Value} \\ \midrule
            Epochs             & 100            \\ \midrule
            Optimizer          & AdamW          \\ \midrule
            Learning Rate      & 0.001          \\ \midrule
            Weight Decay       & 0.01           \\ \bottomrule[1.5pt]
        \end{tabular}
    }
    \vspace{0.5em}
    \label{tab:training_parameters}
\end{table}

\paragraph{Linear evaluation details.}

We show the details of training the linear model in Table~\ref{tab:linear_evaluation_parameters}.

\begin{table}[h]
    \centering
    \caption{
        \textbf{Linear evaluation parameters for various datasets.}
    }
    \setlength\tabcolsep{4.4pt}
    \scalebox{0.88}{
        \begin{tabular}{cccc}
            \toprule
            \textbf{Dataset}              & \textbf{Batch Size}                            & \textbf{Linear Model}                  & \textbf{Loss Function} \\ \midrule
            CIFAR-10                      & 128                                            & nn.Linear(\textit{feature\_dim}, 10)   & CrossEntropyLoss()     \\ \midrule
            CIFAR-100                     & 128                                            & nn.Linear(\textit{feature\_dim}, 100)  & CrossEntropyLoss()     \\ \midrule
            Tiny-ImageNet                 & 128                                            & nn.Linear(\textit{feature\_dim}, 200)  & CrossEntropyLoss()     \\ \midrule
            ImageNet-1K                   & 1024                                           & nn.Linear(\textit{feature\_dim}, 1000) & CrossEntropyLoss()     \\ \midrule
            \multicolumn{2}{c}{Optimizer} & \multicolumn{2}{c}{Adam (learning rate: 3e-4)}                                                                   \\ \bottomrule[1.5pt]
        \end{tabular}
    }
    \vspace{0.5em}
    \label{tab:linear_evaluation_parameters}
\end{table}

\section{Batch PCA Reduction}
\label{sec:batch_pca}
To begin, we provide a full and rigorous mathematical framework for Principal Component Analysis (PCA).

\subsection{Principal Component Analysis}

Given a data matrix \( \mathbf{Y} \in \mathbb{R}^{n \times d} \) and the desired number of components \( k \), where \( k \leq d \), PCA aims to extract a reduced data matrix \( \mathbf{Y}_{\text{reduced}} \in \mathbb{R}^{n \times k} \) that retains the maximum variance from the original dataset.

First, the data needs to be centered by subtracting the mean of each feature. This can be mathematically represented as:
\[
    \mathbf{Y}_{\text{centered}} = \mathbf{Y} - \frac{1}{n} \mathbf{1}_n \mathbf{1}_n^T \mathbf{Y}
\]
where \( \mathbf{1}_n \) is a column vector of ones with length \( n \).
Next, the covariance matrix of the centered data is computed:
\[
    \mathbf{C} = \frac{1}{n-1} \mathbf{Y}_{\text{centered}}^T \mathbf{Y}_{\text{centered}}
\]
To identify the principal components, eigendecomposition is performed on the covariance matrix:
\[
    \mathbf{C} = \mathbf{V} \mathbf{\Lambda} \mathbf{V}^T
\]
where \( \mathbf{V} \) is the matrix of eigenvectors and \( \mathbf{\Lambda} \) is a diagonal matrix of eigenvalues.
The eigenvectors are then sorted in descending order based on their corresponding eigenvalues. The top \( k \) eigenvectors are selected to form the projection matrix:
\[
    \mathbf{W} = [\mathbf{v}_1, \mathbf{v}_2, \ldots, \mathbf{v}_k]
\]
where \( \mathbf{v}_i \) represents the \( i \)-th eigenvector.
Finally, the centered data is projected onto the new subspace:
\[
    \mathbf{Y}_{\text{reduced}} = \mathbf{Y}_{\text{centered}} \mathbf{W}
\]
The resulting reduced data matrix \( \mathbf{Y}_{\text{reduced}} \) contains the principal components of the original data. Each principal component is a linear combination of the original features, designed to capture as much variance as possible in the reduced space.

\begin{algorithm}[H]
    \caption{Batch PCA Reduction}
    \begin{algorithmic}[1]
        \REQUIRE Data matrix \( \mathbf{Y} \in \mathbb{R}^{n \times d} \), Number of components \( k \)
        \ENSURE Reduced data matrix \( \mathbf{Y}_{\text{reduced}} \in \mathbb{R}^{n \times k} \)

        \STATE \( n \gets \text{number of rows in } \mathbf{Y} \)
        \STATE \( m \gets \textup{BATCHSIZE} \) \COMMENT{Pre-set Batch size}
        \STATE \( \mathbf{\mu} \gets \frac{1}{n} \sum_{i=1}^{n} \mathbf{Y}_{i, \cdot} \) \COMMENT{Compute the mean of \(\mathbf{Y}\)}
        \STATE \( \mathbf{Y}_{\text{centered}} \gets \mathbf{Y} - \mathbf{\mu} \) \COMMENT{Center the data}
        \STATE \( \mathbf{\Sigma} \gets \mathbf{0}_{d \times d} \) \COMMENT{Initialize the covariance matrix}

        \FOR{\( i = 0 \) \TO \( n \) \textbf{step} \( m \)}
        \STATE \( j \gets \min(i + m, n) \)
        \STATE \( \mathbf{B} \gets \mathbf{Y}_{\text{centered}}[i:j, \cdot] \)
        \STATE \( \mathbf{\Sigma} \gets \mathbf{\Sigma} + \mathbf{B}^\top \mathbf{B} \) \COMMENT{Update the covariance matrix}
        \ENDFOR

        \STATE \( \mathbf{\Sigma} \gets \frac{\mathbf{\Sigma}}{n - 1} \) \COMMENT{Normalize the covariance matrix}

        \STATE \( \mathbf{V} \gets \text{eigenvectors of } \mathbf{\Sigma} \text{ corresponding to the largest } k \text{ eigenvalues} \)
        \STATE \( \mathbf{Y}_{\text{reduced}} \gets \mathbf{0}_{n \times k} \) \COMMENT{Initialize the reduced data matrix}

        \FOR{\( i = 0 \) \TO \( n \) \textbf{step} \( m \)}
        \STATE \( j \gets \min(i + m, n) \)
        \STATE \( \mathbf{B} \gets \mathbf{Y}_{\text{centered}}[i:j, \cdot] \)
        \STATE \( \mathbf{Y}_{\text{reduced}}[i:j, \cdot] \gets \mathbf{B} \mathbf{V} \)
        \ENDFOR

        \RETURN \( \mathbf{Y}_{\text{reduced}} \)

    \end{algorithmic}
\end{algorithm}

Then we prove that the batch PCA we utilized here is exactly same to the standard PCA.

\begin{proof}
    To prove that the Batch PCA algorithm achieves the same result as the standard PCA, we need to show that the covariance matrix and the reduced data matrix computed by the Batch PCA are equivalent to those computed by the standard PCA.

    First, let's show that the covariance matrix \( \mathbf{\Sigma} \) computed in the Batch PCA is equivalent to the covariance matrix \( \mathbf{C} \) in the standard PCA.

    In the standard PCA, the covariance matrix is computed as:
    \begin{equation*}
        \mathbf{C} = \frac{1}{n-1} \mathbf{Y}_{\text{centered}}^T \mathbf{Y}_{\text{centered}}
    \end{equation*}

    In the Batch PCA, the covariance matrix is computed as:
    \begin{align*}
        \mathbf{\Sigma} & = \frac{1}{n-1} \sum_{i=0}^{n-1} \mathbf{B}_i^T \mathbf{B}_i                                                                            \\
                        & = \frac{1}{n-1} \left( \mathbf{B}_0^T \mathbf{B}_0 + \mathbf{B}_1^T \mathbf{B}_1 + \ldots + \mathbf{B}_{b-1}^T \mathbf{B}_{b-1} \right) \\
                        & = \frac{1}{n-1} \mathbf{Y}_{\text{centered}}^T \mathbf{Y}_{\text{centered}}
    \end{align*}
    where \( \mathbf{B}_i \) is the \( i \)-th batch of the centered data matrix \( \mathbf{Y}_{\text{centered}} \), and \( b \) is the number of batches.

    Therefore, the covariance matrix computed by the Batch PCA is equivalent to the covariance matrix computed by the standard PCA.

    Next, let's show that the reduced data matrix \( \mathbf{Y}_{\text{reduced}} \) computed in the Batch PCA is equivalent to the one computed in the standard PCA.

    In the standard PCA, the reduced data matrix is computed as:
    \begin{equation*}
        \mathbf{Y}_{\text{reduced}} = \mathbf{Y}_{\text{centered}} \mathbf{W}
    \end{equation*}
    where \( \mathbf{W} \) is the matrix of the top \( k \) eigenvectors of the covariance matrix.

    In the Batch PCA, the reduced data matrix is computed as:
    \begin{align*}
        \mathbf{Y}_{\text{reduced}} & = \left[ \begin{array}{c}
                                                       \mathbf{B}_0 \mathbf{V} \\
                                                       \mathbf{B}_1 \mathbf{V} \\
                                                       \vdots                  \\
                                                       \mathbf{B}_{b-1} \mathbf{V}
                                                   \end{array} \right]                   \\
                                    & = \left[ \begin{array}{c}
                                                       \mathbf{Y}_{\text{centered}}[0:m, \cdot]  \\
                                                       \mathbf{Y}_{\text{centered}}[m:2m, \cdot] \\
                                                       \vdots                                    \\
                                                       \mathbf{Y}_{\text{centered}}[(b-1)m:n, \cdot]
                                                   \end{array} \right] \mathbf{V} \\
                                    & = \mathbf{Y}_{\text{centered}} \mathbf{V}
    \end{align*}
    where \( \mathbf{V} \) is the matrix of the top \( k \) eigenvectors of the covariance matrix \( \mathbf{\Sigma} \), and \( m \) is the batch size.

    Since we have shown that the covariance matrices \( \mathbf{C} \) and \( \mathbf{\Sigma} \) are equivalent, their eigenvectors \( \mathbf{W} \) and \( \mathbf{V} \) are also equivalent. Therefore, the reduced data matrix computed by the Batch PCA is equivalent to the one computed by the standard PCA.

    In conclusion, we have proven that the Batch PCA algorithm achieves the same result as the standard PCA by showing that the covariance matrix and the reduced data matrix computed by the Batch PCA are equivalent to those computed by the standard PCA.
\end{proof}

\section{Analyze the Practical Data Augmentations}
\label{sec:analy_data_augment}

\begin{figure}[t]
    \centering
    \includegraphics[width=\linewidth]{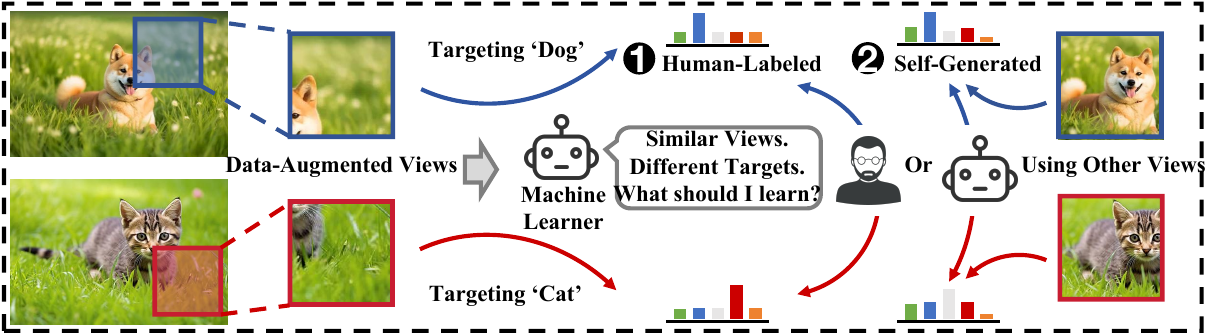}
    \caption{
        \textbf{A scenario of similar views targeting different targets}.
        In some cases, we may randomly crop two similar views from the original images as augmentations and input them into the machine learning model. This can lead to confusion in the model due to the differing targets assigned by humans or generated autonomously.
        \label{fig:motivation}
    }
\end{figure}

Here we presents a rigorous mathematical proof demonstrating that higher data augmentation intensity leads to increased overlap between samples from different classes. By modeling the sample distributions and augmentation process, we show that the variance of the augmented distributions increases with augmentation strength, resulting in wider and more overlapping distributions. We provide an exact calculation of the intersection point and approximate the overlap area using the cumulative distribution function of the standard normal distribution. Our theoretical analysis confirms the positive correlation between data augmentation intensity and inter-class overlap rate.

\subsection{Assumptions and Definitions}
\begin{assumption}
    There are two classes \(C_1\) and \(C_2\), with samples drawn from probability distributions \(P_1\) and \(P_2\), respectively. The data augmentation intensity is denoted by \(\alpha\).
\end{assumption}

\begin{definition}[Data Distribution and Augmentation]
    The samples are satisfied \(X_1 \sim P_1\), \(X_2 \sim P_2\).
    The augmented samples are represented as \(X_1' = X_1 + \epsilon\) and \(X_2' = X_2 + \epsilon\), where \(\epsilon\) denotes the augmentation perturbation following the distribution \(Q_\alpha\). The variance of \(Q_\alpha\) is proportional to \(\alpha\).
\end{definition}

\subsection{Augmented Distributions}
Let \(P_{1,\alpha}\) and \(P_{2,\alpha}\) denote the augmented sample distributions:

\begin{align}
    P_{1,\alpha}(x') & = (P_1 * Q_\alpha)(x') \\
    P_{2,\alpha}(x') & = (P_2 * Q_\alpha)(x')
\end{align}

where \(*\) represents the convolution operation.

\paragraph{Variance of Augmented Distributions}
Let \(\sigma_\alpha^2\) be the variance of \(Q_\alpha\), with \(\sigma_\alpha^2 = k\alpha\), where \(k\) is a constant. The variances of the augmented distributions are:

\begin{align}
    \sigma_{P_{1,\alpha}}^2 & = \sigma_{P_1}^2 + \sigma_\alpha^2 = \sigma_{P_1}^2 + k\alpha \\
    \sigma_{P_{2,\alpha}}^2 & = \sigma_{P_2}^2 + \sigma_\alpha^2 = \sigma_{P_2}^2 + k\alpha
\end{align}

\paragraph{Overlap Rate Definition}
Let \(R\) denote the overlap region. The overlap rate \(O(\alpha)\) is defined as:

\begin{equation}
    O(\alpha) = \int_R P_{1,\alpha}(x') \, dx' + \int_R P_{2,\alpha}(x') \, dx'
\end{equation}

\subsection{Increased Variance Leads to Increased Overlap}
As \(\alpha\) increases, \(\sigma_\alpha^2\) increases, resulting in larger variances of \(P_{1,\alpha}\) and \(P_{2,\alpha}\). This makes the distributions wider and more dispersed, increasing the overlap region.

\paragraph{Specific Derivation for One-Dimensional Gaussian Distributions}
Assume \(P_1\) and \(P_2\) are two Gaussian distributions:

\begin{align}
    P_1(x) & = \mathcal{N}(\mu_1, \sigma_{P_1}^2) \\
    P_2(x) & = \mathcal{N}(\mu_2, \sigma_{P_2}^2)
\end{align}

The augmented distributions are:

\begin{align}
    P_{1,\alpha}(x') & = \mathcal{N}(\mu_1, \sigma_{P_1}^2 + k\alpha) \\
    P_{2,\alpha}(x') & = \mathcal{N}(\mu_2, \sigma_{P_2}^2 + k\alpha)
\end{align}

\paragraph{Exact Calculation of Intersection Point}
Equating the two Gaussian probability density functions and solving for \(x\):

\begin{equation}
    \frac{1}{\sqrt{2\pi(\sigma_{P_1}^2+k\alpha)}}\exp\left(-\frac{(x-\mu_1)^2}{2(\sigma_{P_1}^2+k\alpha)}\right) = \frac{1}{\sqrt{2\pi(\sigma_{P_2}^2+k\alpha)}}\exp\left(-\frac{(x-\mu_2)^2}{2(\sigma_{P_2}^2+k\alpha)}\right)
\end{equation}

Simplifying the equation yields an analytical solution for the intersection point. To simplify the analysis, we assume \(\sigma_{P_1}^2 = \sigma_{P_2}^2\), in which case the intersection point is \((\mu_1 + \mu_2)/2\).

\paragraph{Area of Overlap Region}
Let \(\Delta \mu = |\mu_1 - \mu_2|\). The overlap region can be represented using the cumulative distribution function (CDF) of the standard normal distribution, denoted as \(\Phi(z)\):

\begin{equation}
    \Phi(z) = \frac{1}{2} \left[ 1 + \text{erf} \left( \frac{z}{\sqrt{2}} \right) \right]
\end{equation}

The variance of the augmented distributions is \(\sigma_{\alpha}^2 = \sigma_{P_1}^2 + k\alpha\). Therefore, the approximate area of the overlap region is:

\begin{equation}
    O(\alpha) \approx 2\Phi\left( \frac{\Delta \mu}{\sqrt{2(\sigma_{P_1}^2 + k\alpha)}} \right) - 1
\end{equation}

\subsection{Conclusion}
\begin{theorem}
    As the data augmentation intensity \(\alpha\) increases, the overlap rate \(O(\alpha)\) between samples from different classes increases.
\end{theorem}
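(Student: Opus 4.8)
The plan is to make the overlap computation fully explicit in the tractable equal-variance case and then reduce the claim to the monotonicity of a single standardized quantity. First I would specialize to $\sigma_{P_1}^2=\sigma_{P_2}^2=:\sigma_0^2$, so that the two augmented laws are $P_{1,\alpha}=\mathcal{N}(\mu_1,\sigma_\alpha^2)$ and $P_{2,\alpha}=\mathcal{N}(\mu_2,\sigma_\alpha^2)$ with the common inflated variance $\sigma_\alpha^2=\sigma_0^2+k\alpha$. Exactly as in the likelihood-ratio argument of Appendix~\ref{sec:analy_rescale_samples}, the two densities cross only at the midpoint $m=(\mu_1+\mu_2)/2$, and crucially this crossing point does \emph{not} depend on $\alpha$. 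Taking $\mu_1<\mu_2$ and $\Delta\mu=\mu_2-\mu_1>0$, the confusion region is $\{x\ge m\}$ for the left component and $\{x\le m\}$ for the right one, so standardizing both tails gives the closed form $O(\alpha)=2\,\Phi\!\left(-\tfrac{\Delta\mu}{2\sigma_\alpha}\right)$.

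Next I would establish monotonicity by differentiation. Writing $z(\alpha)=\frac{\Delta\mu}{2\sqrt{\sigma_0^2+k\alpha}}$, the fact that $k>0$ makes $\sigma_\alpha$ strictly increasing and hence $z$ strictly decreasing, with $z'(\alpha)=-\frac{k\,\Delta\mu}{4(\sigma_0^2+k\alpha)^{3/2}}<0$. Since $O(\alpha)=2\Phi(-z(\alpha))$ and $\Phi$ is strictly increasing with derivative the standard normal density $\phi>0$, the chain rule yields $O'(\alpha)=-2\,\phi(z(\alpha))\,z'(\alpha)>0$, proving $O$ is strictly increasing in $\alpha$. The whole argument thus collapses to the single observation that the separation-to-spread ratio $\Delta\mu/\sigma_\alpha$ shrinks as the augmentation variance grows.

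For the general unequal-variance situation I would argue that the same qualitative conclusion persists: there the densities cross at the roots of a quadratic whose coefficients depend on $\alpha$, so the overlap region is an interval with $\alpha$-dependent endpoints and no longer admits the clean symmetric form above. The cleanest rigorous route is therefore the equal-variance reduction just described (which the statement adopts); alternatively, one can phrase everything through the overlapping coefficient $\int\min(p_{1,\alpha},p_{2,\alpha})$ and invoke a convolution/smoothing argument showing that adding independent noise of increasing variance to both classes can only enlarge this coefficient.

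The main obstacle is \emph{not} the differentiation but pinning down the overlap region and its measure unambiguously: one must verify that the crossing point is stationary in $\alpha$ (which holds precisely because the two variances are inflated by the same amount $k\alpha$), and get the factor of $\tfrac12$ in the standardized argument right so that the resulting function of $\alpha$ has the correct---strictly increasing---direction. Once the overlap is correctly identified as $2\Phi(-\Delta\mu/(2\sigma_\alpha))$ rather than its complement, the monotonicity is immediate.
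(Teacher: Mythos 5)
Your proof is correct, and although it follows the same skeleton as the paper's argument---reduce to the equal-variance Gaussian case, locate the density crossing at the midpoint, and express the overlap through the standard normal CDF---it is executed more carefully and in fact repairs two defects in the paper's own write-up. First, you derive the correct closed form $O(\alpha)=2\Phi\bigl(-\Delta\mu/(2\sigma_\alpha)\bigr)$ (the two symmetric tail masses beyond the midpoint, i.e.\ the overlapping coefficient), whereas the paper's displayed approximation $O(\alpha)\approx 2\Phi\bigl(\Delta\mu/\sqrt{2(\sigma_{P_1}^2+k\alpha)}\bigr)-1$ is, taken literally, the complementary (non-overlap) quantity: it tends to $0$ as $\alpha\to\infty$ and to $1$ when the classes are well separated, so it is \emph{decreasing} in $\alpha$, contradicting the theorem it is meant to support. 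Second, the paper bridges this sign problem with the assertion that ``$\Phi(z)$ increases as $z$ decreases for $z>0$,'' which is false for the standard normal CDF; your argument instead differentiates honestly, obtaining $z'(\alpha)<0$ and $O'(\alpha)=-2\,\phi(z(\alpha))\,z'(\alpha)>0$, which is valid. Your observation that the crossing point $(\mu_1+\mu_2)/2$ is stationary in $\alpha$ (because both variances are inflated by the same amount $k\alpha$) is also exactly the justification needed for holding the overlap region fixed while differentiating, a point the paper never addresses. The only soft spot in your write-up is the unequal-variance discussion, which remains a sketch, but the theorem as stated (and as proved in the paper) concerns only the symmetric case, so this does not affect correctness.
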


\begin{proof}
    Increasing \(\alpha\) leads to an increase in the variance of the sample distributions, making them wider and more likely to overlap.

    Specifically, increasing \(\alpha\) increases \(\sigma_{P_1}^2 + k\alpha\) in the denominator, thereby decreasing the argument of \(\Phi(\cdot)\). Since \(\Phi(z)\) increases as \(z\) decreases for \(z > 0\), \(O(\alpha)\) increases as \(\alpha\) increases.
\end{proof}

In summary, we have rigorously proven the positive correlation between data augmentation intensity and inter-class overlap rate from a statistical and probabilistic perspective.

\begin{figure}[t]
    \centering
    \begin{subfigure}{.325\textwidth}
        \centering
        \includegraphics[width=\linewidth]{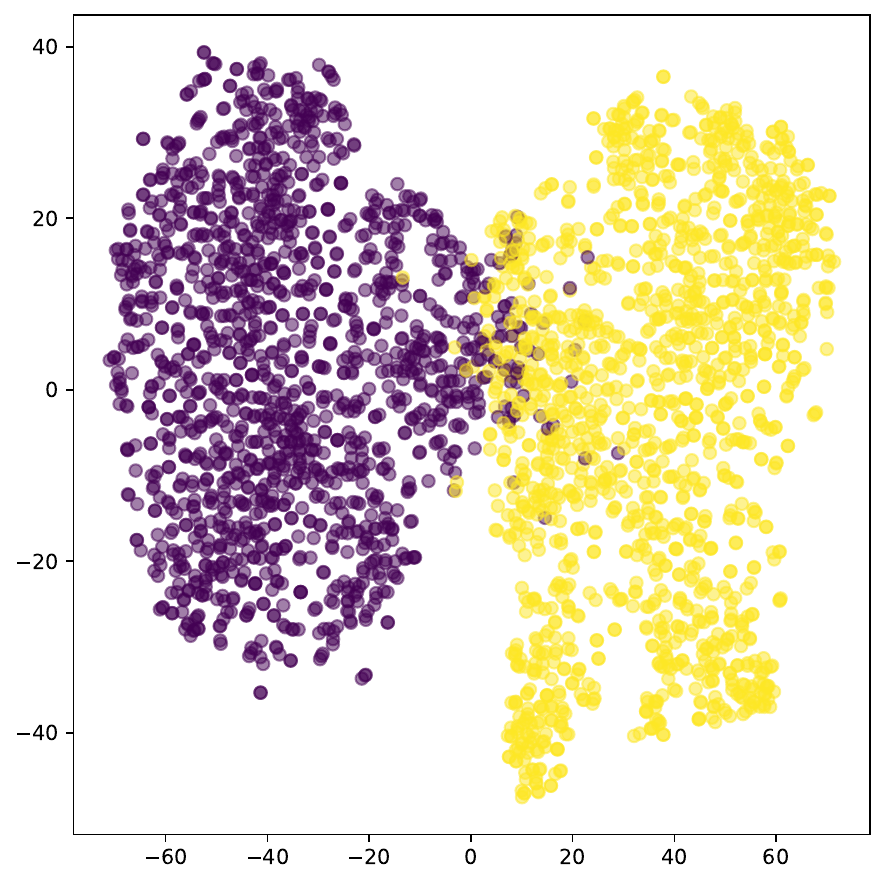}
        \caption{Setting Minscale = $0.02$}
    \end{subfigure}
    \begin{subfigure}{.325\textwidth}
        \centering
        \includegraphics[width=\linewidth]{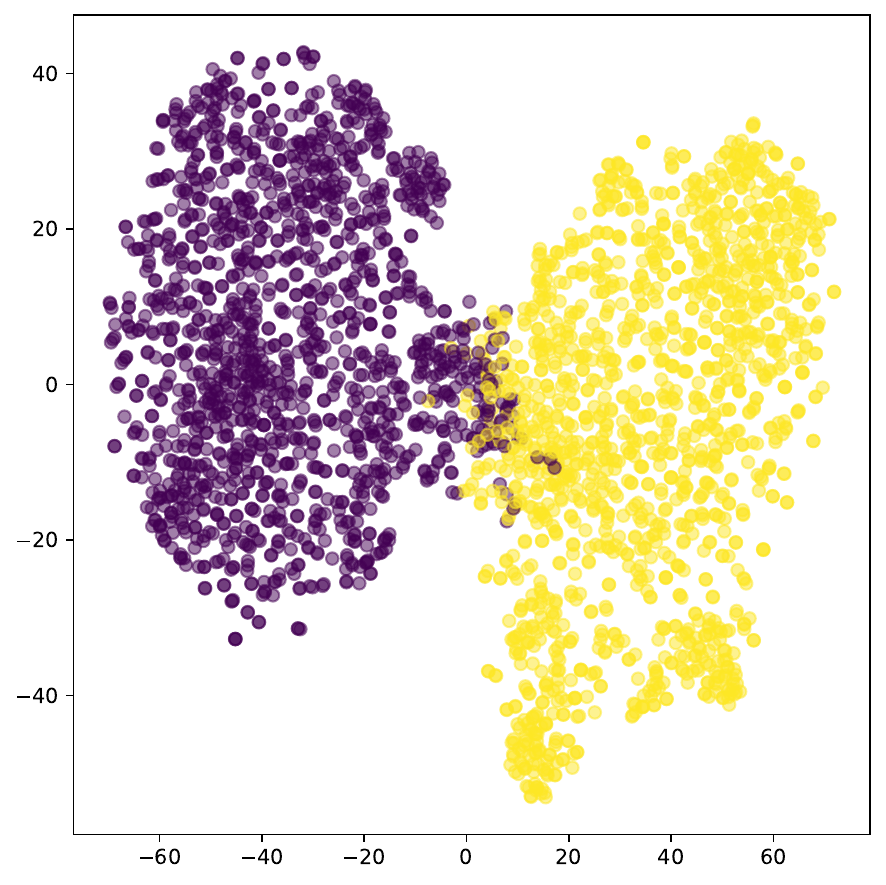}
        \caption{Setting Minscale = $0.04$}
    \end{subfigure}
    \begin{subfigure}{.325\textwidth}
        \centering
        \includegraphics[width=\linewidth]{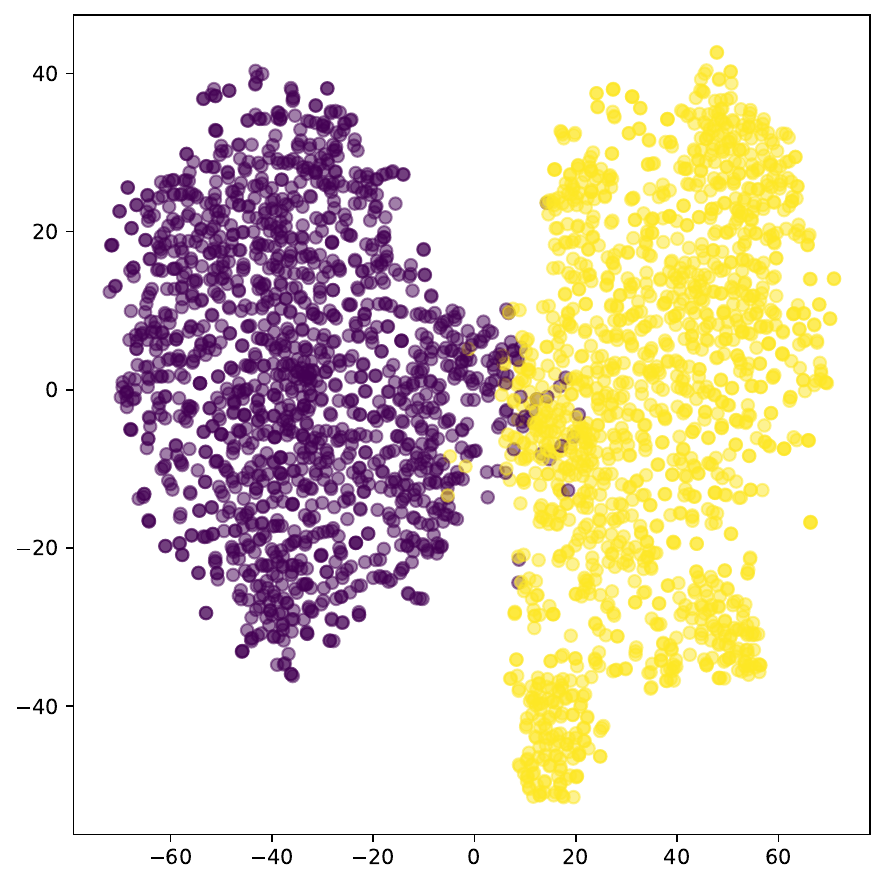}
        \caption{Setting Minscale = $0.08$}
    \end{subfigure}
    \hfill
    \begin{subfigure}{.325\textwidth}
        \centering
        \includegraphics[width=\linewidth]{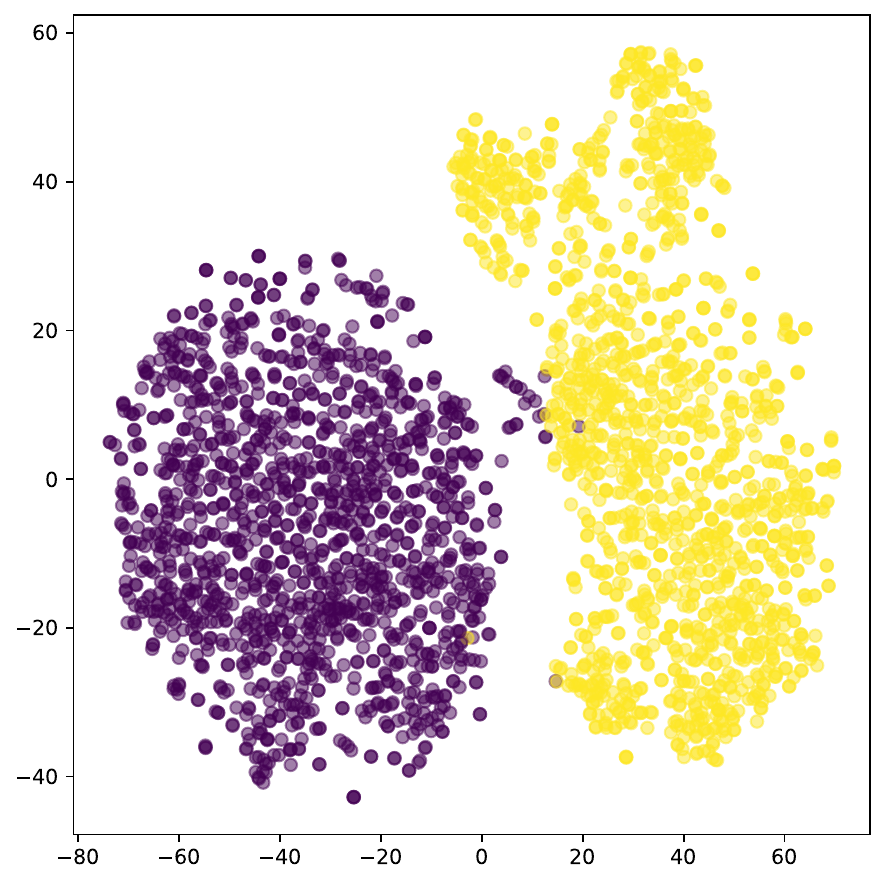}
        \caption{Setting Minscale = $0.2$}
    \end{subfigure}
    \begin{subfigure}{.325\textwidth}
        \centering
        \includegraphics[width=\linewidth]{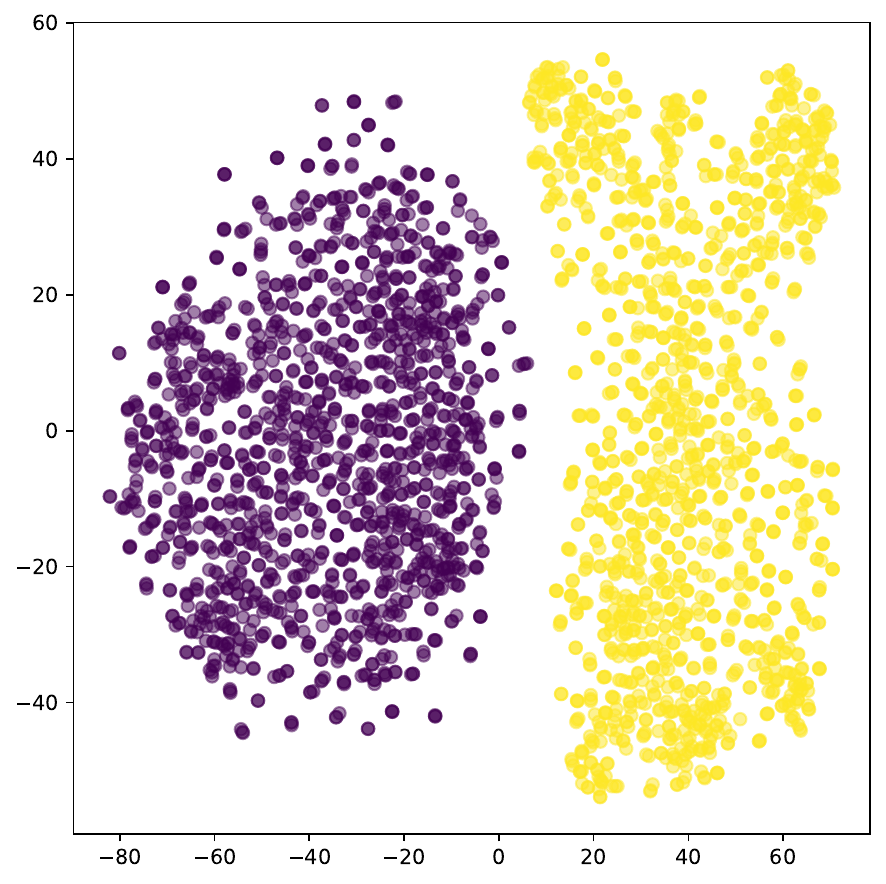}
        \caption{Setting Minscale = $0.5$}
    \end{subfigure}
    \begin{subfigure}{.325\textwidth}
        \centering
        \includegraphics[width=\linewidth]{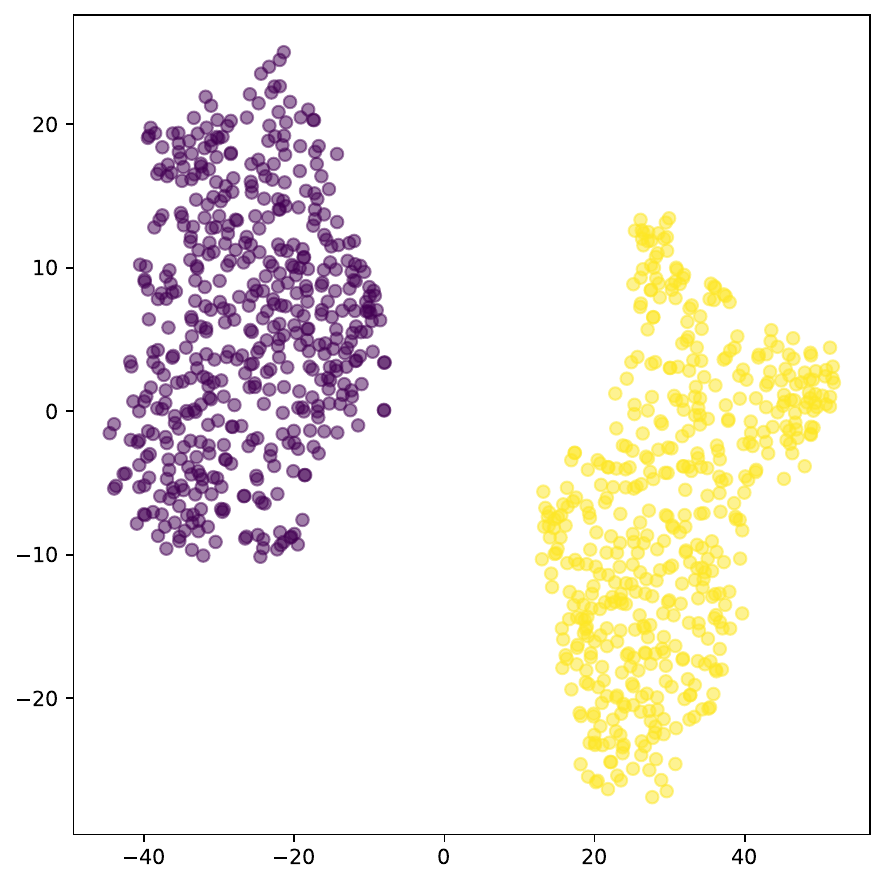}
        \caption{Without using Augmentation}
    \end{subfigure}
    \caption{
        \textbf{Visualization of samples with varying levels of data augmentation intensity}.
        We visualize the samples of 2 classes from CIFAR-10 in 2-dimensional space, respectively setting minscale in \texttt{RandomResizeCrop} as $\{0.02,0.04,0.08,0.2,0.5\}$ and without data augmentation.
        \label{fig:aug_cifar10_visual}
    }
\end{figure}

\subsection{Empirical Analysis for Real-world Datasets}
Beyond the theoretical analysis for a simple case, we also provide empirical analysis for four real-world datasets under different intensities of data augmentation.
We utilize TSNE~\cite{van2008visualizing} to visualize the (augmented) samples in Figure~\ref{fig:aug_cifar10_visual}.
All the results demonstrate that:
\begin{enumerate}[leftmargin=12pt]
    \item as minscale increases from 0.02 to 0.5, the data points gradually change from being tightly clustered to more dispersed. This indicates that higher data augmentation intensity expands the range of sample distribution;
    \item when minscale=0.02, the data points of the two classes exhibit significant overlap, and the boundary becomes blurred. In contrast, larger minscale values such as 0.2 and 0.5 allow for better separation between classes;
    \item when data augmentation is not used (the last figure), there is a clear gap between the two classes, and the data points within each class are very compact. This suggests that the feature distribution of the original samples is relatively concentrated.
\end{enumerate}

\end{document}